\documentclass{article}

\usepackage{microtype}
\usepackage{graphicx}
\usepackage{subcaption}
\usepackage{booktabs} 
\usepackage{multirow}
\usepackage[table]{xcolor}
\usepackage{pifont}

\usepackage{hyperref}

\definecolor{posgreen}{RGB}{0, 128, 0}  
\definecolor{negred}{RGB}{255, 0, 0}    
\definecolor{bgorange}{RGB}{255, 237, 206} 
\definecolor{bggray}{RGB}{217, 217, 217}   
\newcommand{\ca}{\cellcolor{bgorange}}
\newcommand{\pos}[1]{\ensuremath{_{\textcolor{posgreen}{+#1}}}}
\newcommand{\negval}[1]{\ensuremath{_{\textcolor{negred}{-#1}}}}
\newcommand{\zeroval}[1]{\ensuremath{_{\textcolor{gray}{\pm#1}}}}
\newcommand{\poscolor}[1]{{\textcolor{posgreen}{#1}}}
\newcommand{\negcolor}[1]{{\textcolor{negred}{#1}}}
\newcommand{\zerocolor}[1]{{\textcolor{gray}{#1}}}

\usepackage[accepted]{icml2026}

\usepackage{amsmath}
\usepackage{amssymb}
\usepackage{mathtools}
\usepackage{amsthm}
\usepackage{bbm}
\usepackage{pifont}
\usepackage{enumitem}

\def\method{}

\usepackage[capitalize,noabbrev]{cleveref}

\theoremstyle{plain}
\newtheorem{theorem}{Theorem}[section]

\newtheorem{lemma}[theorem]{Lemma}
\newtheorem{corollary}[theorem]{Corollary}
\theoremstyle{definition}

\theoremstyle{remark}

\usepackage[textsize=tiny]{todonotes}

\icmltitlerunning{Von Mises-Fisher Mixture Model with Dynamic Shrinkage for Realistic Test-Time Transduction}
\def\method{\texttt{MOON}}

\begin{document}

\twocolumn[
  \icmltitle{Von Mises-Fisher Mixture Model with Dynamic Shrinkage for Realistic Test-Time Transduction}



  \icmlsetsymbol{equal}{*}

  \begin{icmlauthorlist}
    \icmlauthor{Jiazhen Huang}{thu}
    \icmlauthor{Zhiming Liu}{thu}
    \icmlauthor{Changhu Wang}{ucla}
    \icmlauthor{Wei Ju}{pku}
    \icmlauthor{Ziyue Qiao}{gbu}
    \icmlauthor{Xiao Luo}{wisc}
  \end{icmlauthorlist}

  \icmlaffiliation{thu}{Tsinghua University, China}
  \icmlaffiliation{pku}{Peking University, China}
  \icmlaffiliation{ucla}{Fred Hutchinson Cancer Center, USA}
  \icmlaffiliation{gbu}{Great Bay University, China}
  \icmlaffiliation{wisc}{University of Wisconsin-Madison, USA}

  \icmlcorrespondingauthor{Wei Ju}{juwei@pku.edu.cn}
  \icmlcorrespondingauthor{Ziyue Qiao}{ziyuejoe@gmail.com}

  \icmlkeywords{Machine Learning, ICML}

  \vskip 0.3in
]



\printAffiliationsAndNotice{}  

\begin{abstract}
A range of methods aim to enhance the performance of vision-language models (VLMs) at test time. 
Among them, transduction has emerged as a promising paradigm due to its strong compatibility and efficiency. 
However, realistic evaluations often involve highly imbalanced class distributions, which cause performance degradation or even collapse. 
In this work, we systematically revisit transduction from the perspective of penalized likelihood estimation (PLE), showing that PLE with a KL-divergence anchor term naturally yields an adaptive shrinkage behavior between prior anchors and empirical estimates. 
From this viewpoint, the brittleness of transductive methods can be attributed to the absence of anchoring mechanism and static modeling of the shrinkage strength. 
Therefore, we propose \underline{M}ixture of V\underline{o}n Mises-Fisher M\underline{o}dels with Dy\underline{n}amic Shrinkage (\method{}). 
\method{} is built upon a mixture of von Mises-Fisher distributions to model feature representations on the unit hypersphere. 
To handle imbalance, \method{} dynamically adjusts the shrinkage strength using zero-shot priors at both instance and class levels. 
Thus, it suppresses unreliable assignments and prevents harmful updates from outlier classes, thereby mitigating negative transfer. 
\method{} is model-agnostic, training-free, and requires no task-specific hyperparameter tuning. 
Extensive experiments further validate the advantage of \method{} in both performance and efficiency. 
\end{abstract}

\section{Introduction}
\label{sec:intro}
\begin{figure*}
    \centering
    \begin{subfigure}[b]{0.53\linewidth}
        \centering
        \includegraphics[width=\textwidth]{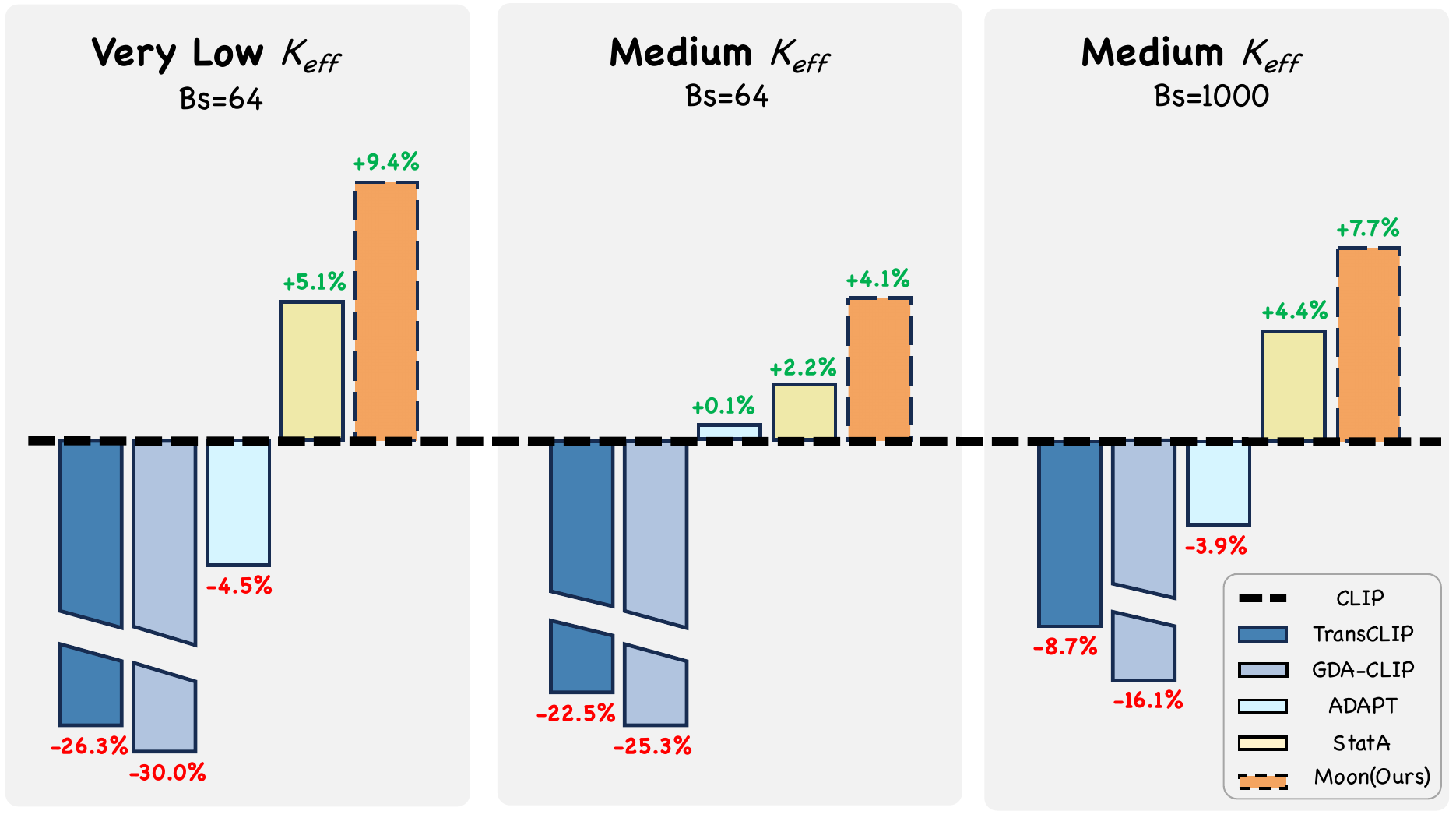}
        \caption{Batch adaptation with a limited number of effective classes.}
        \label{fig:intro_batch}
    \end{subfigure}
    \begin{subfigure}[b]{0.37\linewidth}
        \centering
        \includegraphics[width=\textwidth]{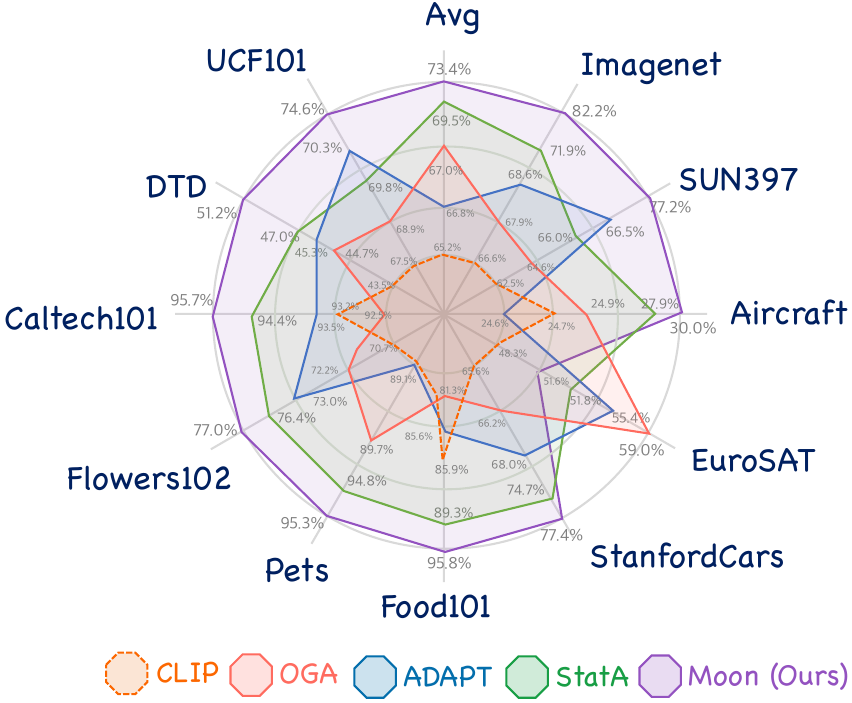}
        \caption{Online adaptation under non-i.i.d.\ data streams.}
        \label{fig:intro_online}
    \end{subfigure}
    \caption{\textbf{Performance comparison on two \textit{realistic} settings.} Existing transductive or online TTA methods suffer from performance degradation or even collapse, while our proposed \method{} consistently enhances VLM prediction and outperforms state-of-the-art baselines.}
    \label{fig:intro}
    \vspace{-0.5cm}
\end{figure*}

Vision-language models (VLMs) have achieved remarkable success in the computer vision community. 
By pre-training on massive image-text datasets, these models have learned strong multimodal representation capabilities that align visual and textual concepts in a shared latent space.
Therefore, VLMs generalize effectively to a wide range of downstream tasks and domains, from low-level image classification \cite{classification1} to high-level visual question answering \cite{llava}.
This process requires few or even no training data. 
For example, CLIP \cite{clip} trains dual encoders through large-scale image-text contrastive learning. 
During inference, zero-shot predictions can be simply achieved by computing image-text embedding similarities. 

Motivated by this, many methods aim to enhance the zero-shot predictive performance of VLMs at test time, where only unlabeled test samples are given. 
Among them, the most popular \emph{online test-time adaptation} (TTA) \cite{vlmtta_survey, huang2026drives, chen2026test} methods adjust model behavior on-the-fly using current incoming data streams.
In parallel, a closely related line of research, often referred to as \emph{transduction} or \emph{transductive learning} \cite{fewshottrans1}, focuses on exploiting the structure of available unlabeled data to perform joint inference over all test samples within a task\footnote{This notion is commonly contrasted with \emph{inductive learning}, where predictions are independently made for each sample.}.  
Specifically, transductive methods are typically achieved by performing soft probabilistic clustering at the embedding or logit level. 
Due to its ``black-box'' nature, which requires neither access to model internals nor expensive gradient backpropagation, transduction exhibits strong model-agnostic compatibility and high efficiency. 
Therefore, it has emerged as a particularly promising paradigm for realistic deployment.

However, in realistic test-time scenarios, only small batches of unlabeled samples are available, where the underlying class distributions are often highly imbalanced. 
For instance, data streams may exhibit strong temporal correlations, or only a few classes may appear within a mini-batch. 
Nevertheless, most existing methods and benchmarks assume that class marginals are fixed and uniform. 
As discussed in prior studies \cite{fewshottrans2}, this often leads to degenerate predictions and severely limits their applicability. 
Our evaluations in Fig. \ref{fig:intro} further demonstrate this brittleness that under realistic settings, many transductive and online TTA methods suffer from performance degradation or even collapse. 
Statistically, the failures of transductive methods may arise as they implicitly treat prediction as a latent-variable estimation process. 
Under class imbalance, majority classes dominate the statistics, while estimates derived from limited, biased observations of minority classes become unreliable and increasingly deviate over iterations\footnote{This also applies to online TTA methods: although predictions are made sequentially, memory banks or distributional anchors are still influenced by previous statistics of the data stream.}.  
This ultimately leads to negative transfer \cite{nt_survey}. 

To alleviate this issue, recent studies introduce explicit regularizations to penalize excessive deviation of empirical estimates, among which KL-divergence-based anchor terms have proven effective \cite{stata}. 
Based on these observations, transduction can be systematically revisited under a unified penalized likelihood estimation (PLE) formulation, where prior knowledge is incorporated as a statistical anchor. 
Specifically, we theoretically demonstrate that KL-anchored estimator naturally yields an adaptive shrinkage behavior, where class statistics are updated as a convex combination between prior anchors and empirical estimates. 
Hence, we attribute the brittleness of existing transductive methods to the following limitations: 
First, methods without anchoring mechanisms tend to overfit local statistics, which rapidly amplifies noisy assignments and leads to catastrophic collapse; 
Second, even anchor-based methods typically rely on a static modeling of shrinkage strength that implicitly assumes the reliability of statistics remains constant across samples and classes. 
As illustrated in Fig.~\ref{fig:hyper-emp}(a), this design remains far from optimal in both accuracy and robustness, and inevitably requires task-specific hyperparameter tuning, which is impractical in practice. 

Therefore, we propose \method{}, a simple yet effective method for realistic test-time transduction. 
\method{} follows the KL-anchored PLE objective and models feature representations using a mixture of von Mises-Fisher (vMF) distributions on the unit hypersphere. 
To robustly handle class imbalance, \method{} dynamically adjusts the shrinkage strength using zero-shot priors at both instance and class levels. 
At the instance level, unreliable assignments are suppressed based on entropy, promoting a more
robust label coverage. 
At the class level, harmful updates from outlier classes are identified and prevented, thus mitigating negative transfer. 
As a result, \method{} enables a fine-grained, fully data-driven shrinkage, and can be seamlessly plugged into existing VLMs for enhancement. 
Our contribution can be concluded as follows:




\begin{figure}
    \centering
    \includegraphics[width=1.0\linewidth]{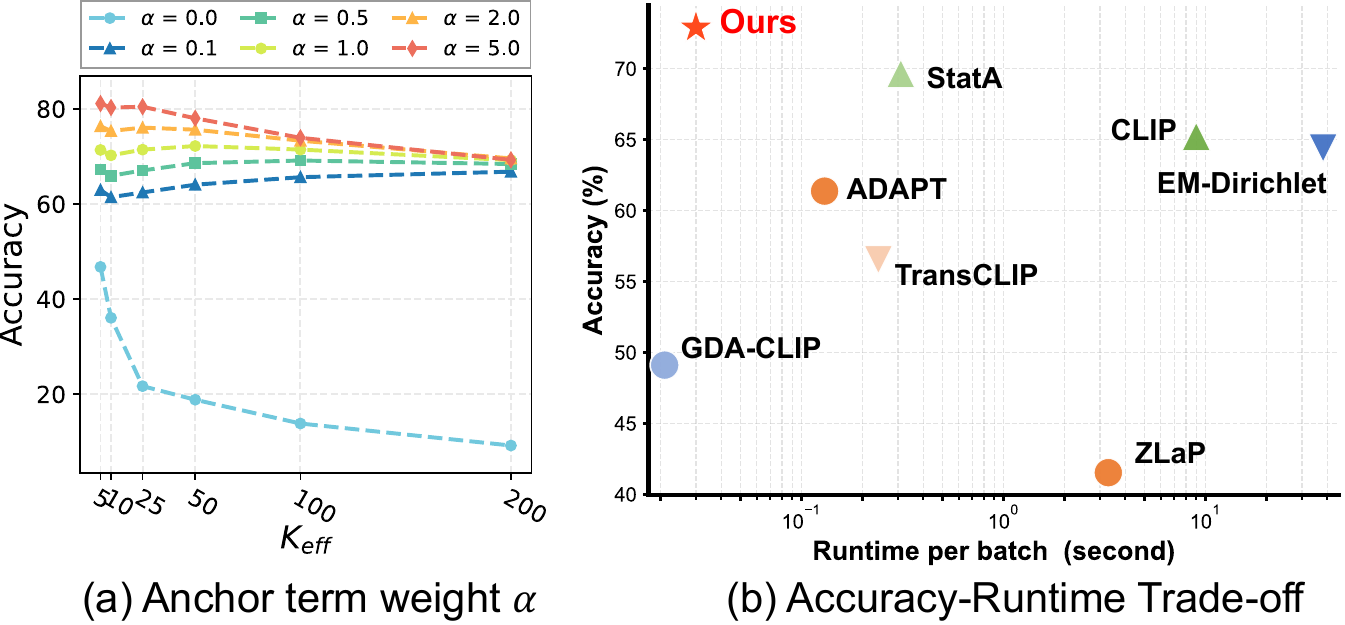}
    \caption{\textbf{(a) Controlling shrinkage strength with anchor weight $\alpha$ of state-of-the-art method StatA.} Such static modeling is suboptimal in accuracy and robustness. 
    \textbf{(b) Accuracy-Runtime Tradeoff.} \method{} enables effective and efficient adaptation.
    }

    \label{fig:hyper-emp}
    \vspace{-0.6cm}
\end{figure}

\begin{itemize}[leftmargin=*] \vspace{-12pt} \setlength{\itemsep}{1pt} \setlength{\parsep}{-2pt} \setlength{\parskip}{0pt}
        \item[\ding{182}] \textit{\textbf{New Perspective with Theoretical Support.} }
    We systematically analyze the brittleness of existing methods under realistic class imbalance. By revisiting transduction from the perspective of penalized likelihood estimation (PLE), we theoretically prove that such estimators inherently exhibit adaptive shrinkage, which allows us to identify two limitations: absence of anchoring mechanism and static shrinkage strength modeling.

    \item[\ding{183}] \textit{\textbf{Novel Methodology.} }
    We propose \method{}, which is based on a mixture of von Mises-Fisher (vMF) distributions. It dynamically adjusts the shrinkage strength using zero-shot priors, which effectively suppresses unreliable assignments and prevents harmful updates from outlier classes, mitigating negative transfer.

    \item[\ding{184}] \textit{\textbf{Empirical Validation.}} 
    We conduct extensive experiments across 11 datasets under two realistic settings, demonstrating that \method{} is: \textbf{(i) Effective}: improving zero-shot CLIP by 13.2\% across 10 scenarios on ImageNet, outperforming the strongest baseline by 8.8\%; \textbf{(ii) Efficient:} Being training-free, processing thousands of samples within tens of milliseconds, which is merely 3.3\% of CLIP's inference latency; \textbf{(iii) Practical}: Operating under a model-agnostic black-box assumption and requiring no task-specific hyperparameter tuning, ensuring seamless deployment.
\end{itemize}



\section{Related Work}

\paragraph{Enhancing VLMs at Test-Time.}
A growing body of work focuses on adapting VLMs at test time to enhance their performance on downstream tasks, where only unlabeled target samples are available. 
Among them, the most popular test-time adaptation (TTA) methods can be broadly divided into two categories. 
The first focuses on updating model parameters online via lightweight fine-tuning, such as prompt tuning \cite{tpt, difftpt} or adapters \cite{promptalign}. 
This often requires expensive data augmentation and gradient backpropagation. 
The second avoids training and instead directly adjusts model outputs by maintaining caches \cite{boostadapter}, memories \cite{dmn}, or distribution modeling \cite{dota} over historical data streams. 
Our work primarily relates to this category and refers to these methods as \emph{online TTA}. 
In parallel, the concept of \emph{transduction} was originally explored in few-shot learning \cite{fewshottrans3}, where it aims to exploit the structure of available data to perform joint inference over test samples. 
When extended to zero-shot learning, this paradigm can be viewed as a subclass within a broader TTA framework. 
Recent works like TransCLIP \cite{transclip} and ZLaP \cite{zlap} have investigated transduction for VLMs, while StatA \cite{stata} further discusses it under the test-time class imbalance. 
Despite making progress, several limitations still bottleneck their performance.
This motivates our \method{}. 

\paragraph{Imbalanced Learning in Realistic Scenarios.}
Most existing TTA methods and benchmarks assume perfectly class-balanced tasks at inference, i.e., the marginal class probabilities are treated as uniform. 
In contrast, realistic deployment scenarios often exhibit highly imbalanced class distributions, such as sparse, long-tailed, or non-i.i.d.\ batches \cite{classimabalance1}. 
In such contexts, the inductive biases of standard TTA methods can be harmful. 
First, limited and biased statistics might lead models to overfit to locally dominant distributions, resulting in negative transfer. 
Second, commonly adopted tricks like marginal entropy minimization \cite{Tent} also become counter-productive as they force the model to align with a mismatched uniform prior. 
Consequently, both transductive and online TTA methods may suffer from performance degradation or even collapse, as also empirically demonstrated in prior works \cite{ttab, fewshottrans2}. 
Recent solutions either mitigate sampling bias with memories \cite{note} or introduce statistical regularization to stabilize estimates \cite{stata}. 
Our \method{} aligns with the latter by adopting a KL-anchored PLE framework with dynamic shrinkage.

\section{Revisiting Test-Time Transduction}
\label{sec:emp}

\paragraph{Problem definition.} 
Consider a batch of $N$ test samples $\{\mathbf{x}_i\}_{i=1}^N$, with the label space consisting of $K$ candidate classes. 
Let $\theta_v(\cdot)$ and $\theta_t(\cdot)$ denote the visual and textual encoders of a pre-trained VLM, respectively. 
For each class $k$, we obtain its textual embedding $\mathbf{t}_k = \theta_t(\mathbf{c}_k) \in \mathbb{R}^d$ with a prompt $\mathbf{c}_k$ (e.g., $\text{``a photo of a [classname]''}$). 
Similarly, the visual feature embedding is extracted as $\mathbf{f}_i = \theta_v(\mathbf{x}_i) \in \mathbb{R}^d$. 
After $\ell_2$-normalized onto the unit hypersphere $\mathbb{S}^{d-1}$, zero-shot predictions are computed via cosine similarity: 
\begin{equation}
\label{eq:zero-shot}
\small
    \hat{\mathbf{y}}_{i} = \{\hat{\mathbf{y}}_{i,k}\}_{k=1}^K \in \Delta_K, \quad \hat{\mathbf{y}}_{i,k} = \frac{\exp(\mathbf{f}_i^\top \mathbf{t}_k/\tau)}{\sum_j \exp(\mathbf{f}_i^\top \mathbf{t}_j /\tau)},
\vspace{-0.1cm}
\end{equation}
where $\Delta$ denotes the probability simplex, and $\tau$ is a fixed temperature coefficient from pre-training. 

\paragraph{Realistic imbalanced settings.}
Following StatA \cite{stata}, we consider two realistic settings:
\textbf{(i) Batch adaptation:} each batch contains a limited number of effective classes $K_{\mathrm{eff}}$ ($1 \leq K_{\mathrm{eff}} \leq \min\{N, K\}$), where batches are processed independently. 
\textbf{(ii) Online adaptation:} test samples arrive as a non-i.i.d.\ data stream, whose temporal correlation is controlled by a Dirichlet parameter $\xi$. Here, historical batch information is accessible. 
For details on the sampling for the imbalanced data, please refer to App.~\ref{app:data_sampler}.

\paragraph{Penalized likelihood estimation.}
Recent transductive methods can be broadly conceptualized as a family of soft probabilistic clustering algorithms, which aim to infer latent class assignments and class-conditional distributions over the unlabeled test set. 
We revisit this process within a penalized likelihood estimation (PLE) formulation that jointly estimates the following variables:
\textbf{(i) Assignment vectors} $\mathbf{z}_{i} = \{z_{i,k}\}_{k=1}^K \in \Delta_K$, representing the latent posterior class probability within the probability simplex (initialized from $\hat{\mathbf{y}}_{i}$). 
\textbf{(ii) Mixture models} $\mathbf{M} = \{\mathbf{M}_k\}_{k=1}^K$, where each component $\mathbf{M}_k$ models the feature distribution of class $k$ with a set of statistical parameters (e.g., mean and covariance). 
The general optimization objective is given by: 
\begin{equation}
\label{eq:ple-general}
{\small
\begin{aligned}
\arg\min_{\mathbf{z},\mathbf{M}} \mathcal{L}_{\text{PLE}}&
= \arg\min_{\mathbf{z},\mathbf{M}}
\Big(
-\sum_{i=1}^{N} \mathbf{z}_i^\top \log \mathbf{p}_i
+ \mathcal{R}(\mathbf{z})
\Big)
+ \alpha \mathcal{R}(\mathbf{M})
.
\end{aligned}
}
\vspace{-0.2cm}
\end{equation}
Here, the first term represents the standard negative log-likelihood (NLL), with $\mathbf{p}_i$ denoting class-conditional likelihoods under $\mathbf{M}$. 
The terms $\mathcal{R}(\mathbf{z})$ and $\mathcal{R}(\mathbf{M})$ serve as penalization regularizers for assignments and distribution parameters, respectively. 
Specifically, $\mathcal{R}(\mathbf{z})$ is typically introduced to mitigate the inherent biases of unsupervised clustering, e.g., by encouraging smoothness or consistency with priors. 
$\mathcal{R}(\mathbf{M})$ is the key to prevent the model from overfitting to local statistics in realistic class-imbalanced scenarios, where $\alpha$ is a hyperparameter. 
This is achieved by penalizing the deviation between empirical estimate and the zero-shot prior anchor $\mathbf{M}^\prime$ via a KL divergence: $\mathcal{R}(\mathbf{M}) = \mathrm{KL}(\mathbf{M}^\prime \| \mathbf{M})$. 

\paragraph{Adaptive anchor shrinkage.} 
The effectiveness of the PLE formulation in realistic class imbalance largely hinges on the KL-based distribution anchor term $\mathcal{R}(\mathbf{M})$. 
We theoretically demonstrate that KL-anchored PLE naturally yields an \emph{adaptive shrinkage} behavior, where statistics are encouraged to update between empirical estimates and prior anchors. 
While recent works are implemented under a standard Gaussian assumption by default, this behavior naturally generalizes to the entire exponential family of distributions. 

Formally, consider a $K$-class latent-variable mixture model with soft assignments $\{z_{i,k}\}_{i=1}^N$ and class-conditional densities from a regular minimal exponential family, i.e., 
\begin{equation}
    p(x\mid\eta)=h(x)\exp\!\big(\eta^\top T(x)-A(\eta)\big),
\end{equation}
where $\eta$ is the natural parameter, $T(x)$ represents the sufficient statistic, $A(\eta)$ is the log-partition function\footnote{This is different from the Bessel function ratio $\mathcal{A}_d(\cdot)$ below.}, and $h(x)$ denotes the base measure. 
Let $n_k=\sum_i z_{i,k}$ and $S_k=\sum_i z_{i,k}T(x_i)$ denote the class-wise soft count and sufficient-statistic sum.
Given a fixed anchor distribution $q_k(x)=p(x\mid\eta_k')$ with mean parameter $\mu_k'=\nabla A(\eta_k')$, the KL-anchored PLE optimizes during parameter update:
\begin{equation}
\small
\min_{\{\eta_k\}}
-\sum_{i=1}^N\sum_{k=1}^K z_{i,k}\log p(x_i\mid\eta_k)
+
\alpha\sum_{k=1}^K \mathrm{KL}\!\big(q_k\,\Vert\,p(\cdot\mid\eta_k)\big),
\alpha>0.
\label{eq:expfam_mstep_main}
\end{equation}
Then the unique minimizer $\eta_k^\star$ satisfies the closed-form shrinkage in the \emph{mean-parameter space}:
\begin{equation}
\small
\nabla A(\eta_k^\star)
=
\frac{S_k+\alpha \mu_k'}{n_k+\alpha}
=
\beta_k\,\hat{\mu}_k+(1-\beta_k)\mu_k',
\label{eq:expfam_shrinkage_main}
\vspace{-0.2cm}
\end{equation}
where $\hat{\mu}_k = \frac{S_k}{n_k}$ is the empirical estimate when $n_k>0$, $\beta_k=\frac{n_k}{n_k+\alpha}\in[0,1]$ denotes the shrinkage strength. 
Eq.~\eqref{eq:expfam_shrinkage_main} reveals that the update is a \emph{data-driven convex combination} between the empirical estimate and the prior anchor, with $\beta_k$, controlled by soft count $n_k$ and anchor weight $\alpha$, adapting automatically to the amount of evidence available for each class: 
classes with abundant support approach standard maximum likelihood estimation (MLE), while rare classes remain strongly regularized by the anchor. 

\textit{\textbf{Limitation 1: no anchor.}}
Notably, in the absence of the anchor term ($\alpha=0$), we have $\beta_k\equiv1$, and Eq.~\eqref{eq:expfam_shrinkage_main} degenerates to standard MLE. 
Under realistic class imbalance, this causes estimation overfitting to locally biased statistics dominated by majority classes, which can rapidly amplify errors over iterations and lead to catastrophic collapse.

\textit{\textbf{Limitation 2: bounded but static shrinkage.}}
When $\alpha>0$, Eq.~\eqref{eq:expfam_shrinkage_main} further implies
$\|\nabla A(\eta_k^\star)-\mu_k'\|=\beta_k\|\hat{\mu}_k-\mu_k'\|$, 
linearly bounding the deviation by $\beta_k$.
In particular, if $n_k=0$ (outlier classes), the update stays \emph{exactly} at its anchor, i.e., $\nabla A(\eta_k^\star)=\mu_k'$, preventing any harmful deviation. 
However, we could find that these types of anchor-based methods still employ a \emph{static} anchor strength $\alpha$, implicitly assuming equal reliability of statistics across instances and classes. 
Therefore, their performance and robustness may remain suboptimal (as shown in Fig. \ref{fig:hyper-emp}(a)), which inspires our \method{}. 
Detailed proofs are provided in App.~\ref{app:expfam}. 
\section{Our Proposed \method{}}
Since Eq. \eqref{eq:expfam_shrinkage_main} applies to the entire exponential family, we propose to adopt a mixture of von Mises-Fisher (vMF) \cite{gopal2014mises,hasnat2017mises,govindarajan2024dino} distributions for modeling as normalized VLM embeddings are intrinsically constrained to the unit hypersphere. 
This distribution is commonly regarded as the natural generalization of Gaussian distribution onto the sphere \cite{em-dirichlet}. 
Formally, for a $d$-dimensional unit vector $\mathbf{f}_i \in \mathbb{S}^{d-1}$, the probability density function of a vMF component $\mathcal{V}_k(\boldsymbol{\mu}_k, \kappa_k)$ is defined as: 
\begin{equation}
\label{eq:likelihood}
\small
    p_{i,k}^{\text{vMF}} = p(\mathbf{f}_i; \boldsymbol{\mu}_k, \kappa_k) \propto \mathcal{C}_d(\kappa_k) \exp(\kappa_k \boldsymbol{\mu}_k^\top \mathbf{f}_i),
\end{equation}
where $\boldsymbol{\mu}_k \in \mathbb{S}^{d-1}$ denotes the mean direction vector, and $\kappa_k \ge 0$ is a scalar concentration parameter measuring the isotropic precision. 
$\mathcal{C}_d(\kappa_k) = \frac{\kappa_k^{d/2 - 1}}{(2\pi)^{d/2} I_{d/2 - 1}(\kappa_k)}$ is the normalization constant, derived from the order-$\nu$ modified Bessel function of the first kind $I_\nu(\cdot)$. 
On the basis of this, the KL divergence between $\mathcal{V}_k$ and anchor $\mathcal{V}^{\prime}_k$ takes: $
\mathrm{KL}\left(\mathcal{V}_k^{\prime}\|\mathcal{V}_k\right) 
= \log{\frac{\mathcal{C}_d(\kappa_k^\prime)}{\mathcal{C}_d(\kappa_k)}} + \kappa_k^\prime \mathcal{A}_d(\kappa_k^\prime) - \kappa_k \mathcal{A}_d(\kappa_k^\prime) \boldsymbol{\mu}_k^\top \boldsymbol{\mu}_k^\prime
$ (details in App. \ref{app:proof-kl}), where $A_d(\kappa) = \frac{I_{d/2}(\kappa)}{I_{d/2-1}(\kappa)}$ represents the Bessel function ratio. 
We initialize the anchor distribution $\mathcal{V}^{\prime}_k$ with zero-shot priors leveraged from text:
\begin{equation}
\label{eq:init}
\small
    \boldsymbol{\mu}^\prime_k = \mathbf{t}_k, \quad \mathcal{A}_d(\boldsymbol{\kappa}^\prime_k) = 1 - \frac{\sum_i \mathbf{z}_{i,k} \|\mathbf{f}_i - \boldsymbol{\mu}^\prime_k\|^2}{2\sum_i \mathbf{z}_{i,k}}. \\
\vspace{-0.1cm}
\end{equation}
The derivation of $\mathcal{A}_d(\boldsymbol{\kappa}^\prime_k)$ is provided in App. \ref{app:proof-init}. 
Building upon the above formulation, we arrive at the final objective: 
\begin{equation}
\small
\label{eq:ple-ours}
\begin{aligned}
     \mathcal{L}_{\text{PLE}}(\mathbf{z};\boldsymbol{\mu},\boldsymbol{\kappa}) 
    &= \boldsymbol{\gamma}\left(-\sum\nolimits_{i=1}^N\mathbf{z}_i^\top\log(\mathbf{p}_i^{\text{vMF}})
      + \mathcal{R}(\mathbf{z}) \right) \\
      &+ \boldsymbol{\alpha}\sum\nolimits_{k=1}^K\mathrm{KL}\left(\mathcal{V}_k^{\prime}\|\mathcal{V}_k\right), \\
\text{where} \quad
\mathcal{R}(\mathbf{z}) &= -\sum\nolimits_{i,j}\omega_{ij}\mathbf{z}_i^\top\mathbf{z}_j
+ \sum\nolimits_{i=1}^N\mathrm{KL}(\mathbf{z}_i\|\hat{\mathbf{y}}_i).
\end{aligned}
\end{equation}
For $\mathcal{R}(\mathbf{z})$, we choose a widely-adopted combination of a Laplacian regularizer and a text-supervision term \cite{transclip,stata}, where $\omega_{ij} = \mathbf{f}_i^\top \mathbf{f}_j$ denotes feature affinity. 
The former performs label propagation among nearby samples to encourage smooth assignments, while the latter penalizes deviations from zero-shot predictions. 
Moreover, we introduce two weights at instance and class level, $\boldsymbol{\gamma}$ and $\boldsymbol{\alpha}$, enabling dynamic adjustment of shrinkage strength. 
Different from Eq. \eqref{eq:ple-general}, both weights are driven by priors. 

\subsection{Dynamic Shrinkage for Realistic Class Imbalance}
\paragraph{Instance-level adjustment.}
The first two terms in Eq. \eqref{eq:ple-ours} actually form a standard MLE objective, which typically treats all test samples equally. 
However, in realistic scenarios, certain zero-shot predictions may be inherently noisy; estimation biases may also accumulate and propagate over iterations. 
A natural idea is to employ predictive entropy as a metric for reliability, as it is widely adopted in TTA for model optimization or memory updates \cite{Tent,tda}. 
Therefore, we introduce an entropy-based weight $\gamma_i \in [0,1]$ to dynamically re-weight the contribution of each sample to the MLE objective:
\begin{equation}
\label{eq:pl}
\vspace{-0.2cm}
\small
    \gamma_{i} = 1-\frac{H(\hat{\mathbf{y}}_{i})}{\log{K}},
\vspace{-0.1cm}
\end{equation}
where $H(\hat{\mathbf{y}}_i) = -\sum_{k=1}^{K} \hat{y}_{i,k} \log \hat{y}_{i,k}$ denotes entropy, and $\log K$ serves as the normalization factor. 
Through this mechanism, certain predictions are encouraged, while uncertain or ambiguous ones are suppressed. 
As explicitly shown in Eq. \eqref{eq:update-m-real}, it serves as a coefficient for assignments $\mathbf{z}_i$ and filters out unreliable samples during the parameter estimation of $\boldsymbol{\mu}_k$ and $\kappa_k$. 
In implementation, we update $\gamma_i$ with current assignments $\mathbf{z}_i$ for stability, i.e., $\gamma_{i} = 1-\frac{H(\mathbf{z}_{i})}{\log{K}}$.  

\paragraph{Class-level adjustment.}
Class imbalance inherently induces class sparsity, such as $K_{\mathrm{eff}} \ll K$ or $\xi \!\rightarrow\! 0$. 
More importantly, it's impossible to identify which classes are \emph{effective} (i.e., present) within batch, as labels are unavailable at test time. 
This makes transduction particularly vulnerable to negative transfer from \emph{outlier} (i.e., absent) classes. 
Although the distribution anchor in Eq. \eqref{eq:ple-ours} alleviates this issue by inducing an adaptive shrinkage behavior, it treats all classes equally and lacks dynamic, fine-grained shrinkage strength modeling. 
Moreover, it operates with a single scalar hyperparameter $\alpha$, which requires task-specific tuning.

To address this limitation, Partial Domain Adaptation (PDA) \cite{pada} has provided successful experiences that we can learn from. 
PDA studies settings in which the target label space forms a subset of the source label space. 
This also works for our test-time settings, as the effective class set of a given test batch is also a subset of VLM's predefined candidate class set. 
Consequently, the absence of classes can be interpreted as a form of source-target label space mismatch, which should be suppressed during adaptation. 
In PDA, such mismatch is typically quantified with zero-shot prediction confidence, since classes with higher confidence are more likely and frequent to be present in the target domain. 
Leveraging this insight, we replace the fixed scalar $\alpha$ with class-level dynamic weights $\boldsymbol{\alpha} = \{\alpha_k\}_{k=1}^{K}$. 

Intuitively, highly confident classes should encourage the parameters $\boldsymbol{\mu}_k$ and $\kappa_k$ to align more closely to empirical estimates, pushing the strength $\beta_k=\frac{n_k}{n_k+\alpha_k}$ towards 1.
Therefore, $\alpha_k$ should be negatively correlated with class confidence.
We define $\alpha_k$ to be inversely related to confidence, as this form is widely used in statistical learning to impose regularization on less reliable signals \cite{zou2006adaptive}: 
\begin{equation}
\vspace{-0.15cm}
\label{eq:as-alpha}
    \alpha_k = \frac{1}{\lambda_k},
\vspace{-0.1cm}
\end{equation}
where $\lambda_k$ denotes the $k$-th class confidence derived from zero-shot priors. 
PDA methods often directly estimate $\lambda_k$ from average confidence.
In test-time settings, however, such a design is insufficient, as the effective label set varies across batches. 
On the one hand, those rare but effective classes, occurring infrequently yet consistently in the data streams, might be confused with truly outlier classes and instead suppress positive transfer. 
On the other hand, we don't want to lose the generality under a distribution closer to uniform. 
For balance, $\lambda_k$ is defined as the geometric mean of the average and the maximum confidence: 
\begin{equation}
\label{eq:as-lambda}
\vspace{-0.2cm}
\small
    \lambda_k = \sqrt{ \frac{1}{N}\sum\nolimits_{i=1}^{N}\hat{\mathbf{y}}_{i,k} \odot \max_{i}\hat{\mathbf{y}}_{i,k}}. 
\end{equation}
This mildly sacrifices accuracy under severe class imbalance, but yields a more general solution across broader scenarios. 

\subsection{Optimization Algorithm}
Since the proposed PLE objective jointly involves the assignments $\mathbf{z}$ and mixture parameters $\{\mathcal{V}_k(\boldsymbol{\mu}_k, \kappa_k)\}_{k=1}^K$, we adopt an efficient optimization algorithm following recent works \cite{transclip}. 
The algorithm follows the Block Successive Minimization (BSUM) framework \cite{bsum}, which alternately updates two blocks of variables via iterative block-coordinate descent on surrogate objectives. 
This algorithm is also theoretically guaranteed to converge, as shown in App. \ref{app:procedure}. 
Given that both $\boldsymbol{\gamma}$ and $\boldsymbol{\alpha}$ are non-negative, all terms in Eq.~\eqref{eq:ple-ours} except the Laplacian regularizer are convex with respect to each block. 

\paragraph{Linear approximation w.r.t assignments $\mathbf{z}$.} 
Due to the presence of concave Laplacian regularizer $\sum_{i,j} \omega_{ij}\mathbf{z}_i^\top\mathbf{z}_j$, a closed-form update for $\mathbf{z}$ cannot be obtained directly. 
Therefore, we construct a linear upper bound by replacing this term with its first-order Taylor expansion at current iteration, i.e., $\small{-\sum_{i} \mathbf{z}_i^\top \left( \sum_{j} \omega_{ij}\mathbf{z}_j^{(t)} \right)}$, where $\mathbf{z}_j^{(t)}$ denotes the assignment obtained at iteration $t$.
By minimizing the constructed approximate surrogate objective, we have: 
\begin{equation}
\label{eq:update-z}
\small
    \mathbf{z}_{i}^{(t+1)} = \frac{\hat{\mathbf{y}}_{i} \odot \exp(\log \mathbf{p}_{i}^{\text{vMF}} + \sum_{j} \omega_{ij} \mathbf{z}_j^{(t)})}{(\hat{\mathbf{y}}_{i} \odot \exp(\log \mathbf{p}_{i}^{\text{vMF}} + \sum_{j} \omega_{ij} \mathbf{z}_j^{(t)}))^\top \mathbbm{1}_K}.
\end{equation}
Detailed derivations are provided in App. \ref{app:proof-e}. 
In implementation, we omit the inner-loop optimization required in previous works and perform a single pass per iteration for efficiency. 
Note that $\boldsymbol{\gamma}$ does not appear in Eq. \eqref{eq:update-z} as it could be canceled out during the derivation of $\mathbf{z}_i$. 

\paragraph{Closed-form update w.r.t parameters $\boldsymbol{\mu}$ and $\boldsymbol{{\kappa}}$.}
When fixing $\mathbf{z}$, Eq. \eqref{eq:ple-ours} becomes strictly convex with respect to the mixture parameters $\boldsymbol{\mu}$ and $\boldsymbol{\kappa}$. 
Therefore, we can derive closed-form updates by setting partial derivatives to zero: 
\begin{equation}
\label{eq:update-m}
\small
\begin{aligned}
\boldsymbol{\mu}_k = \frac{\sum_{i} \gamma_{i} \mathbf{z}_{i,k} \mathbf{f}_i + \alpha_k \mathcal{A}_d(\boldsymbol{\kappa}^\prime_k) \boldsymbol{\mu}^\prime_k}{\|\sum_{i} \gamma_{i} \mathbf{z}_{i,k} \mathbf{f}_i + \alpha_k \mathcal{A}_d(\boldsymbol{\kappa}^\prime_k) \boldsymbol{\mu}^\prime_k\|} , 
\\
\mathcal{A}_d(\kappa_k) = \frac{\|\sum_{i} \gamma_{i} \mathbf{z}_{i,k} \mathbf{f}_i + \alpha_k \mathcal{A}_d(\boldsymbol{\kappa}^\prime_k) \boldsymbol{\mu}^\prime_k\|}{\sum_{i} \gamma_{i} \mathbf{z}_{i,k} + \alpha_k}.
\end{aligned}
\vspace{-0.1cm}
\end{equation}
As shown in Sec. \ref{sec:emp}, we can rewrite the above updates in a more intuitive form. 
Under the mild assumption $\mathcal{A}_d(\kappa_k^\prime) \approx 1$, Eq. \eqref{eq:update-m} is equivalent to (proof in App. \ref{app:m-equal}): 
\begin{equation}
\label{eq:update-m-real}
\small
\begin{aligned}
    \boldsymbol{\mu}_k = \frac{\beta_k \boldsymbol{v}_k + (1 - \beta_k) \boldsymbol{\mu}^\prime_k}{\| \beta_k \boldsymbol{v}_k + (1 - \beta_k) \boldsymbol{\mu}^\prime_k \|}, \,
  \mathcal{A}_d(\kappa_k) = \| \beta_k \boldsymbol{v}_k + (1 - \beta_k) \boldsymbol{\mu}^\prime_k \|,
\end{aligned}
\end{equation}
where $\boldsymbol{v}_k=\frac{\sum_{i=1}^N \gamma_{i,k} \mathbf{z}_{i,k}\mathbf{f}_i}{\sum_{i=1}^N \gamma_{i,k} \mathbf{z}_{i,k}}$ and $\beta_k=\frac{\sum_{i=1}^N \gamma_{i,k} \mathbf{z}_{i,k}}{\sum_{i=1}^N \gamma_{i,k} \mathbf{z}_{i,k}+\alpha_k}$. 
This offers an intuitive interpretation of the anchor shrinkage as in Eq. \eqref{eq:expfam_shrinkage_main}. 
Here, $\boldsymbol{v}_k$ represents empirical estimates from standard MLE, and $\boldsymbol{\mu}^\prime_k$ serves as prior anchor. 
Our proposed adjustments are seamlessly integrated here: 
$n_k = \sum_i \gamma_i \mathbf{z}_{i,k}$ replaces soft count $\sum_i \mathbf{z}_{i,k}$, ensuring that noisy predictions are suppressed. 
$\alpha_k$ further penalizes harmful deviations: 
for effective classes, $\alpha_k \rightarrow 1$ while $n_k$ increases, allowing the model to learn more from data; for outlier classes, $\alpha_k$ dominates $\beta_k$, forcing the updates to shrink towards the anchor, thereby mitigating negative transfer. 

\paragraph{Overall procedure.}
\label{sec:algo-overall}
The overall procedure of \method{} is summarized in App. \ref{app:procedure}. 
The initializations and updates mentioned above directly yield the Bessel function ratio $\mathcal{A}_d(\kappa_k)$, which corresponds to the mean resultant length of the vMF distribution $\bar r_k$ . 
We then employ the well-known approximation \cite{banerjee} to estimate $\kappa_k$:
\begin{equation}
\label{eq:banerjee}
     \kappa_k
\approx
\frac{d\,\bar r_k-\bar r_k^3}{1-\bar r_k^2},
\qquad
\bar r_k \triangleq \mathcal{A}_d(\kappa_k).
\vspace{-0.2cm}
\end{equation}
Note that the parameter estimation of vMF mixtures is simpler and more computationally efficient than GMMs, as it uses fewer parameters and avoids the expensive quadratic forms and inversions of $\mathcal{R}^{d \times d}$ covariance matrices.

\section{Experiments}
\label{sec:exp}
We evaluate our method in several scenarios under two realistic settings, as defined in Sec. \ref{sec:emp}.
We report the Top-1 accuracy across 11 public fine-grained classification datasets, and adopt CLIP ViT-B/16 as our default VLM backbone. 
Please see App. \ref{app:exp-detail} for details on datasets, baselines, prompt templates, and other experimental specifics. 
\begin{table*}[t]
\centering
\caption{\textbf{Main results for batch adaptation}, averaged over 1,000 runs. The best and second-best results are marked in \textbf{bold} and \underline{underlined}, respectively. We report three scenarios for each batch size 64 and 1,000, with varying range of effective classes $K_\mathrm{eff}$. Subscript \poscolor{green} indicates improvement, \negcolor{red} indicates decline, and \zerocolor{gray} indicates no change compared with zero-shot performance.}
\label{tab:main-batch}

\small{(a) Setting where the batch size is 64: Very Low (1–4 $K_\mathrm{eff}$), Low (2–10), and Medium (5–25).} \par 
\resizebox{0.88\textwidth}{!}
{
\begin{tabular}{cl|ccccccccccc|c}
\toprule
$K_{\text{eff}}$ & Method & \rotatebox{45}{ImageNet} & \rotatebox{45}{SUN397} & \rotatebox{45}{Aircraft} & \rotatebox{45}{EuroSAT} & \rotatebox{45}{StanfordCars} & \rotatebox{45}{Food101} & \rotatebox{45}{Pets} & \rotatebox{45}{Flowers102} & \rotatebox{45}{Caltech101} & \rotatebox{45}{DTD} & \rotatebox{45}{UCF101} & \ca \textbf{Avg.} \\
\midrule

 & CLIP & 66.6 & 62.5 & 24.7 & 48.3 & 65.6 & 85.9 & 89.1 & 70.7 & 93.2 & 43.5 & 67.5 & \ca 65.2 \\
 & MTA & 69.3\pos{2.7} & 64.8\pos{2.3} & 27.4\pos{2.7} & 46.9\negval{1.4} & 68.0\pos{2.4} & 87.2\pos{1.3} & 89.4\pos{0.3} & 71.7\pos{1.0} & 94.0\pos{0.8} & 44.4\pos{0.9} & 69.0\pos{1.5} & \ca 66.6\pos{1.3} \\
\midrule

\multirow{7}{*}{\shortstack{Very Low\\(1--4)}} 
 & Dirichlet & 79.2\pos{12.6} & 75.7\pos{13.2} & 28.2\pos{3.5} & 47.2\negval{1.1} & 68.2\pos{2.6} & 88.1\pos{2.2} & 87.5\negval{1.6} & 71.2\pos{0.5} & 88.8\negval{4.4} & 50.3\pos{6.8} & 69.0\pos{1.5} & \ca 68.5\pos{3.3} \\
 
 & ZLaP & 14.5\negval{52.1} & 13.0\negval{49.5} & 8.4\negval{16.3} & 36.6\negval{11.7} & 23.7\negval{41.9} & 31.9\negval{54.0} & 57.0\negval{32.1} & 22.4\negval{48.3} & 52.4\negval{40.8} & 13.0\negval{30.5} & 29.2\negval{38.3} & \ca 27.5\negval{37.8} \\

 & GDA-CLIP & 20.5\negval{46.1} & 19.8\negval{42.7} & 10.3\negval{14.4} & 39.9\negval{8.4} & 28.2\negval{37.4} & 51.4\negval{34.5} & 60.4\negval{28.7} & 29.6\negval{41.1} & 52.9\negval{40.3} & 39.9\negval{3.6} & 35.2\negval{32.3} & \ca 35.3\negval{30.0} \\
 
 & TransCLIP & 21.6\negval{45.0} & 21.1\negval{41.4} & 11.6\negval{13.1} & 45.1\negval{3.2} & 34.7\negval{30.9} & 59.2\negval{26.7} & 72.4\negval{16.7} & 36.4\negval{34.3} & 62.3\negval{30.9} & 26.1\negval{17.4} & 37.7\negval{29.8} & \ca 38.9\negval{26.3} \\
  
 & ADAPT & 60.8\negval{5.8} & 56.0\negval{6.5} & 21.4\negval{3.3} & 45.9\negval{2.4} & 59.7\negval{5.9} & 81.4\negval{4.5} & 84.7\negval{4.4} & 66.8\negval{3.9} & 90.4\negval{2.8} & 39.2\negval{4.3} & 61.4\negval{6.1} & \ca 60.7\negval{4.5} \\
   
 & StatA & 72.9\pos{6.3} & 66.0\pos{3.5} & 29.3\pos{4.6} & 56.8\pos{8.5} & 76.2\pos{10.6} & 90.3\pos{4.4} & 95.5\pos{6.4} & 77.6\pos{6.9} & 93.0\negval{0.2} & 46.1\pos{2.6} & 70.2\pos{2.7} & \ca \underline{70.4}\pos{5.1} \\

 \rowcolor{bggray} & \method{} & 82.8\pos{16.2} & 77.2\pos{14.7} & 32.0\pos{7.3} & 53.7\pos{5.4} & 78.6\pos{13.0} & 96.4\pos{10.5} & 96.0\pos{6.9} & 77.7\pos{7.0} & 94.9\pos{1.7} & 55.6\pos{12.1} & 75.7\pos{8.2} & \ca \textbf{74.6}\pos{9.4} \\
\midrule

\multirow{7}{*}{\shortstack{Low\\(2--10)}} 
 & Dirichlet & 80.1\pos{13.5} & 78.0\pos{15.5} & 28.1\pos{3.4} & 43.5\negval{4.8} & 71.5\pos{5.9} & 92.3\pos{6.4} & 92.7\pos{3.6} & 74.7\pos{4.0} & 93.0\negval{0.2} & 48.9\pos{5.4} & 70.9\pos{3.4} & \ca \underline{70.3}\pos{5.1} \\
 
 & ZLaP & 19.1\negval{47.5} & 19.0\negval{43.5} & 12.0\negval{12.7} & 46.4\negval{1.9} & 27.9\negval{37.7} & 43.5\negval{42.4} & 66.6\negval{22.5} & 31.3\negval{39.4} & 60.8\negval{32.4} & 22.4\negval{21.1} & 38.7\negval{28.8} & \ca 35.2\negval{30.0} \\

 & GDA-CLIP & 18.6\negval{48.0} & 19.4\negval{43.1} & 12.6\negval{12.1} & 50.3\pos{2.0} & 25.2\negval{40.4} & 49.7\negval{36.2} & 60.4\negval{28.7} & 33.1\negval{37.6} & 55.2\negval{38.0} & 28.4\negval{15.1} & 37.2\negval{30.3} & \ca 35.5\negval{29.8} \\
 
 & TransCLIP & 20.3\negval{46.3} & 22.4\negval{40.1} & 14.3\negval{10.4} & 53.9\pos{5.6} & 30.8\negval{34.8} & 55.6\negval{30.3} & 69.4\negval{19.7} & 40.9\negval{29.8} & 64.6\negval{28.6} & 31.6\negval{11.9} & 40.9\negval{26.6} & \ca 40.4\negval{24.8} \\
  
 & ADAPT & 65.3\negval{1.3} & 60.7\negval{1.8} & 23.5\negval{1.2} & 51.5\pos{3.2} & 62.3\negval{3.3} & 84.5\negval{1.4} & 87.5\negval{1.6} & 68.8\negval{1.9} & 91.0\negval{2.2} & 41.8\negval{1.7} & 63.8\negval{3.7} & \ca 63.7\negval{1.5} \\

 & StatA & 72.8\pos{6.2} & 66.9\pos{4.4} & 27.7\pos{3.0} & 51.3\pos{3.0} & 73.5\pos{7.9} & 89.5\pos{3.6} & 93.7\pos{4.6} & 76.6\pos{5.9} & 93.6\pos{0.4} & 46.9\pos{3.4} & 69.6\pos{2.1} & \ca 69.3\pos{4.1} \\

 \rowcolor{bggray} & \method{} & 83.8\pos{17.2} & 78.0\pos{15.5} & 29.8\pos{5.1} & 48.3\zeroval{0.0} & 76.7\pos{11.1} & 95.5\pos{9.6} & 94.6\pos{5.5} & 77.3\pos{6.6} & 95.3\pos{2.1} & 50.9\pos{7.4} & 74.2\pos{6.7} & \ca \textbf{73.1}\pos{7.9} \\
 
\midrule

\multirow{7}{*}{\shortstack{Medium\\(5--25)}} 
 & Dirichlet & 77.7\pos{11.1} & 72.9\pos{10.4} & 26.1\pos{1.4} & 38.6\negval{9.7} & 71.6\pos{6.0} & 90.8\pos{4.9} & 88.4\negval{0.7} & 71.5\pos{0.8} & 93.7\pos{0.5} & 42.9\negval{0.6} & 67.8\pos{0.3} & \ca \underline{67.5}\pos{2.2} \\
 
 & ZLaP & 29.0\negval{37.6} & 27.9\negval{34.6} & 16.5\negval{8.2} & 49.0\pos{0.7} & 36.0\negval{29.6} & 59.1\negval{26.8} & 76.4\negval{12.7} & 42.9\negval{27.8} & 72.0\negval{21.2} & 32.0\negval{11.5} & 50.3\negval{17.2} & \ca 44.7\negval{20.6} \\

 & GDA-CLIP & 19.2\negval{47.4} & 21.4\negval{41.1} & 15.8\negval{8.9} & 56.2\pos{7.9} & 27.4\negval{38.2} & 52.9\negval{33.0} & 68.7\negval{20.4} & 40.1\negval{30.6} & 59.2\negval{34.0} & 35.2\negval{8.3} & 43.7\negval{23.8} & \ca 40.0\negval{25.3} \\
 
 & TransCLIP & 15.5\negval{51.1} & 22.8\negval{39.7} & 17.0\negval{7.7} & 58.2\pos{9.9} & 32.9\negval{32.7} & 56.3\negval{29.6} & 72.6\negval{16.5} & 45.0\negval{25.7} & 65.6\negval{27.6} & 37.5\negval{6.0} & 46.5\negval{21.0} & \ca 42.7\negval{22.5} \\
  
 & ADAPT & 66.8\pos{0.2} & 61.7\negval{0.8} & 25.0\pos{0.3} & 52.8\pos{4.5} & 65.4\negval{0.2} & 85.9\zeroval{0.0} & 88.7\negval{0.4} & 69.7\negval{1.0} & 92.2\negval{1.0} & 43.5\zeroval{0.0} & 66.7\negval{0.8} & \ca 65.3\pos{0.1} \\
 
 & StatA & 70.7\pos{4.1} & 65.3\pos{2.8} & 26.0\pos{1.3} & 45.0\negval{3.3} & 71.1\pos{5.5} & 88.2\pos{2.3} & 90.8\pos{1.7} & 73.7\pos{3.0} & 93.9\pos{0.7} & 47.5\pos{4.0} & 69.1\pos{1.6} & \ca 67.4\pos{2.2} \\

 \rowcolor{bggray} & \method{} & 79.5\pos{12.9} & 72.6\pos{10.1} & 26.0\pos{1.3} & 42.9\negval{5.4} & 73.6\pos{8.0} & 92.5\pos{6.6} & 90.7\pos{1.6} & 74.4\pos{3.7} & 94.6\pos{1.4} & 44.7\pos{1.2} & 71.5\pos{4.0} & \ca \textbf{69.4}\pos{4.1} \\
\bottomrule
\end{tabular}
}
\par \medskip

\small{(b) Setting where the batch size is 1,000: Medium (5–25 $K_\mathrm{eff}$), High (25–50), and Very High (50-100)}. \par
\resizebox{0.88\textwidth}{!}
{
\begin{tabular}{cl|ccccccccccc|c}
\toprule
$K_{\text{eff}}$ & Method
& \rotatebox{45}{ImageNet} & \rotatebox{45}{SUN397} & \rotatebox{45}{Aircraft} & \rotatebox{45}{EuroSAT}
& \rotatebox{45}{StanfordCars} & \rotatebox{45}{Food101} & \rotatebox{45}{Pets} & \rotatebox{45}{Flowers102}
& \rotatebox{45}{Caltech101} & \rotatebox{45}{DTD} & \rotatebox{45}{UCF101}
& \ca \textbf{Avg.} \\
\midrule

 & CLIP
 & 66.6 & 62.5 & 24.7 & 48.3 & 65.6 & 85.9 & 89.1 & 70.7 & 93.2 & 43.5 & 67.5
 & \ca 65.2 \\
 
 & MTA
 & 69.3\pos{2.7} & 64.8\pos{2.3} & 27.4\pos{2.7} & 46.9\negval{1.4} & 68.0\pos{2.4}
 & 87.2\pos{1.3} & 89.4\pos{0.3} & 71.7\pos{1.0} & 94.0\pos{0.8} & 44.4\pos{0.9} & 69.0\pos{1.5}
 & \ca 66.6\pos{1.3} \\
\midrule

\multirow{7}{*}{\shortstack{Medium\\(5--25)}}
 & Dirichlet
 & 60.9\negval{5.7} & 75.4\pos{12.9} & 26.7\pos{2.0} & 38.8\negval{9.5} & 74.1\pos{8.5}
 & 76.2\negval{9.7} & 91.0\pos{1.9} & 71.6\pos{0.9} & 92.4\negval{0.8} & 36.2\negval{7.3} & 65.4\negval{2.1}
 & \ca 64.4\negval{0.8} \\

 & ZLaP
 & 16.6\negval{50.0} & 20.1\negval{42.4} & 16.4\negval{8.3} & 49.0\pos{0.7} & 32.2\negval{33.4}
 & 55.5\negval{30.4} & 76.4\negval{12.7} & 40.6\negval{30.1} & 67.7\negval{25.5} & 34.2\negval{9.3} & 48.1\negval{19.4}
 & \ca 41.5\negval{23.7} \\

 & GDA-CLIP
 & 32.3\negval{34.3} & 32.5\negval{30.0} & 19.0\negval{5.7} & 61.0\pos{12.7} & 39.5\negval{26.1} & 68.1\negval{17.8} & 79.6\negval{9.5} & 50.3\negval{20.4} & 69.1\negval{24.1} & 39.2\negval{4.3} & 49.5\negval{18.0}
 & \ca 49.1\negval{16.1} \\

 & TransCLIP
 & 39.9\negval{26.7} & 42.7\negval{19.8} & 22.0\negval{2.7} & 63.1\pos{14.8} & 49.9\negval{15.7}
 & 80.6\negval{5.3} & 87.9\negval{1.2} & 58.7\negval{12.0} & 79.1\negval{14.1} & 42.9\negval{0.6} & 55.0\negval{12.5}
 & \ca 56.5\negval{8.7} \\

 & ADAPT
 & 55.8\negval{10.8} & 52.8\negval{9.7} & 23.3\negval{1.4} & 63.5\pos{15.2} & 55.9\negval{9.7} & 77.4\negval{8.5} & 85.2\negval{3.9} & 67.1\negval{3.6} & 89.0\negval{4.2} & 43.0\negval{0.5} & 62.3\negval{5.2} 
 & \ca 61.4\negval{3.9} \\

 & StatA
 & 70.8\pos{4.2} & 64.5\pos{2.0} & 28.4\pos{3.7} & 60.4\pos{12.1} & 74.0\pos{8.4}
 & 87.5\pos{1.6} & 93.1\pos{4.0} & 77.5\pos{6.8} & 92.8\negval{0.4} & 47.1\pos{3.6} & 70.2\pos{2.7}
 & \ca \underline{69.7}\pos{4.4} \\

 \rowcolor{bggray} & \method{}
 & 81.6\pos{15.0} & 75.5\pos{13.0} & 29.6\pos{4.9} & 58.9\pos{10.6} & 76.1\pos{10.5} & 93.1\pos{7.2} & 92.4\pos{3.3} & 77.3\pos{6.6} & 94.9\pos{1.7} & 49.0\pos{5.5} & 73.6\pos{6.1}
 & \ca \textbf{72.9}\pos{7.7} \\
\midrule

\multirow{7}{*}{\shortstack{High\\(25--50)}}

 & Dirichlet
 & 17.3\negval{49.3} & 37.3\negval{25.2} & 21.0\negval{3.7} & 37.9\negval{10.4} & 65.4\negval{0.2}
 & 46.3\negval{39.6} & 81.3\negval{7.8} & 46.3\negval{24.4} & 80.5\negval{12.7} & 21.1\negval{22.4} & 43.6\negval{23.9}
 & \ca 45.3\negval{20.0} \\

 & ZLaP
 & 23.8\negval{42.8} & 32.2\negval{30.3} & 22.2\negval{2.5} & 49.3\pos{1.0} & 45.4\negval{20.2}
 & 74.9\negval{11.0} & 86.5\negval{2.6} & 56.2\negval{14.5} & 79.7\negval{13.5} & 43.6\pos{0.1} & 60.8\negval{6.7}
 & \ca 52.2\negval{13.0} \\

 & GDA-CLIP
 & 37.8\negval{28.8} & 41.5\negval{21.0} & 23.8\negval{0.9} & 62.4\pos{14.1} & 49.2\negval{16.4} & 76.1\negval{9.8} & 90.9\pos{1.8} & 64.7\negval{6.0} & 77.9\negval{15.3} & 47.4\pos{3.9} & 61.7\negval{5.8}
 & \ca 57.6\negval{7.6} \\

 & TransCLIP
 & 43.9\negval{22.7} & 49.6\negval{12.9} & 24.8\pos{0.1} & 64.0\pos{15.7} & 57.3\negval{8.3}
 & 83.0\negval{2.9} & 91.4\pos{2.3} & 69.1\negval{1.6} & 85.5\negval{7.7} & 47.5\pos{4.0} & 65.4\negval{2.1}
 & \ca 62.0\negval{3.3} \\

 & ADAPT
 & 60.4\negval{6.2} & 57.9\negval{4.6} & 24.9\pos{0.2} & 64.0\pos{15.7} & 60.7\negval{4.9} & 82.1\negval{3.8} & 91.8\pos{2.7} & 70.2\negval{0.5} & 90.2\negval{3.0} & 46.9\pos{3.4} & 67.0\negval{0.5}
 & \ca 65.1\negval{0.1} \\

 & StatA
 & 71.9\pos{5.3} & 66.4\pos{3.9} & 25.9\pos{1.2} & 60.7\pos{12.4} & 73.6\pos{8.0}
 & 88.0\pos{2.1} & 91.4\pos{2.3} & 76.7\pos{6.0} & 93.2\zeroval{0.0} & 47.9\pos{4.4} & 71.5\pos{4.0}
 & \ca \underline{69.8}\pos{4.5} \\

 \rowcolor{bggray} & \method{}
 & 81.8\pos{15.2} & 74.9\pos{12.4} & 25.5\pos{0.8} & 59.3\pos{11.0} & 74.7\pos{9.1} & 91.0\pos{5.1} & 89.7\pos{0.6} & 75.2\pos{4.5} & 94.8\pos{1.6} & 45.1\pos{1.6} & 71.8\pos{4.3} 
 & \ca \textbf{71.3}\pos{6.0} \\
\midrule

\multirow{7}{*}{\shortstack{Very High\\(50--100)}}

 & Dirichlet
 & 10.8\negval{55.8} & 15.7\negval{46.8} & 17.5\negval{7.2} & 37.8\negval{10.5} & 51.2\negval{14.4}
 & 29.1\negval{56.8} & 79.3\negval{9.8} & 24.3\negval{46.4} & 59.1\negval{34.1} & 19.0\negval{24.5} & 26.1\negval{41.4}
 & \ca 33.6\negval{31.6} \\

 & ZLaP
 & 32.7\negval{33.9} & 44.0\negval{18.5} & 25.4\pos{0.7} & 49.3\pos{1.0} & 55.2\negval{10.4}
 & 83.3\negval{2.6} & 87.3\negval{1.8} & 64.8\negval{5.9} & 87.9\negval{5.3} & 45.2\pos{1.7} & 67.8\pos{0.3}
 & \ca 58.4\negval{6.8} \\

 & GDA-CLIP
 & 41.7\negval{24.9} & 48.5\negval{14.0} & 26.3\pos{1.6} & 62.4\pos{14.1} & 56.5\negval{9.1} & 82.8\negval{3.1} & 92.3\pos{3.2} & 73.1\pos{2.4} & 86.5\negval{6.7} & 49.2\pos{5.7} & 70.5\pos{3.0}
 & \ca 62.7\negval{2.5} \\

 & TransCLIP
 & 44.5\negval{22.1} & 53.0\negval{9.5} & 25.6\pos{0.9} & 64.1\pos{15.8} & 60.9\negval{4.7}
 & 85.2\negval{0.7} & 91.9\pos{2.8} & 74.3\pos{3.6} & 90.5\negval{2.7} & 48.1\pos{4.6} & 70.7\pos{3.2}
 & \ca 64.4\negval{0.8} \\

 & ADAPT
 & 64.7\negval{1.9} & 61.9\negval{0.6} & 25.8\pos{1.1} & 64.0\pos{15.7} & 64.9\negval{0.7} & 85.5\negval{0.4} & 92.6\pos{3.5} & 73.2\pos{2.5} & 92.5\negval{0.7} & 48.0\pos{4.5} & 70.3\pos{2.8}
 & \ca 67.6\pos{2.3} \\

 & StatA
 & 71.8\pos{5.2} & 67.1\pos{4.6} & 23.9\negval{0.8} & 60.7\pos{12.4} & 70.2\pos{4.6}
 & 87.1\pos{1.2} & 91.1\pos{2.0} & 74.3\pos{3.6} & 93.7\pos{0.5} & 48.0\pos{4.5} & 70.7\pos{3.2}
 & \ca \underline{69.0}\pos{3.7} \\

 \rowcolor{bggray} & \method{}
 & 80.4\pos{13.8} & 71.9\pos{9.4} & 23.4\negval{1.3} & 59.3\pos{11.0} & 70.7\pos{5.1} & 87.8\pos{1.9} & 89.3\pos{0.2} & 72.3\pos{1.6} & 93.3\pos{0.1} & 44.3\pos{0.8} & 68.9\pos{1.4}
 & \ca \textbf{69.2}\pos{4.0} \\
\bottomrule
\end{tabular}
}
\vspace{-0.5cm}
\end{table*}

\subsection{Main Results}
\paragraph{Batch adaptation.}
We first report the results under batch adaptation in Tab. \ref{tab:main-batch}(a) and (b), with batch sizes of 64 and 1,000, respectively. 
The results show that existing transductive methods generally suffer from severe performance degradation under realistic class-imbalance, and most of them even collapse and underperform zero-shot CLIP. 
While StatA mitigates this issue by introducing anchor term $\mathcal{R}({\mathbf{M}})$, its performance remains suboptimal. 
In contrast, our \method{} consistently achieves the best average performance across all scenarios, effectively enhancing VLM predictions. 
Notably, the performance gains of \method{} become more pronounced as class imbalance becomes more severe ($\frac{K_{\mathrm{eff}}}{\min(N, K)}$ decreases). 
Moreover, \method{} delivers the most significant gains on challenging large-scale datasets such as ImageNet, highlighting its superiority in practical applications. 

\paragraph{Online adaptation.}
\begin{table*}[h!]
\centering
\caption{\textbf{Main results for online adaptation}, averaged over 100 runs. The best and second-best results are marked in \textbf{bold} and \underline{underlined}, respectively. We report four scenarios for batch size 128, with different Dirichlet parameter $\xi$. Separate denotes sequential classes. Subscript \poscolor{green} indicates improvement, \negcolor{red} indicates decline, and \zerocolor{gray} indicates no change compared with zero-shot performance.}
\label{tab:main-online}
\resizebox{0.88\textwidth}{!}
{
\begin{tabular}{cl|ccccccccccc|c}
\toprule
Scenario & Method
& \rotatebox{45}{ImageNet} & \rotatebox{45}{SUN397} & \rotatebox{45}{Aircraft} & \rotatebox{45}{EuroSAT}
& \rotatebox{45}{StanfordCars} & \rotatebox{45}{Food101} & \rotatebox{45}{Pets} & \rotatebox{45}{Flowers102}
& \rotatebox{45}{Caltech101} & \rotatebox{45}{DTD} & \rotatebox{45}{UCF101}
& \ca \textbf{Avg.} \\
\midrule

 & CLIP
 & 66.6 & 62.5 & 24.7 & 48.3 & 65.6 & 85.9 & 89.1 & 70.7 & 93.2 & 43.5 & 67.5
 & \ca 65.2 \\
 & MTA
 & 69.3\pos{2.7} & 64.8\pos{2.3} & 27.4\pos{2.7} & 46.9\negval{1.4}
 & 68.0\pos{2.4} & 87.2\pos{1.3} & 89.4\pos{0.3} & 71.7\pos{1.0}
 & 94.0\pos{0.8} & 44.4\pos{0.9} & 69.0\pos{1.5}
 & \ca 66.6\pos{1.3} \\
\midrule

\multirow{7}{*}{\shortstack{Low\\($\xi=0.1$)}}

 & TENT
 & 66.6\zeroval{0.0} & 64.5\pos{2.0} & 24.6\negval{0.1} & 51.8\pos{3.5}
 & 65.7\pos{0.1} & 85.9 & 89.3\pos{0.2} & 70.6\negval{0.1}
 & 93.4\pos{0.2} & 44.0\pos{0.5} & 67.8\pos{0.3}
 & \ca 65.8\pos{0.6} \\

 & TDA
 & 68.3\pos{1.7} & 66.0\pos{3.5} & 25.4\pos{0.7} & 60.6\pos{12.3}
 & 66.9\pos{1.3} & 86.1\pos{0.2} & 89.6\pos{0.5} & 72.5\pos{1.8}
 & 93.4\pos{0.2} & 45.5\pos{2.0} & 71.0\pos{3.5}
 & \ca \textbf{67.7}\pos{2.5} \\

 & DMN
 & 68.0\pos{1.4} & 64.8\pos{2.3} & 24.9\pos{0.2} & 59.8\pos{11.5}
 & 67.0\pos{1.4} & 84.2\negval{1.7} & 89.9\pos{0.8} & 73.3\pos{2.6}
 & 92.2\negval{1.0} & 44.8\pos{1.3} & 70.3\pos{2.8}
 & \ca 67.2\pos{2.0} \\

 & OGA
 & 68.8\pos{2.2} & 66.1\pos{3.6} & 24.9\pos{0.2} & 61.8\pos{13.5} & 66.2\pos{0.6} & 86.2\pos{0.3} & 90.2\pos{1.1} & 72.0\pos{1.3} & 93.4\pos{0.2} & 44.9\pos{1.4} & 69.4\pos{1.9}
 & \ca \underline{67.6}\pos{2.4} \\

 & ADAPT
 & 69.2\pos{2.6} & 65.4\pos{2.9} & 24.4\negval{0.3} & 51.9\pos{3.6} & 67.6\pos{2.0} & 77.4\negval{8.5} & 88.5\negval{0.6} & 72.9\pos{2.2} & 92.4\negval{0.8} & 45.2\pos{1.7} & 69.9\pos{2.4}
 & \ca 65.9\pos{0.7} \\

 & StatA
 & 66.2\negval{0.4} & 63.6\pos{1.1} & 24.3\negval{0.4} & 52.3\pos{4.0}
 & 67.4\pos{1.8} & 88.0\pos{2.1} & 92.5\pos{3.4} & 72.7\pos{2.0}
 & 94.2\pos{1.0} & 46.8\pos{3.3} & 68.8\pos{1.3}
 & \ca 67.0\pos{1.7} \\

\rowcolor{bggray}
 & \method{}
 & 67.0\pos{0.4} & 63.5\pos{1.0} & 23.2\negval{1.5} & 50.1\pos{1.8} & 66.5\pos{0.9} & 91.2\pos{5.3} & 91.6\pos{2.5} & 71.4\pos{0.7} & 94.0\pos{0.8} & 45.3\pos{1.8} & 68.2\pos{0.7}
 & \ca 66.5\pos{1.3} \\

\midrule

\multirow{7}{*}{\shortstack{Medium\\($\xi=0.01$)}}

 & TENT
 & 66.7\pos{0.1} & 64.3\pos{1.8} & 24.6\negval{0.1} & 47.9\negval{0.4}
 & 65.6\zeroval{0.0} & 85.9\zeroval{0.0} & 89.4\pos{0.3} & 70.6\negval{0.1}
 & 93.3\pos{0.1} & 44.0\pos{0.5} & 67.8\pos{0.3}
 & \ca 65.5\pos{0.2} \\

 & TDA
 & 68.2\pos{1.6} & 65.6\pos{3.1} & 25.2\pos{0.5} & 56.5\pos{8.2}
 & 66.5\pos{0.9} & 85.8\negval{0.1} & 89.3\pos{0.2} & 72.6\pos{1.9}
 & 93.5\pos{0.3} & 45.2\pos{1.7} & 70.1\pos{2.6}
 & \ca 67.1\pos{1.9} \\

 & DMN
 & 68.0\pos{1.4} & 64.8\pos{2.3} & 24.9\pos{0.2} & 56.2\pos{7.9}
 & 66.8\pos{1.2} & 81.9\negval{4.0} & 89.0\negval{0.1} & 73.0\pos{2.3}
 & 92.1\negval{1.1} & 44.9\pos{1.4} & 69.6\pos{2.1}
 & \ca 66.5\pos{1.2} \\

 & OGA
 & 68.6\pos{2.0} & 65.4\pos{2.9} & 24.9\pos{0.2} & 58.6\pos{10.3} & 66.2\pos{0.6} & 85.9\zeroval{0.0} & 89.8\pos{0.7} & 72.2\pos{1.5} & 93.4\pos{0.2} & 44.8\pos{1.3} & 69.1\pos{1.6}
 & \ca 67.2\pos{1.9} \\

 & ADAPT
 & 69.3\pos{2.7} & 67.2\pos{4.7} & 24.6\negval{0.1} & 58.5\pos{10.2} & 68.6\pos{3.0} & 83.9\negval{2.0} & 90.2\pos{1.1} & 73.2\pos{2.5} & 92.5\negval{0.7} & 45.3\pos{1.8} & 71.3\pos{3.8}
 & \ca 67.7\pos{2.5} \\

 & StatA
 & 69.6\pos{3.0} & 65.9\pos{3.4} & 27.3\pos{2.6} & 52.3\pos{4.0}
 & 73.2\pos{7.6} & 89.1\pos{3.2} & 94.6\pos{5.5} & 75.6\pos{4.9}
 & 94.3\pos{1.1} & 46.8\pos{3.3} & 69.7\pos{2.2}
 & \ca \underline{68.9}\pos{3.7} \\

\rowcolor{bggray}
 & \method{}
 & 76.5\pos{9.9} & 73.9\pos{11.4} & 28.3\pos{3.6} & 51.5\pos{3.2} & 75.1\pos{9.5} & 95.3\pos{9.4} & 94.8\pos{5.7} & 75.7 \pos{5.0}& 95.3\pos{2.1} & 50.0\pos{6.5} & 73.4\pos{5.9}
 & \ca \textbf{71.8}\pos{6.6} \\

\midrule

\multirow{7}{*}{\shortstack{High\\($\xi=0.001$)}}

 & TENT
 & 66.8\pos{0.2} & 64.3\pos{1.8} & 24.8\pos{0.1} & 45.6\negval{2.7}
 & 65.6\zeroval{0.0} & 86.1\pos{0.2} & 89.4\pos{0.3} & 70.5\negval{0.2}
 & 93.4\pos{0.2} & 44.0\pos{0.5} & 67.9\pos{0.4}
 & \ca 65.3\pos{0.1} \\

 & TDA
 & 67.9\pos{1.3} & 65.1\pos{2.6} & 25.1\pos{0.4} & 55.3\pos{7.0}
 & 66.3\pos{0.7} & 85.5\negval{0.4} & 89.0\negval{0.1} & 72.5\pos{1.8}
 & 93.6\pos{0.4} & 45.1\pos{1.6} & 69.7\pos{2.2}
 & \ca 66.8\pos{1.6} \\

 & DMN
 & 67.9\pos{1.3} & 64.8\pos{2.3} & 24.9\pos{0.2} & 56.3\pos{8.0}
 & 66.8\pos{1.2} & 79.9\negval{6.0} & 88.9\negval{0.2} & 72.9\pos{2.2}
 & 92.1\negval{1.1} & 44.8\pos{1.3} & 69.4\pos{1.9}
 & \ca 66.3\pos{1.0} \\

 & OGA
 & 67.9\pos{1.3} & 64.6\pos{2.1} & 24.9\pos{0.2} & 59.0\pos{10.7} & 66.2\pos{0.6} & 85.6\negval{0.3} & 89.7\pos{0.6} & 72.2\pos{1.5} & 93.5\pos{0.3} & 44.7\pos{1.2} & 68.9\pos{1.4}
 & \ca 67.0\pos{1.8} \\

 & ADAPT
 & 68.6\pos{2.0} & 66.5\pos{4.0} & 24.6\negval{0.1} & 55.4\pos{7.1} & 68.0\pos{2.4} & 81.3\negval{4.6} & 89.1\zeroval{0.0} & 73.0\pos{2.3} & 92.5\negval{0.7} & 45.3\pos{1.8} & 70.3\pos{2.8}
 & \ca 66.8\pos{1.5} \\

 & StatA
 & 71.9\pos{5.3} & 66.0\pos{3.5} & 27.9\pos{3.2} & 51.8\pos{3.5}
 & 74.7\pos{9.1} & 89.3\pos{3.4} & 94.8\pos{5.7} & 76.4\pos{5.7}
 & 94.4\pos{1.2} & 47.0\pos{3.5} & 69.8\pos{2.3}
 & \ca \underline{69.5}\pos{4.2} \\

\rowcolor{bggray}
 & \method{}
 & 82.2\pos{15.6} & 77.2\pos{14.7} & 30.0\pos{5.3} & 51.6\pos{3.3} & 77.4\pos{11.8} & 95.8\pos{9.9} & 95.3\pos{6.2} & 77.0\pos{6.3} & 95.7\pos{2.5} & 51.2\pos{7.7} & 74.6\pos{7.1}
 & \ca \textbf{73.4}\pos{8.2} \\

\midrule

\multirow{7}{*}{Separate}

 & TENT
 & 66.7\pos{0.1} & 64.2\pos{1.7} & 24.7\zeroval{0.0} & 37.0\negval{11.3}
 & 65.6\zeroval{0.0} & 86.1\pos{0.2} & 89.3\pos{0.2} & 70.8\pos{0.1}
 & 93.4\pos{0.2} & 43.9\pos{0.4} & 67.9\pos{0.4}
 & \ca 64.5\negval{0.7} \\

 & TDA
 & 67.4\pos{0.8} & 64.6\pos{2.1} & 24.9\pos{0.2} & 55.3\pos{7.0}
 & 65.9\pos{0.3} & 85.2\negval{0.7} & 88.9\negval{0.2} & 72.3\pos{1.6}
 & 93.6\pos{0.4} & 45.0\pos{1.5} & 69.6\pos{2.1}
 & \ca 66.6\pos{1.4} \\

 & DMN
 & 67.7\pos{1.1} & 64.7\pos{2.2} & 24.9\pos{0.2} & 55.1\pos{6.8}
 & 66.7\pos{1.1} & 78.5\negval{7.4} & 88.0\negval{1.1} & 72.8\pos{2.1}
 & 91.9\negval{1.3} & 44.8\pos{1.3} & 69.0\pos{1.5}
 & \ca 65.8\pos{0.6} \\

 & OGA
 & 67.2\pos{0.6} & 64.1\pos{1.6} & 24.9\pos{0.2} & 58.0\pos{9.7} & 66.1\pos{0.5} & 85.4\negval{0.5} & 89.5\pos{0.4} & 72.1\pos{1.4} & 93.4\pos{0.2} & 44.6\pos{1.1} & 68.7\pos{1.2}
 & \ca 66.7\pos{1.5} \\

 & ADAPT
 & 68.2\pos{1.6} & 65.7\pos{3.2} & 25.0\pos{0.3} & 53.7\pos{5.4} & 67.4\pos{1.8} & 78.2\negval{7.7} & 88.2\negval{0.9} & 72.4\pos{1.7} & 92.5\negval{0.7} & 44.8\pos{1.3} & 69.5\pos{2.0}
 & \ca 66.0\pos{0.7} \\

& StatA
 & 71.7\pos{5.1} & 64.9\pos{2.4} & 28.9\pos{4.2} & 48.2\negval{0.1}
 & 75.2\pos{9.6} & 88.9\pos{3.0} & 95.2\pos{6.1} & 77.6\pos{6.9}
 & 94.3\pos{1.1} & 45.8\pos{2.3} & 69.0\pos{1.5}
 & \ca \underline{69.1}\pos{3.8} \\

\rowcolor{bggray}
 & \method{}
 & 82.4\pos{15.8} & 76.7\pos{14.2} & 32.1\pos{7.4} & 51.2\pos{2.9} & 77.8\pos{12.2} & 95.5\pos{9.6} & 95.1\pos{6.0} & 77.9\pos{7.2} & 95.9\pos{2.7} & 53.6\pos{10.1} & 74.6\pos{7.1}
 & \ca \textbf{73.9}\pos{8.6} \\

\bottomrule
\end{tabular}
}
\vspace{-0.2cm}
\end{table*}

We further evaluate methods under online adaptation. 
As shown in Tab. \ref{tab:main-online}, most online TTA methods remain relatively stable across different correlation strengths, without exhibiting performance degradation. 
Nevertheless, \method{} still achieves state-of-the-art performance generally. 
We observe a slight drop only in the \emph{Low} scenario, where the class distribution becomes closer to uniform. 
This can be attributed to the inherent bias of our $\boldsymbol{\alpha}$, as it is designed to favor sparse effective class sets. 
Similarly, \method{} brings better improvements in scenarios with stronger correlations. 
For example, \method{} outperforms StatA by $10.7\%$ on ImageNet in the \emph{Separate} scenario. 
\begin{figure}
    \centering
    \begin{subfigure}[b]{0.42\linewidth}
        \centering
        \includegraphics[width=\textwidth]{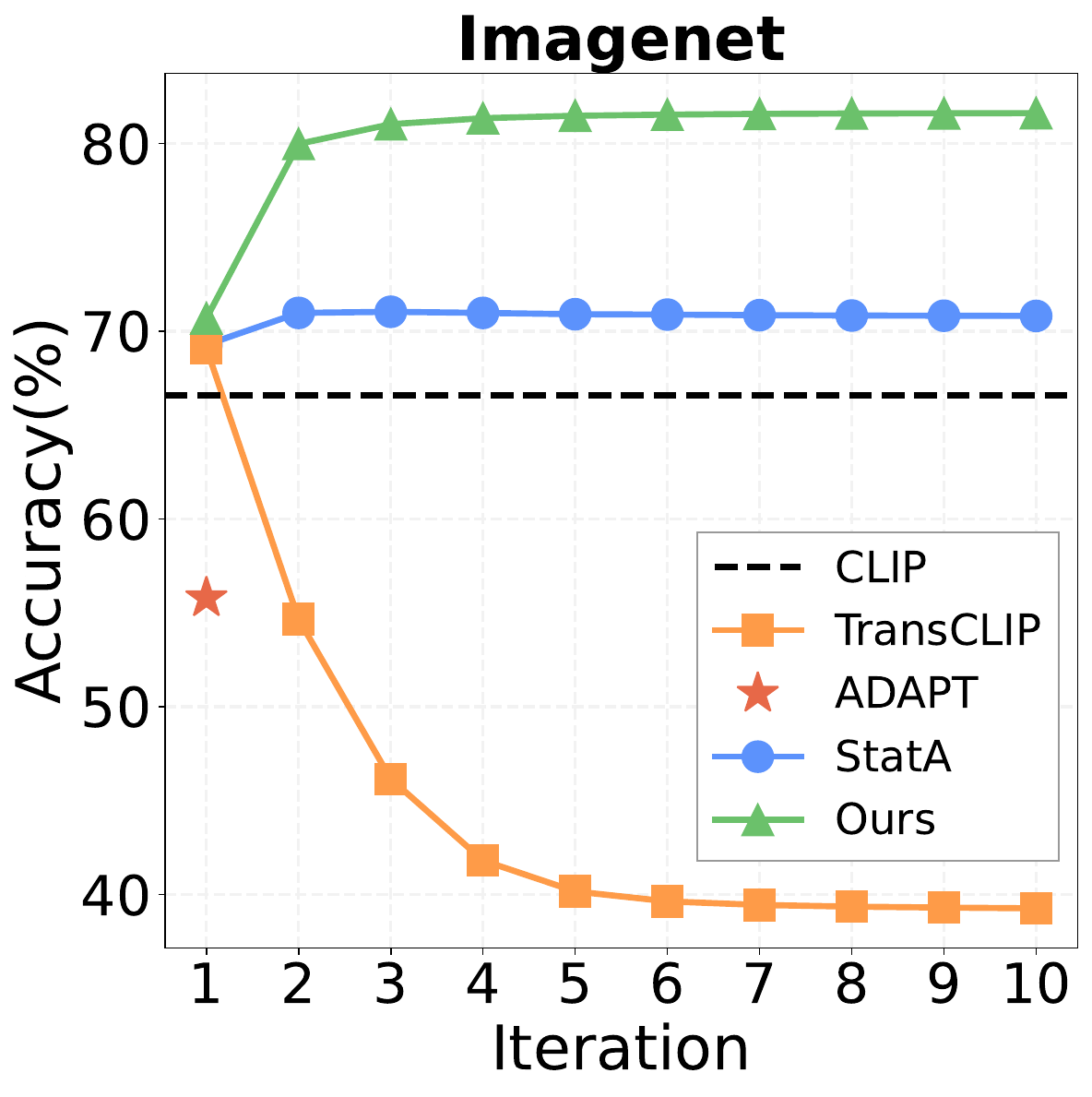}
    \end{subfigure}
    \begin{subfigure}[b]{0.42\linewidth}
        \centering
        \includegraphics[width=\textwidth]{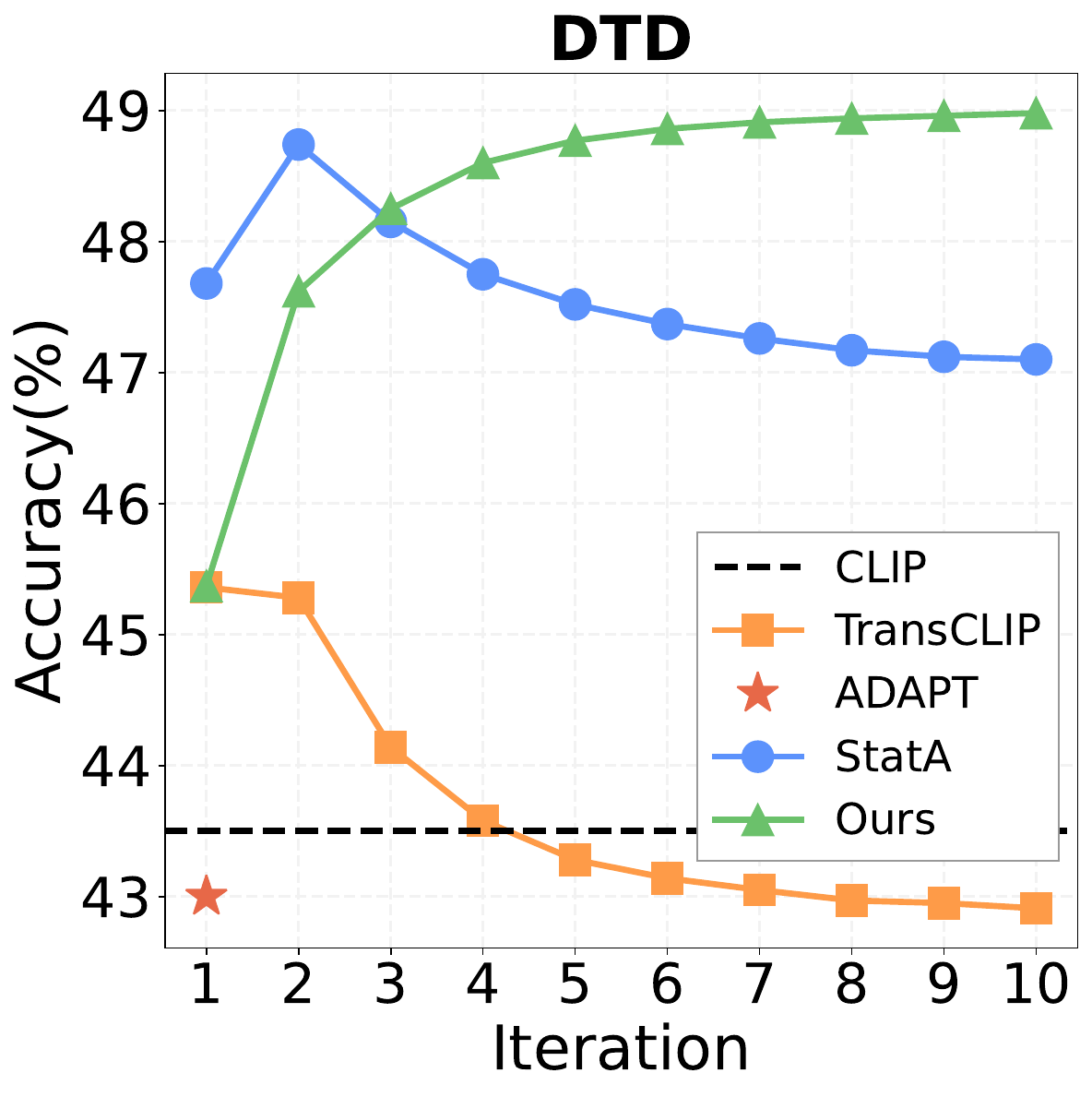}
    \end{subfigure}
    \caption{\textbf{Convergence analysis on ImageNet and DTD.} We demonstrate performance curves over iterations for each method.}
    \label{fig:convergence}
    \vspace{-0.6cm}
\end{figure}


\subsection{Efficiency Analysis}
\label{sec:efficiency}
\paragraph{Runtime.}
\begin{table}[t]
\centering
\small
\setlength{\tabcolsep}{4pt}
\renewcommand{\arraystretch}{1.3}
\setlength{\aboverulesep}{0.1pt}
\setlength{\belowrulesep}{0.1pt}

\caption{\textbf{Runtime (seconds) on ImageNet.} The second row indicates the number of iterations. ``CLIP forward'' denotes the inference latency, while others report the net algorithm time.}
\label{tab:runtime}

\resizebox{0.85\linewidth}{!}{
\begin{tabular}{c|cccc>{\columncolor{bggray}}c}
\toprule
\multirow{2}{*}{Batch size} & \multirow{2}{*}{CLIP forward} & TransCLIP & StatA & ADAPT & \method{} \\

\cmidrule(lr){3-6}

 &  & 10 & 10 & 1 & 10 \\
\midrule

128    & 8.47  & 0.08 & 0.11 & 0.06 & \textbf{0.03} \\
1,000  & 8.99  & 0.24 & 0.31 & 0.13 & \textbf{0.03} \\
50,000 & 36.90 & 0.47 & 0.59 & 0.21 & \textbf{0.05} \\
\bottomrule
\end{tabular}
}
\vspace{-0.2cm}
\end{table}

Tab. \ref{tab:runtime} reports the runtime per batch on the ImageNet dataset. 
We observe that the CLIP inference, including both visual and textual encoding, dominates the total computational cost. 
Considering the net runtime of methods, our \method{} highlights its exceptional efficiency. 
Despite requiring iterative optimization, \method{} is still twice as fast as the single-pass ADAPT. 
Moreover, this efficiency advantage becomes increasingly pronounced as the batch size grows. 
This empirically validates a significant advantage of vMF mixtures that has been largely overlooked in prior works.
We further provide a complexity analysis in App. \ref{app:procedure}. 

\paragraph{Convergence.}
Fig. \ref{fig:convergence} illustrates the convergence curves of four transductive methods on ImageNet and DTD. 
We find that our \method{} converges rapidly within just a few iterations, maintaining stable and consistent improvements. 
In contrast, methods without anchoring mechanism (e.g., TransCLIP) tend to deviate from global, reliable estimates as the iteration proceeds, leading to collapse. 
Notably, even a single iteration could yield competitive performance. 

\subsection{Ablation Studies}
\label{sec:ablation}
\paragraph{Components.}

\begin{table}[htbp]
\centering
\caption{\textbf{Ablation study on components.} Each reported performance is averaged over all datasets and scenarios.}
\label{tab:ablation}
\resizebox{0.8\linewidth}{!}{
\begin{tabular}{cccc|cc}
\toprule
$\boldsymbol{\alpha}$ & $\boldsymbol{\gamma}$ & Update $\boldsymbol{\mu}$ & Update $\boldsymbol{\kappa}$ & Bs=64 & Bs=1,000 \\
\midrule
\ding{55} & \ding{51} & \ding{51} & \ding{51} & 69.0 & 68.9 \\
\ding{51} & \ding{55} & \ding{51} & \ding{51} & 72.3 & 70.4 \\
\ding{51} & \ding{51} & \ding{55} & \ding{51} & 66.8 & 66.5 \\
\ding{51} & \ding{51} & \ding{51} & \ding{55} & 70.8 & 68.9 \\
\rowcolor{bggray} \ding{51} & \ding{51} & \ding{51} & \ding{51} & \textbf{72.4} & \textbf{71.1} \\
\bottomrule
\end{tabular}
}
\vspace{-0.3cm}
\end{table}
We study the effect of key components in Tab. \ref{tab:ablation}. 
Obviously, iteratively updating parameters $(\boldsymbol{\mu}, \boldsymbol{\kappa})$ improves performance, as it allows the class-conditional distributions to progressively adapt to the underlying latent structure of data. 
By dynamically adjusting shrinkage strength, our $\boldsymbol{\alpha}$ robustly mitigates negative transfer from outlier classes, and its benefit scales up as the scenario becomes more imbalanced. 
Surprisingly, the effect of $\boldsymbol{\gamma}$ is marginal when batch size is small. 
A possible explanation is that the estimation error is dominated by high statistical variance and class sparsity in small batches; whereas in large batches, sufficient dense samples provide enough stable statistics for  $\boldsymbol{\gamma}$ to filter out noise. 
Overall, simply incorporating zero-shot priors for adjustments enables more fine-grained shrinkage in realistic scenarios, with negligible additional cost. 

\paragraph{Design choices of $\boldsymbol{\alpha}$ and $\boldsymbol{\lambda}$.}
\begin{table}[t]
\centering
\caption{\textbf{Design choices of $\alpha_k$.} Results are reported on batch adaptation, \textit{Medium} scenario, batch size of 1,000. "Average" denotes the mean performance over all datasets.}
\label{tab:alpha-design}
\resizebox{0.75\linewidth}{!}{
\begin{tabular}{lccc}
\toprule
$\alpha_k$ & $1/\lambda_k$ (ours) & $\lambda_k$ & $\exp(-\lambda_k)$ \\
\midrule
ImageNet & \textbf{81.6} & 68.5 & 71.1 \\
Average  & \textbf{72.9} & 67.6 & 68.8 \\
\bottomrule
\end{tabular}
}
\vspace{-0.5cm}
\end{table}
We first analyze the design of the anchor weight $\alpha_k$ in Eq. \eqref{eq:as-alpha}. 
As shown in Tab.~\ref{tab:alpha-design}, directly setting $\alpha_k=\lambda_k$ performs poorly, since it assigns stronger shrinkage to more confident classes. 
We also compare an alternative form $\alpha_k=\exp(-\lambda_k)$, but find it less effective. 
A possible reason is that it bounds $\alpha_k$ in $(0,1]$, and in turn bounds $\beta_k$ within $[n_k/(n_k+1),1)$, making the update remain noticeably influenced by empirical estimates.
These results further support our inverse design $\alpha_k=1/\lambda_k$.
\begin{table}[t]
\centering
\caption{\textbf{Design choices of $\lambda_k$}, across extreme and mild imbalance scenarios. Each reported performance is averaged over all datasets.}
\label{tab:lambda-design}
\resizebox{0.85\linewidth}{!}{
\begin{tabular}{lcccc}
\toprule
$\lambda_k$ & Task 1 & Task 2 & Task 3 & Avg. \\
\midrule
\multicolumn{5}{l}{\textit{Extreme imbalance}} \\
\rowcolor{bggray} Geo. mean (ours) & 73.1 & 72.9 & 73.4 & 73.1 \\
Ari. mean        & \textbf{73.5} & \textbf{74.2} & \textbf{74.1} & \textbf{73.9} \\
Harm. mean       & \underline{73.4} & \underline{73.5} & 73.8 & \underline{73.6} \\
Average only     & 72.4 & 73.3 & \underline{73.9} & 73.2 \\
Max only         & 71.7 & 70.8 & 71.6 & 71.4 \\
\midrule
\multicolumn{5}{l}{\textit{Mild imbalance}} \\
\rowcolor{bggray} Geo. mean (ours) & \textbf{69.4} & \textbf{69.2} & 66.5 & \textbf{68.4} \\
Ari. mean        & 69.0 & 68.6 & 65.7 & 67.8 \\
Harm. mean       & \underline{69.3} & \underline{69.0} & 66.1 & 68.1 \\
Average only     & 68.2 & 67.1 & \underline{66.7} & 67.3 \\
Max only         & 69.0 & 68.9 & \textbf{66.9} & \underline{68.3} \\
\bottomrule
\end{tabular}
}
\end{table}

We then examine the design of class confidence $\lambda_k$ in Tab.~\ref{tab:lambda-design}, using three representative tasks from both extreme and mild imbalance scenarios. 
Average-only confidence may suppress rare-but-valid classes, while max-only confidence may overestimate classes with occasional high-confidence predictions. 
Various types of means mitigate this issue similarly. 
We choose geometric mean as it yields a more generalizable solution across broader scenarios, especially those mild, near-uniform cases where \method{} performs below average. 

\subsection{Further Analysis}
\paragraph{Comparisons with variants.}
\label{sec:gmm}
\begin{table}[htbp]
\centering
\caption{\textbf{Comparisons with variants.} Each reported performance is averaged over all datasets and scenarios.}
\label{tab:gmm}
\resizebox{0.8\linewidth}{!}{
\begin{tabular}{c|ccc}
\toprule
\multirow{2}{*}{Method} 
& \multicolumn{2}{c}{Batch} & Online \\
\cmidrule(lr){2-3} \cmidrule(lr){4-4}
& Bs=64 & Bs=1,000 & Bs=128 \\
\midrule
StatA           & 69.01 & 69.46 & 68.61 \\
StatA\_vMF      & 68.69 & 68.15 & 68.31 \\
\midrule
\method{}\_Gaussian  & 71.49 & 71.07 & 71.23 \\
\rowcolor{bggray} \method{}            & \textbf{72.36} & \textbf{71.14} & \textbf{71.42} \\
\bottomrule
\end{tabular}
}
\vspace{-0.2cm}
\end{table}

Although vMF distribution is topologically native to the hypersphere of normalized VLM embeddings, Tab. \ref{tab:gmm} shows that replacing the GMM in StatA with a vMF mixture unexpectedly underperforms standard StatA. 
We explain this with the \emph{linear representation hypothesis} \cite{park2023linear}, which suggests that VLM representations reside in low-dimensional linear subspaces and exhibit strong anisotropy. 
Consequently, constrained by isotropic scalar $\boldsymbol{\kappa}$, vMF struggles to capture such manifold geometry, while Gaussian covariance implicitly approximates it in Euclidean space.
However, in \method{}, this limitation is largely alleviated with our dynamic shrinkage strength modeling: \method{} achieves comparable performance to its Gaussian variant with much higher efficiency. 

\paragraph{Extend \method{} to sample-wise.}
While online TTA methods usually assume access to only a single image at each step\footnote{For consistency, we assume a batch-wise access of test samples for all methods in this paper.}, vanilla \method{} does not support sample-wise mode, as transductive learning inherently requires a batch of samples to perform probabilistic soft clustering. 
However, \method{} can be easily extended to a sample-wise online mode by introducing a memory bank to collect historical samples, as what recent work \cite{adapt} has done. 

Formally, assume a memory bank $\mathcal{B}={(\mathbf{f}_j,\bar{\mathbf{z}}_j)}$, where $\mathbf{f}_j \in \mathcal S^{d-1}$ is cached historical feature, and  $\bar{\mathbf{z}}_j \in \Delta^K$ is the corresponding soft label. 
Consider a newly arriving sample with normalized feature $f_\star$ and zero-shot prediction $\hat{\mathbf{y}}_\star$. 
Assume that current mixture parameters $M_{\mathcal{B}}=\{(\boldsymbol{\mu}_k^{\mathcal{B}},\kappa_k^{\mathcal{B}})\}_{k=1}^K$ have already been estimated from available data in $\mathcal{B}$. 
Therefore, the PLE objective becomes\footnote{Note that the KL anchor term $\mathcal{R}(\mathbf{M})$ now becomes constant with respect to new assignment $\mathbf{z}_\star$.}:
\begin{equation}
\small
\vspace{-0.1cm}
    \min_{\mathbf{z}_\star \in \Delta^K}-\mathbf{z}_\star^\top \log \mathbf p_\star^{\mathcal{B}}-\sum_{j\in\mathcal{B}} \omega_{\star_j},\mathbf{z}_\star^\top \bar{\mathbf{z}}_j+\mathrm{KL}(\mathbf{z}_\star \| \hat{\mathbf{y}}_\star).
\vspace{-0.2cm}
\end{equation}

Solving the problem yields the closed-form predictor:
\begin{equation}
\small
    \mathbf{z}_\star = \frac{\hat{\mathbf{y}}_\star \odot \exp(\log \mathbf p_\star^{\mathcal{B}}+\sum_{j\in\mathcal{B}}\omega_{\star j}\bar{\mathbf{z}}_j)}{(\hat{\mathbf{y}}_\star \odot \exp(\log \mathbf p_\star^{\mathcal{B}}+\sum_{j\in\mathcal{B}}\omega_{\star j}\bar{\mathbf{z}}_j))^\top \mathbbm{1}_K}.
\vspace{-0.2cm}
\end{equation}

Hence, \method{} can make an immediate prediction for a newly arriving sample without rerunning full-batch transduction.

At the same time, using bank statistics, the mixture parameters continue to be stabilized by the KL-anchored dynamic shrinkage mechanism. 
Specifically, the parameter update keeps the same anchored form as Eq. \eqref{eq:update-m-real}:
\begin{equation}
\small
\begin{aligned}
\boldsymbol{\mu}_k^{\mathcal{B}}=\frac{\beta_k^{\mathcal{B}} \boldsymbol{v}_k^{\mathcal{B}}+(1-\beta_k^{\mathcal{B}})\boldsymbol{\mu}_k'}{\|\beta_k^{\mathcal{B}} \boldsymbol{v}_k^{\mathcal{B}}+(1-\beta_k^{\mathcal{B}})\boldsymbol{\mu}_k'\|},
\\
\mathcal{A}_d(\kappa_k^{\mathcal{B}}) = \|\beta_k^{\mathcal{B}} \boldsymbol{v}_k^{\mathcal{B}}+(1-\beta_k^{\mathcal{B}})\boldsymbol{\mu}_k'\|,
\end{aligned}
\vspace{-0.2cm}
\end{equation}
where $\boldsymbol{v}_k^{\mathcal{B}}=\frac{\sum_{j\in\mathcal{B}}\gamma_{j,k} \bar{\mathbf{z}}_{j,k} \mathbf{f}_j}{\sum_{j\in\mathcal{B}}\gamma_{j,k} \bar{\mathbf{z}}_{j,k}}$ and $\beta_k^{\mathcal{B}}=\frac{\sum_{j\in\mathcal{B}}\gamma_{j,k} \bar{\mathbf{z}}_{j,k}}{\sum_{j\in\mathcal{B}}\gamma_{j,k} \bar{\mathbf{z}}_{j,k}+\alpha_k}$. 
According to this, we establish a sample-wise version \method{}$^\dagger$ in Tab. \ref{tab:sample-wise}. 
As shown, \method{}$^\dagger$ could still achieve excellent performance. 
In practice, cold-starting the memory bank $\mathcal{B}$ with a held-out set may lead to better results. 

\begin{table}[t]
\centering
\caption{\textbf{Extend \method{} to sample-wise.} Results are reported on online adaptation, \textit{Medium} scenario, batch size of 128.}
\label{tab:sample-wise}
\resizebox{0.8\linewidth}{!}{
\begin{tabular}{lcccc}
\toprule
Method & ImageNet & SUN397 & Food101 & DTD \\
\midrule
CLIP              & 66.6 & 62.5 & 85.9 & 43.5 \\
\midrule
StatA          & 69.6 & 65.9 & 89.1 & 46.8 \\
\method{}          & 76.5 & 73.9 & 95.3 & 50.0 \\
\rowcolor{bggray} \method{}$^\dagger$ & 73.0 & 69.5 & 92.4 & 44.1 \\
\bottomrule
\end{tabular}
}
\end{table}

\paragraph{More backbones and architectures.}

\begin{table}[t]
\centering
\caption{\textbf{Scaling to other CLIP backbones and VLMs.}}
\label{tab:vlm-scaling}

\resizebox{\linewidth}{!}{
\begin{tabular}{l | c |cc|cc}
\toprule
& \multirow{2}{*}{\centering \#Params}
& \multicolumn{2}{c|}{\textbf{ImageNet}}
& \multicolumn{2}{c}{\textbf{Average}} \\
\cmidrule(lr){3-4}\cmidrule(lr){5-6}
&  & Zero-shot & Ours & Zero-shot & Ours \\
\midrule
CLIP RN50        & 102M & 58.2 & 76.8 \pos{18.6} & 58.7 & 65.7 \pos{7.0} \\
CLIP RN101       & 120M & 61.3 & 77.1 \pos{15.8} & 59.5 & 64.8 \pos{5.3} \\
CLIP ViT-B/32    & 151M & 62.0 & 78.5 \pos{16.5} & 61.9 & 68.0 \pos{6.1} \\
CLIP ViT-L/14    & 428M & 73.5 & 84.9 \pos{11.4} & 72.6 & 77.4 \pos{4.8} \\
OpenCLIP         & 150M & 73.0 & 83.4 \pos{10.4} & 72.5 & 76.7 \pos{4.2} \\
SigLIP           & 878M & 82.3 & 91.3 \pos{9.0} & 81.8 & 85.1 \pos{3.2} \\
EVA-CLIP         & 1.14B & 78.0 & 81.2 \pos{3.2} & 76.5 & 78.4 \pos{1.9} \\
\bottomrule
\end{tabular}
}
\end{table}

We further extend our evaluation to include 4 additional CLIP backbones and 3 other VLMs in Tab. \ref{tab:vlm-scaling}, which demonstrate the universal effectiveness of our \method{} across diverse model architectures, scales, and types. 
App. \ref{sec:app-more-backbone} and \ref{sec:app-more-vlm} provide more details.
\section{Conclusion}
In this work, we systematically revisit test-time transduction for VLMs under realistic class imbalance from the perspective of PLE, and reveal the brittleness and underlying limitations of existing transductive methods. 
Therefore, we propose \method{}, which is based on a mixture of vMF distributions and dynamically adjusts shrinkage strength at both the instance and class levels to mitigate negative transfer. 
Extensive experiments validate that \method{} achieves effective, efficient and practical adaptation. 


\section*{Acknowledgment}
Ziyue Qiao is supported by the National Natural Science Foundation of China (No. 62406056) and the Guangdong Basic and Applied Basic Research Foundation (No. 2024A1515140114). 
The authors are grateful to the anonymous reviewers for their efforts and insightful suggestions to improve this paper.

\section*{Impact Statement}
This work advances test-time adaptation for vision-language models by addressing realistic class imbalance and distribution shifts. 
Our approach leverages a mixture of von Mises-Fisher distributions with dynamic, data-driven shrinkage to suppress unreliable predictions and mitigate negative transfer. 
We provide theoretical grounding through penalized likelihood estimation, and demonstrate empirical effectiveness and efficiency across multiple datasets and backbones. 
By enabling robust and efficient adaptation, our method improves the reliability of deployed vision-language systems in practical applications such as image classification, retrieval, and zero-shot reasoning, without requiring access to model weights or additional training data.



\nocite{licorrecting,li2026adaptive, liu2026all}
\bibliography{main}
\bibliographystyle{icml2026}

\newpage
\appendix
\onecolumn
\section{Overall Procedure of \method{}}
\label{app:procedure}

The overall procedure of our proposed \method{} is presented in Alg. \ref{alg:overall_moon}. 
The BSUM-style iterative optimization of \method{} can be conceptualized as a generalized Expectation-Maximization (EM) algorithm: fixing parameters $(\boldsymbol{\mu}_k, \kappa_k)$ to update assignments $\mathbf{z}$ corresponds to the E-step, while fixing $\mathbf{z}$ and updating $(\boldsymbol{\mu}_k, \kappa_k)$ corresponds to the M-step. 

\begin{algorithm}[htbp]
\caption{Overall procedure of \method{}}
\label{alg:overall_moon}
\begin{algorithmic}[1]
\REQUIRE Visual feature embeddings $\{\mathbf{f}_i\}_{i=1}^N, \mathbf{f}_i \in \mathcal{S}^{d-1}$, textual class embeddings $\{\mathbf{t}_k\}_{k=1}^K$, $\mathbf{t}_k \in \mathcal{S}^{d-1}$;
Fixed hyperparameters (no need for tuning): VLM temperature $\tau$, Number of neighbors in Laplacian term $m$, Iterations $T$.
\ENSURE Latent label assignments as final predictions $\mathbf{z}\in[0,1]^{N\times K}$.

\LineComment{Initialization}
\STATE Compute zero-shot logits $\hat{\mathbf{y}}=\{\hat{\mathbf{y}}_{i,k}\}_{k=1}^{K}, \hat{\mathbf{y}}_{i,k} = \frac{\exp(\mathbf{f}_i^\top \mathbf{t}_k/\tau)}{\sum_j \exp(\mathbf{f}_i^\top \mathbf{t}_j /\tau)}$; initialize assignments $\mathbf{z}\leftarrow \hat{\mathbf{y}}$. \COMMENT{See Eq. \eqref{eq:zero-shot}}
\STATE Initialize vMF parameters $\boldsymbol{\mu} = \{\boldsymbol{\mu}_k\}_{k=1}^K$ and $\boldsymbol{\kappa} = \{\kappa_k\}_{k=1}^K$; prior anchors $\boldsymbol{\mu}^\prime$ and $\boldsymbol{\kappa}^\prime$. \COMMENT{See Eq. \eqref{eq:init}}
\STATE Build a $m$-NN affinity graph $\mathbf{W} = [\omega_{i,j}] \in \mathcal{R}^{N \times N}$, where $\omega_{i,j}=  
\begin{cases}  
\small
\mathbf{f}_i^{\!\top}\mathbf{f}_j, & \text{if } \mathbf{f}_j \text{ is the } m \text{-nearest neighbors of } \mathbf{f}_i\\ 
0, & \text{otherwise}  
\end{cases}  $.
\STATE Compute class-level adjustment weights $\boldsymbol{\alpha} = \{\alpha_k\}_{k=1}^K$ with confidence $\boldsymbol{\lambda} = \{\lambda_k\}_{k=1}^{K}$. \COMMENT{See Eq. \eqref{eq:as-alpha} and \eqref{eq:as-lambda}}
\STATE Initialize instance-level adjustment weights $\boldsymbol{\gamma} = \{\gamma_i\}_{i=1}^N$. \COMMENT{See Eq. \eqref{eq:pl}}

\LineComment{BSUM-style iterative optimization}
\FOR{$t=1$ \textbf{to} $T$}
    \LineComment{Block update with respect to assignments $\mathbf{z}$}
    \STATE Compute vMF log-likelihood scores: 
    $
    \log p^{\mathrm{vMF}}_{i,k} \leftarrow \kappa_k\,\boldsymbol{\mu}_k^\top\mathbf{f}_i + \log C_d(\kappa_k).
    $ \COMMENT{See Eq. \eqref{eq:likelihood}}
    \STATE Update $\mathbf{z}$ by single-pass linear approximation:
    $
    \mathbf{z}_{i}^{(t+1)} = \frac{\hat{\mathbf{y}}_{i} \odot \exp(\log \mathbf{p}_{i}^{\text{vMF}} + \sum_{j} \omega_{ij} \mathbf{z}_j^{(t)})}{(\hat{\mathbf{y}}_{i} \odot \exp(\log \mathbf{p}_{i}^{\text{vMF}} + \sum_{j} \omega_{ij} \mathbf{z}_j^{(t)}))^\top \mathbbm{1}_K}.
    $
    \COMMENT{See Eq. \eqref{eq:update-z}}

    \STATE Update instance-level adjustment weights:
    $
    \gamma_i = 1-\frac{H(\mathbf{z}_i)}{\log K}.
    $

    \LineComment{Compute shrinkage strengths}
    \STATE Compute shrinkage strengths $\boldsymbol{\beta} = \{\beta_k\}_{k=1}^K$:
    $
    \small
    n_k \leftarrow \sum_{i=1}^N \gamma_i\, z_{i,k},
    \quad
    \beta_k \leftarrow \frac{n_k}{n_k+\alpha_k}.
    $
    \item[] \hspace{1.6em}\emph{(In practice, we use hard label counts for stability: $\small n_k=\sum_i \gamma_i\,\mathbb{I}[\arg\max_j z_{i,j}=k]$.)}

    \LineComment{Block update with respect to distribution parameters $(\boldsymbol{\mu},\boldsymbol{\kappa})$}
    \STATE Compute empirical estimates from standard MLE:
    $
    \mathbf{v}_k \leftarrow \frac{\sum_i \gamma_i\, z_{i,k}\,\mathbf{f}_i}{\sum_i \gamma_i\, z_{i,k}}.
    $
    \STATE Update $\boldsymbol{\mu}$ and $\mathcal{A}_d(\boldsymbol{\kappa})$ by closed-form anchor shrinkage:
    $
    \small
    \boldsymbol{\mu}_k = \frac{\beta_k \boldsymbol{v}_k + (1 - \beta_k) \boldsymbol{\mu}^\prime_k}{\| \beta_k \boldsymbol{v}_k + (1 - \beta_k) \boldsymbol{\mu}^\prime_k \|}, \,
    \mathcal{A}_d(\kappa_k) = \| \beta_k \boldsymbol{v}_k + (1 - \beta_k) \boldsymbol{\mu}^\prime_k \|,
    $ \COMMENT{See Eq. \eqref{eq:update-m-real}}
    \STATE Inversely estimate $\boldsymbol{\kappa}$ using Banerjee approximation: $\kappa_k = \mathcal{A}_d^{-1}(\kappa_k)$. \COMMENT{See Eq. \eqref{eq:banerjee}}
\ENDFOR
\STATE \textbf{return} $\mathbf{z}$.
\end{algorithmic}
\end{algorithm}

In implementation, we modify the shrinkage strength $\beta_k$ by replacing the soft assignment predictions with hard ones on the vertices of the probability simplex $\Delta_K$ following StatA \cite{stata}, which is:
\begin{equation}
    \beta_k 
    \approx \frac{\sum_{i=1}^N\mathbbm{1}\left[k=\operatorname{argmax}_r \gamma_{i,k} z_{i,r}\right]}{\sum_{i=1}^N\mathbbm{1}\left[k=\operatorname{argmax}_r \gamma_{i,k} z_{i,r}\right]+\alpha_k}.
\end{equation}
This design has been proven to be more robust in practice, with experimental results shown in App. \ref{app:ana-more}.

\paragraph{Complexity analysis.}
We provide a theoretical complexity analysis of our vMF-based \method{} and Gaussian-based state-of-the-art method StatA, further demonstrating the significant efficiency advantage of vMF mixtures that has been largely overlooked in prior works.
Formally, let $N, K, d$ denote the number of samples, classes, and feature dimensions, respectively. Let $m$ be the number of neighbors and $T$ the number of iterations.

\textit{(1) Graph construction: }
Both methods share the initial cost of constructing the affinity graph. This requires $O(N^2 d)$ for dense similarity computation, which dominates the pre-processing cost.

\textit{(2) Per-iteration cost: }
For \method{}, each iteration is dominated by two parts: (i) likelihood computation and parameter aggregation, which scales with $O(NKd)$; and (ii) sparse graph smoothing, which scales with $O(NmK)$. Thus, the per-iteration complexity is $O(NK(d+m))$.
In contrast, StatA incurs higher costs due to two factors: (i) it performs $L_G$ inner loops for assignment updates (typically $L_G=5$), inflating the smoothing cost to $O(L_G NmK)$; and (ii) its parameter update involves computing per-class covariance matrices, adding an $O(Kd^2)$ term.

\textit{(3) Total cost: }
Excluding the shared graph construction, the total asymptotic complexities are:
\begin{equation}
    \mathcal{O}_{\text{\method{}}} \approx T \cdot NK(d+m) \quad \text{vs.} \quad \mathcal{O}_{\text{StatA}} \approx T \cdot (NKd + L_G NmK + Kd^2).
\end{equation}
\method{} achieves superior efficiency by avoiding the quadratic complexity $O(d^2)$ in parameter updates and eliminating the need for inner assignment loops.

\paragraph{Convergence guarantee of optimization algorithm.}
Our algorithm can be analyzed within the standard BSUM framework. 
Let $\mathcal{L}(z,\Theta)$ denote the objective and $\Theta=\{\mu_k,\kappa_k\}_{k=1}^K$. 
Given that:
\textit{(1) Affinity matrix $W$ is PSD;
(2) Anchor weights $\alpha_k$ and $\gamma_i > 0$;
(3) Fixing $z$, the parameter subproblem in $\Theta$ has a unique minimizer.}

Then each outer iteration of our algorithm is a valid BSUM step:

\begin{itemize}
    \item \textbf{z-update}: The only non-convex part is the Laplacian term. Since $W$ is PSD, this term is concave and can be upper-bounded by its first-order Taylor expansion at the current iteration (proofs in App. \ref{app:proof-e}). Minimizing this tight surrogate yields Eq. \eqref{eq:update-z}, i.e., the update is an exact minimizer of the majorized subproblem.

    \item \textbf{$\Theta$ update}: Fixing $z$, our objective is strictly convex, and Eqs. \eqref{eq:update-m}-\eqref{eq:update-m-real} are the corresponding closed-form exact updates. This step further decreases the objective.
\end{itemize}

Therefore, each outer iteration satisfies
\begin{equation}
    \mathcal{L} (z^{(t+1)},\Theta^{(t+1)}) \le \mathcal{L} (z^{(t)},\Theta^{(t)}).
\end{equation}
Since $\mathcal{L}_a$ is lower-bounded, the objective sequence is \textbf{monotonically non-increasing and thus convergent}. 
And, every limit point of the iterate sequence is a \textbf{coordinate-wise minimum}. 
And, as BSUM doe not require solving subproblems to full convergence at each outer iteration, \textbf{single-pass z-update preserves convergence guarantee}.

\section{Revisiting StatA}
\label{app:stata}

StatA \cite{stata} first discuss transduction under the realistic class imbalanced scenarios at test-time.
Unlike our \method{} which operates on the unit hypersphere, StatA assumes that the visual feature representations $\{\mathbf{f}_i\}_{i=1}^N$ follow a Gaussian Mixture Model (GMM) in the Euclidean space. 
To handle performance degradation, it introduces a KL-divergence-based regularization term acting as a statistical anchor. 
The optimization objective is to minimize the PLE objective penalized by the deviation from zero-shot prior anchors leveraged from text $\{\mathcal{N}(\boldsymbol{\mu}_k^\prime, \boldsymbol{\Sigma}_k^\prime)\}_{k=1}^K$:
\begin{equation}
\label{eq:stata_obj}
\begin{aligned}
        \mathcal{L}_{\text{StatA}}(\mathbf{z},\theta) 
    &= \underbrace{-\sum_{i=1}^N\mathbf{z}_i^\top\log \mathcal{N}(f_i | \boldsymbol{\mu}_k, \boldsymbol{\Sigma}_k)}_{\text{negative log-likelihood (NLL)}}
      + \mathcal{R}(\mathbf{z}) 
      + \alpha \underbrace{\sum_{k=1}^K \text{KL}\big(\mathcal{N}(\boldsymbol{\mu}_k^\prime, \boldsymbol{\Sigma}_k^\prime) \| \mathcal{N}(\boldsymbol{\mu}_k, \boldsymbol{\Sigma}_k)\big)}_{\text{statistical anchor}}, \\
\text{where} \quad
\mathcal{R}(\mathbf{z}) &= \underbrace{-\sum_{i,j}\omega_{ij}\mathbf{z}_i^\top\mathbf{z}_j}_{\text{Laplacian reg.}}
+ \underbrace{\sum_{i=1}^N\mathrm{KL}(\mathbf{z}_i\|\hat{\mathbf{y}}_i)}_{\text{text supervision}}.
\end{aligned}
\end{equation}

where $\theta = \{\boldsymbol{\mu}_k, \boldsymbol{\Sigma}_k\}_{k=1}^K$ are the mixture parameters, and $\alpha > 0$ is a hyperparameter as anchor term weight. 
The anchor distribution is initialized as: $\boldsymbol{\mu}_k^{\prime}=\mathbf{t}_k$ and $\boldsymbol{\Sigma}^{\prime}=\mathrm{Diag}\left(\frac{\sum_{i,k}\hat{y}_{i,k}(\mathbf{f}_i-\boldsymbol{\mu}_k)(\mathbf{f}_i-\boldsymbol{\mu}_k)^\top}{\sum_{i,k}\hat{y}_{i,k}}\right)$.

The algorithm is performed via BSUM-style iterative optimization, which alternates between assignments $\mathbf{z}$ block and distribution parameters $\theta$ block:
\begin{itemize}
    \item \textbf{Assignment update:} Fixing $\theta$, the assignments $z_{i,k}$ are updated based on the Gaussian likelihoods $p_{i,k} \propto \frac{1}{\sqrt{|\boldsymbol{\Sigma}_k|}} \exp \left( -\frac{1}{2} (\mathbf{f}_i - \boldsymbol{\mu}_k)^\top \boldsymbol{\Sigma}_k^{-1} (\mathbf{f}_i - \boldsymbol{\mu}_k) \right)$, similar rule as Eq. \eqref{eq:update-z}.
    \item \textbf{Parameter update:} Fixing $z$, the parameters are updated in closed form. Crucially, the update for the mean $\boldsymbol{\mu}_k$ and covariance $\boldsymbol{\Sigma}_k$ exhibits an \emph{anchor shrinkage} behavior:
    \begin{equation}
    \label{eq:stata_shrinkage}
    \begin{aligned}
        \boldsymbol{\mu}_k = \beta_k \mathbf{v}_k + (1 - \beta_k) \boldsymbol{\mu}_k^\prime, \quad \boldsymbol{\Sigma}_k = \beta_k T_k + (1 - \beta_k)(\Sigma^\prime + \text{Diag}((\boldsymbol{\mu}^\prime_k - \boldsymbol{\mu}_k)^2)),
   \end{aligned} 
    \end{equation}
    with empirical estimates as:
    \begin{equation}
    \boldsymbol{v}_k = \frac{\sum_{i=1}^{N} z_{i,k} \mathbf{f}_i}{\sum_{i=1}^{N} z_{i,k}}; \quad \boldsymbol{T}_k = \frac{\sum_{i=1}^{N} z_{i,k} \text{Diag}((\mathbf{f}_i - \boldsymbol{\mu}_k)^2)}{\sum_{i=1}^{N} z_{i,k}}.
    \end{equation}
    Here, $n_k = \sum_{i} z_{i,k}$ is the soft count, and $\beta_k=\frac{n_k}{n_k+\alpha} \approx \frac{\sum_i\mathbbm{1}\left[k=\operatorname{argmax}_r z_{i,r}\right]}{\sum_i\mathbbm{1}\left[k=\operatorname{argmax}_r z_{i,r}\right]+\alpha} \in [0,1]$ denotes the shrinkage strength.
\end{itemize}

\textbf{Limitations.} As shown in Eq.~\eqref{eq:stata_shrinkage}, the shrinkage strength $\beta_k$, controlled by soft count $n_k$ and anchor weight $\alpha$, balances the trade-off between empirical estimate $\mathbf{v}_k$ and the prior anchor $\boldsymbol{\mu}_k^\prime$. However, anchor weight $\alpha$ is a \emph{fixed scalar hyperparameter} shared across all classes and samples. This implies a \emph{static shrinkage strength} modeling that ignores the varying reliability of different classes (e.g., outliers vs. effective classes) or instances, rendering StatA suboptimal in both accuracy and robustness under realistic class imbalance, as discussed in Sec. \ref{sec:intro}.

\section{Experimental Details}
\label{app:exp-detail}
\subsection{Datasets}
We evaluate our proposed \method{} and other baselines on 11 widely-used public datasets for fine-grained visual classification. 
These datasets cover a diverse range of domains, including generic objects, scenes, textures, satellite imagery, and specific fine-grained categories. 
Specifically, the benchmark includes: ImageNet~\cite{imagenet}, SUN397~\cite{sun397}, Aircraft~\cite{aircraft}, EuroSAT~\cite{eurosat}, StanfordCars~\cite{stanfordcars}, Food101~\cite{food101}, Pets~\cite{pets}, Flowers102~\cite{flowers102}, Caltech101~\cite{caltech101}, DTD~\cite{dtd}, and UCF101~\cite{ucf101}. 
Detailed statistics for these datasets are provided in Tab.~\ref{tab:dataset}.

\subsection{Baselines}
We compare our \method{} against a comprehensive set of baselines, which are categorized into: (1) \textbf{Transductive methods}: EM-Dirichlet (Dirichlet)~\cite{em-dirichlet}, ZLaP~\cite{zlap}, GDA-CLIP~\cite{gda-clip}, TransCLIP~\cite{transclip}, ADAPT~\cite{adapt}, and StatA~\cite{stata}\footnote{Since StatA and ADAPT are also evaluated under online settings, we provide their corresponding results in the main manuscript.}; (2) \textbf{Online TTA methods}: TENT~\cite{Tent}, TDA~\cite{tda}, DMN~\cite{dmn}, and OGA~\cite{oga}. 
We also incorporate another TTA method MTA~\cite{mta} that requires per-image augmentation. 

The details and experimental configurations of these baselines are listed as below: 
\begin{itemize}
    \item \textbf{ZLaP (CVPR'24)}: introduces a non-parametric framework that leverages the graph structure of unlabeled data via label propagation, utilizing geodesic distances on the data manifold to address the modality gap in VLMs. The number of nearest neighbors $m$ is set to 5, scale parameter of RBF kernel function $\gamma$ is set to 5.0, and the clamping factor $\alpha$ is fixed at 0.3. We don't specifically scale the similarity matrix.
    
    \item \textbf{EM-Dirichlet (CVPR'24)}: frames transduction on the unit simplex by modeling class-conditional feature distributions with a Dirichlet law, solving the MLE problem via a hyperparameter-free Block Majorization-Minimization algorithm. The temperature $T$ in the probabilities is fixed to 30.
    
    \item \textbf{GDA-CLIP (ICLR'24)}: applies Gaussian Discriminant Analysis (GDA) by assuming a shared covariance matrix for class features, estimating parameters via closed-form solutions and ensembling them with zero-shot logits. The ensemble weight is set to $\alpha$ by default.
    
    \item \textbf{TransCLIP (NeurIPS'24)}: formulates adaptation as a regularized MLE with a text-guided KL-divergence penalty, employing an iterative optimization procedure that decouples sample assignments and parameter updates. Here, text-guided KL divergence penalty $\lambda$ is set to 1, and the number of nearest neighbors $m$ is set to 3.
    
    \item \textbf{ADAPT (NeurIPS'25)}: presents a backpropagation-free method that reframes adaptation as Gaussian probabilistic inference with closed-form updates, utilizing a knowledge bank to efficiently support online and transductive settings. We set the bank size $L$ to 12, and the momentum coefficient for parameter update $\alpha$ to 0.9.
    
    \item \textbf{StatA (CVPR'25)}: addresses realistic scenarios with variable effective classes by employing a statistical anchor regularization within a Gaussian Mixture Model (GMM) to dynamically constrain features near text-derived priors. The anchor term weight $\alpha$ is set to 1, with the number of nearest neighbors $m$ set to 3. We use hard $\beta_k$ by default.

    \item \textbf{TENT (ICLR'21)}: adapts the model by minimizing the Shannon entropy of predictions on target data, specifically updating the affine parameters of Batch Normalization layers to align internal statistics online.
    We set the learning rate to 1e-3, and perform 5 adaptation steps for each batch.

    \item \textbf{TDA (CVPR'24)}: utilizes a lightweight key-value cache system with entropy-based filtering and introduces a negative cache mechanism to explicitly penalize unlikely classes using negative pseudo-labeling. All the configuration is kept the same as those set in the original paper on ImageNet.
    
    \item \textbf{DMN (CVPR'24)}: integrates a static memory for pre-trained knowledge and a dynamic memory for historical test features, employing a cross-attention strategy to refine decision boundaries based on temporal context. All the configuration is kept the same as those set in the original paper on ImageNet.

    \item \textbf{OGA (CVPRW'25)}: reframes online adaptation as a Maximum A Posteriori (MAP) estimation problem using multivariate Gaussian distributions and zero-shot priors to calibrate predictions without gradient backpropagation. THe memory update threshold $\tau$ is set to 0.01, with cache memory capacity set to 8.
    
    \item \textbf{MTA (CVPR'24)}: proposes a training-free strategy that leverages MeanShift on augmented views to identify distribution modes, jointly optimizing a learnable inlierness score to robustly aggregate visual information. All the configuration is kept the same as those set in the original paper on ImageNet. 
\end{itemize}

\subsection{Prompts}
Following \cite{zhang2022tip}, we adopt default, fixed prompt templates to initialize text embeddings for all methods, as illustrated in Tab. \ref{tab:dataset}. 


\begin{table*}[htbp]
\centering
\caption{Dataset information and prompt templates.}
\label{tab:dataset}

\resizebox{0.9\linewidth}{!}{
\begin{tabular}{cccccc}
\toprule
 Name & Other name & \# $K$ & \# $N$ & Description & Prompt template \\
\midrule
SUN397     & SUN397        & 397  & 19{,}850 & Scenes classification          & \texttt{"a photo of a [ ]."} \\
Aircraft   & FGVCAircraft  & 100  & 3{,}333  & Aircraft classification        & \texttt{"a photo of a [ ], a type of aircraft."} \\
EuroSAT    & EuroSAT       & 10   & 8{,}100  & Satellite images classification& \texttt{"a centered satellite photo of [ ]."} \\
StanfordCars  & Cars  & 196  & 8{,}041  & Cars classification            & \texttt{"a photo of a [ ]."} \\
Food101    & Food101       & 101  & 30{,}300 & Food classification            & \texttt{"a photo of [ ], a type of food."} \\
Pets       & OxfordPets    & 37   & 3{,}669  & Pets classification            & \texttt{"a photo of [ ], a type of pet."} \\
Flowers102 & OxfordFlowers & 102  & 2{,}463  & Flowers classification         & \texttt{"a photo of a [ ], a type of flower."} \\
Caltech101 & Caltech101    & 101  & 2{,}465  & Objects classification         & \texttt{"a photo of a [ ]."} \\
DTD        & DTD           & 47   & 1{,}692  & Textures classification        & \texttt{"[ ] texture."} \\
UCF101     & UCF101        & 101  & 3{,}783  & Actions classification         & \texttt{"a photo of a person doing [ ]."} \\
ImageNet   & ImageNet-1K      & 1000 & 50{,}000 & Objects classification         & \texttt{"a photo of a [ ]."} \\
\bottomrule
\end{tabular}
}
\end{table*}

\subsection{Data Sampler}
\label{app:data_sampler}
In this section, we describe the sampling strategies for constructing realistic test-time scenarios, following the protocols in StatA~\cite{stata}.

\paragraph{Batch adaptation.} To simulate realistic class sparsity where the label distribution within a batch is partial, we construct test batches with a limited number of effective classes. Specifically, given a batch size $B$ and a total of $K$ classes, we first determine the number of effective classes $K_{eff}$, which is either fixed or uniformly sampled from $[K_{eff}^{\min}, K_{eff}^{\max}]$. We then randomly select a subset of classes $\mathcal{C}_{batch}$ with size $K_{eff}$ and aggregate all their corresponding samples. The final test batch is formed by randomly sampling $B$ instances without replacement from this restricted pool, ensuring that the batch contains only a fraction of the total categories.

\paragraph{Online adaptation.} We generate non-i.i.d. data streams using a Dirichlet-based framework to evaluate robustness against temporal correlation \cite{yuan2023robust}. The data stream is divided into slots, where the allocation of each class across slots follows a Dirichlet distribution $\text{Dir}(\xi \cdot \mathbf{1})$. The scalar $\xi$ controls the correlation intensity: large values approximate an i.i.d. stream, while small values concentrate classes into fewer slots to create high temporal correlation. Additionally, we consider a separate sequential setting (simulating $\xi \to 0$), where classes are randomly permuted and all samples from a class appear contiguously before transitioning to the next, representing the most extreme temporal correlation.

\subsection{Implementation Details}
\label{app:implementation}
Unless otherwise specified, we employ CLIP ViT-B/16 as the default backbone and evaluate performance using the Top-1 accuracy. 
Consistent with our black-box assumption, we utilize a fixed set of hyperparameters across all experiments without per-task tuning: we set the nearest neighbors $m=3$ and the number of iterations to 10, while inheriting the temperature parameter $\tau$ directly from pre-trained VLM. 
For stability and robustness, we employ hard assignments for the shrinkage strength $\beta_k$, and dynamically update the instance-level weights $\gamma_i$ at each iteration. 
All experiments are conducted on a single NVIDIA RTX 4090 24GB GPU. 
To ensure statistical reliability given the stochastic data sampling, all reported results represent the average of 1,000 independent runs for batch adaptation and 100 runs for online adaptation, initialized with a fixed random seed of 1. 

\section{Generality of KL-Anchored PLE for Exponential Families}
\label{app:expfam}
In this section, we provide a formal proof that KL-based distribution anchor in the penalized likelihood estimation (PLE) formulation, i.e., $\mathcal{R}(\mathbf{M})$ in Eq.~\eqref{eq:ple-general}, yields an \emph{adaptive shrinkage} behavior that enables convex combination update in the \emph{mean-parameter space} for any (regular) exponential-family class-conditional mixture model.

\subsection{Problem Setup}
\label{app:expfam_setup}
Consider a $K$-class latent-variable mixture model with unlabeled samples $\{x_i\}_{i=1}^N$ and soft assignments
$\mathbf{z}_{i,k}\in[0,1]$ satisfying $\sum_{k=1}^K \mathbf{z}_{i,k}=1$.
For each class $k\in\{1,\dots,K\}$, assume the class-conditional density function belongs to a \emph{regular minimal exponential family}:
\begin{equation}
p(x\mid \eta)=h(x)\exp\!\big(\eta^\top T(x)-A(\eta)\big),
\label{eq:expfam_def}
\end{equation}
where $\eta$ is the natural parameter, $T(x)$ is the sufficient statistic, and $A(\eta)$ is the log-partition function.
For a regular minimal exponential family, $A(\eta)$ is strictly convex and the mapping $\nabla A$ is one-to-one, relating natural parameters to \emph{mean parameters}
$\mu=\mathbb{E}_{\eta}[T(X)]=\nabla A(\eta)$.

Given soft assignments, define the class-wise soft counts and soft sufficient-statistic sums:
\begin{equation}
n_k=\sum_{i=1}^{N} \mathbf{z}_{i,k},
\qquad
S_k=\sum_{i=1}^{N} \mathbf{z}_{i,k}\, T(x_i).
\label{eq:soft_stats}
\end{equation}

Let $q_k(x)=p(x\mid \eta^\prime_k)$ denote a fixed \emph{anchor} distribution for class $k$.
Conditioned on $\{\mathbf{z}_{i,k}\}$, the M-step minimizes the following KL-anchored PLE objective over $\{\eta_k\}_{k=1}^K$:
\begin{equation}
\mathcal{L}(\{\eta_k\})
=
-\sum_{i=1}^{N}\sum_{k=1}^{K} \mathbf{z}_{i,k}\log p(x_i\mid \eta_k)
+
\alpha \sum_{k=1}^{K} \mathrm{KL}\!\Big(q_k \,\Vert\, p(\cdot\mid \eta_k)\Big),
\qquad \alpha>0.
\label{eq:mstep_obj}
\end{equation}

\subsection{Closed-Form KL for Exponential Families}
\label{app:expfam_kl_lemma}

\begin{lemma}[KL divergence within an exponential family]
\label{lem:expfam_kl}
Let $q(\cdot)=p(\cdot\mid \eta^\prime)$ and $p(\cdot)=p(\cdot\mid \eta)$ be members of the same exponential family~\eqref{eq:expfam_def}.
Then
\begin{equation}
\mathrm{KL}\!\Big(p(\cdot\mid \eta^\prime) \,\Vert\, p(\cdot\mid \eta)\Big)
=
A(\eta)-A(\eta^\prime)-(\eta-\eta^\prime)^\top \mu^\prime,
\qquad
\mu^\prime=\mathbb{E}_{\eta^\prime}[T(X)]=\nabla A(\eta^\prime).
\label{eq:expfam_kl}
\end{equation}
\end{lemma}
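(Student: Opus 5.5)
The plan is to compute the KL divergence directly from its definition as an expectation under $q=p(\cdot\mid\eta')$ of the log-density ratio, and then use the mean-parameter identity $\mathbb{E}_{\eta'}[T(X)]=\nabla A(\eta')$ stated in the setup of App.~\ref{app:expfam_setup}. Concretely, I will write
\begin{equation*}
\mathrm{KL}\big(p(\cdot\mid\eta')\,\Vert\,p(\cdot\mid\eta)\big)=\mathbb{E}_{\eta'}\!\left[\log\frac{p(X\mid\eta')}{p(X\mid\eta)}\right].
\end{equation*}
Then I will substitute the form \eqref{eq:expfam_def} for both densities. The base measure $h(x)$ appears in numerator and denominator and cancels pointwise wherever $h(x)>0$, which is the common support of both densities. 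This leaves the integrand $(\eta'-\eta)^\top T(X)-A(\eta')+A(\eta)$.

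The next step is to take the expectation term by term. The log-partition values are constants, and linearity gives $(\eta'-\eta)^\top\mathbb{E}_{\eta'}[T(X)]=(\eta'-\eta)^\top\mu'$. Rearranging signs yields $A(\eta)-A(\eta')-(\eta-\eta')^\top\mu'$, which is exactly \eqref{eq:expfam_kl}. As a sanity check, I will note that this expression is the Bregman divergence of the strictly convex $A$ between $\eta$ and $\eta'$. It is therefore nonnegative and vanishes iff $\eta=\eta'$, consistent with minimality.

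The only step needing care is justifying $\mathbb{E}_{\eta'}[T(X)]=\nabla A(\eta')$ together with the finiteness of that expectation. I will argue this from regularity: the natural parameter space is open and $\eta'$ lies in its interior. Hence $A(\eta)=\log\int h(x)e^{\eta^\top T(x)}dx$ is differentiable there, and differentiation under the integral sign is permitted by dominated convergence, since the moment generating function of $T$ is finite in a neighborhood of $\eta'$. Differentiating $\int h\,e^{\eta^\top T-A(\eta)}=1$ then gives the identity. The same finiteness also ensures the KL integral is well defined, so no $\infty-\infty$ issue arises when splitting the expectation. This regularity justification is the main obstacle; the rest is routine algebra.
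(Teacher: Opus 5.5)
Your proposal is correct and follows the same route as the paper. Both expand the log-density ratio under $p(\cdot\mid\eta')$, cancel $\log h$, and take expectations using $\mathbb{E}_{\eta'}[T(X)]=\mu'$. Your extra justification of $\mu'=\nabla A(\eta')$ via regularity, and your Bregman-divergence sanity check, go beyond the paper, which simply takes that identity from the setup.
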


\begin{proof}
By definition,
\begin{align}
\mathrm{KL}\!\Big(p(\cdot\mid \eta^\prime) \,\Vert\, p(\cdot\mid \eta)\Big)
&=
\mathbb{E}_{\eta^\prime}\!\big[\log p(X\mid \eta^\prime)-\log p(X\mid \eta)\big].
\end{align}
Using~\eqref{eq:expfam_def}, we have
\begin{align}
\log p(X\mid \eta^\prime)-\log p(X\mid \eta)
&=
\big((\eta^\prime)^\top T(X)-A(\eta^\prime)\big)-\big(\eta^\top T(X)-A(\eta)\big),
\end{align}
since $\log h(X)$ cancels.
Taking expectation under $p(\cdot\mid\eta^\prime)$ yields
\begin{align}
\mathrm{KL}\!\Big(p(\cdot\mid \eta^\prime) \,\Vert\, p(\cdot\mid \eta)\Big)
&=
(\eta^\prime-\eta)^\top \mathbb{E}_{\eta^\prime}[T(X)] + A(\eta)-A(\eta^\prime)
\\
&=
A(\eta)-A(\eta^\prime)-(\eta-\eta^\prime)^\top \mu^\prime,
\end{align}
where $\mu^\prime=\mathbb{E}_{\eta^\prime}[T(X)]=\nabla A(\eta^\prime)$.
\end{proof}

\subsection{KL Anchoring Implies Convex Combination in Mean-Parameter Space}
\label{app:expfam_main}

\begin{theorem}[KL-anchored M-step yields convex combination in mean space]
\label{thm:expfam_convex}
Assume each class-conditional model $p(\cdot\mid\eta_k)$ belongs to a regular minimal exponential family~\eqref{eq:expfam_def}.
Fixing soft assignments $\{\mathbf{z}_{i,k}\}$, the M-step objective~\eqref{eq:mstep_obj} is \emph{strictly convex} in each $\eta_k$ and admits a unique minimizer $\eta_k^\star$.
Moreover, the corresponding mean parameter satisfies
\begin{equation}
\nabla A(\eta_k^\star)
=
\frac{S_k+\alpha \mu^\prime_k}{n_k+\alpha},
\qquad
\mu^\prime_k=\nabla A(\eta^\prime_k).
\label{eq:mean_update_closed_form}
\end{equation}
Equivalently, for $n_k>0$ with empirical mean parameter $\hat{\mu}_k=S_k/n_k$,
\begin{equation}
\nabla A(\eta_k^\star)
=
\beta_k \hat{\mu}_k + (1-\beta_k)\mu^\prime_k,
\qquad
\beta_k=\frac{n_k}{n_k+\alpha}\in[0,1].
\label{eq:mean_update_convex}
\end{equation}
\end{theorem}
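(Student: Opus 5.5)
The objective separates across classes: for fixed assignments, $\mathcal{L}(\{\eta_k\})=\sum_k \mathcal{L}_k(\eta_k)$. So it suffices to study one class $k$ at a time. First I will substitute the exponential-family form \eqref{eq:expfam_def} into the negative log-likelihood:
\[
-\sum_i \mathbf{z}_{i,k}\log p(x_i\mid\eta_k) = -\eta_k^\top S_k + n_k A(\eta_k) - \sum_i \mathbf{z}_{i,k}\log h(x_i).
\]
Then I will invoke Lemma~\ref{lem:expfam_kl} for the anchor term,
\[
\alpha\,\mathrm{KL}(q_k\Vert p(\cdot\mid\eta_k)) = \alpha\big(A(\eta_k)-A(\eta_k')-(\eta_k-\eta_k')^\top\mu_k'\big).
\]
Up to additive constants independent of $\eta_k$, this gives
\[
\mathcal{L}_k(\eta_k) = (n_k+\alpha)A(\eta_k) - \eta_k^\top(S_k+\alpha\mu_k') + \mathrm{const}.
\]

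\textbf{Strict convexity.} Since $n_k\ge 0$ and $\alpha>0$, the coefficient $n_k+\alpha$ is strictly positive. For a regular minimal family, $A$ is strictly convex. So $\mathcal{L}_k$ is a strictly convex function plus a linear term, hence strictly convex, and any minimizer is unique.

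\textbf{Stationarity.} Setting the gradient to zero gives $(n_k+\alpha)\nabla A(\eta_k)=S_k+\alpha\mu_k'$, which is exactly \eqref{eq:mean_update_closed_form}. By convexity, this first-order condition is sufficient for a minimum.

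\textbf{Existence (main obstacle).} The target $m_k=(S_k+\alpha\mu_k')/(n_k+\alpha)$ must lie in the image of $\nabla A$, i.e.\ the interior of the convex support / mean-parameter space $\mathcal{M}^\circ$. For a regular minimal family, $\nabla A$ is a bijection from the natural parameter space onto $\mathcal{M}^\circ$ (standard, e.g.\ Wainwright--Jordan). The plan has three pieces:
\begin{itemize}
\item Each $T(x_i)$ lies in the closure of $\mathcal{M}$, so $\hat{\mu}_k=S_k/n_k$ lies in the closed convex hull.
\item The anchor mean $\mu_k'=\nabla A(\eta_k')$ lies in $\mathcal{M}^\circ$.
\item For a convex set $C$, a convex combination $\lambda x + (1-\lambda)y$ with $x\in\overline{C}$, $y\in C^\circ$ and $\lambda<1$ lies in $C^\circ$.
\end{itemize}
The weight on $\mu_k'$ is $\alpha/(n_k+\alpha)>0$, so $m_k\in\mathcal{M}^\circ$. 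The unique $\eta_k^\star=(\nabla A)^{-1}(m_k)$ therefore exists. This is where $\alpha>0$ is essential: with $\alpha=0$ and $n_k=0$, or with $\hat\mu_k$ on the boundary, the MLE may not exist.

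\textbf{Convex-combination form.} For $n_k>0$, divide the numerator and denominator of $m_k$ to get
\[
m_k=\frac{n_k}{n_k+\alpha}\hat\mu_k+\frac{\alpha}{n_k+\alpha}\mu_k',
\]
which is \eqref{eq:mean_update_convex} with $\beta_k=n_k/(n_k+\alpha)\in[0,1)$. This lies within the claimed range $[0,1]$.
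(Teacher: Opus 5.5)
Your proposal is correct and follows the paper's proof essentially step for step. The steps are the same: the class-wise decomposition, substituting the exponential-family form into the NLL, applying Lemma~\ref{lem:expfam_kl} to the anchor, collecting terms into $(n_k+\alpha)A(\eta_k)-(S_k+\alpha\mu_k')^\top\eta_k$, and reading off strict convexity and the first-order condition.

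The one place you go beyond the paper is the existence step, and it is a genuine improvement. The paper passes from strict convexity straight to ``admits a unique minimizer,'' which only gives uniqueness. Your argument is what actually guarantees that $\eta_k^\star$ exists: $(S_k+\alpha\mu_k')/(n_k+\alpha)$ lies in the interior of the mean-parameter space by the line-segment principle, because the interior point $\mu_k'$ carries weight $\alpha/(n_k+\alpha)>0$. This uses the mild standing assumption that each $x_i$ lies in the support of the base measure.
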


\begin{proof}
The objective~\eqref{eq:mstep_obj} decomposes across classes:
$\mathcal{L}(\{\eta_k\})=\sum_{k=1}^K \mathcal{L}_k(\eta_k)+\text{const}$,
where, using $\log p(x\mid\eta)=\log h(x)+\eta^\top T(x)-A(\eta)$,
\begin{align}
-\sum_{i=1}^N \mathbf{z}_{i,k}\log p(x_i\mid\eta_k)
&=
-\sum_{i=1}^N \mathbf{z}_{i,k}\big(\log h(x_i)+\eta_k^\top T(x_i)-A(\eta_k)\big)
\\
&=
-\eta_k^\top \sum_{i=1}^N \mathbf{z}_{i,k} T(x_i) + \Big(\sum_{i=1}^N \mathbf{z}_{i,k}\Big) A(\eta_k) + \text{const}
\\
&=
- S_k^\top \eta_k + n_k A(\eta_k) + \text{const}.
\label{eq:nll_class}
\end{align}
For the KL anchor term, apply Lemma~\ref{lem:expfam_kl} with $q_k(\cdot)=p(\cdot\mid\eta^\prime_k)$ and $p(\cdot\mid\eta_k)$:
\begin{equation}
\mathrm{KL}\!\Big(q_k \,\Vert\, p(\cdot\mid\eta_k)\Big)
=
A(\eta_k)-A(\eta^\prime_k)-(\eta_k-\eta^\prime_k)^\top \mu^\prime_k,
\qquad \mu^\prime_k=\nabla A(\eta^\prime_k).
\label{eq:kl_class}
\end{equation}
Combining~\eqref{eq:nll_class} and~\eqref{eq:kl_class} (and dropping constants independent of $\eta_k$), we obtain
\begin{align}
\mathcal{L}_k(\eta_k)
&=
- S_k^\top \eta_k + n_k A(\eta_k)
+
\alpha\Big(A(\eta_k)-(\eta_k)^\top \mu^\prime_k\Big)
+ \text{const}
\\
&=
-(S_k+\alpha\mu^\prime_k)^\top \eta_k + (n_k+\alpha) A(\eta_k) + \text{const}.
\label{eq:class_obj_simplified}
\end{align}
Since $A(\eta)$ is strictly convex for a regular minimal exponential family and $n_k+\alpha>0$, it follows that $\mathcal{L}_k(\eta_k)$ is strictly convex in $\eta_k$ and thus has a unique minimizer $\eta_k^\star$.

Taking the gradient of~\eqref{eq:class_obj_simplified} and setting it to zero yields the first-order optimality condition:
\begin{equation}
-(S_k+\alpha\mu^\prime_k) + (n_k+\alpha)\nabla A(\eta_k^\star)=0,
\end{equation}
which implies~\eqref{eq:mean_update_closed_form}.
When $n_k>0$, substituting $S_k=n_k\hat{\mu}_k$ into~\eqref{eq:mean_update_closed_form} gives~\eqref{eq:mean_update_convex} with $\beta_k=n_k/(n_k+\alpha)\in[0,1]$.
\end{proof}

\subsection{Corollaries for Realistic Test-Time Scenarios}
\label{app:expfam_cor}

Theorem~\ref{thm:expfam_convex} immediately yields two properties that are particularly relevant to realistic test-time settings with class imbalance and sparse effective label coverage.

\begin{corollary}[No deviation for outlier classes]
\label{cor:missing_class}
If a class is entirely outlier (absent classes) in the current batch in the sense that $n_k=0$, then the optimal mean parameter satisfies
\begin{equation}
\nabla A(\eta_k^\star)=\mu^\prime_k=\nabla A(\eta^\prime_k).
\end{equation}
Moreover, since $\nabla A$ is injective for a regular minimal exponential family, it follows that
\begin{equation}
\eta_k^\star = \eta^\prime_k.
\end{equation}
That is, the optimal parameter for an outlier (absent) class is \emph{exactly} its anchor parameter, and will not deviate due to the absence of evidence.
\end{corollary}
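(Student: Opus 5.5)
The plan is to specialize Theorem~\ref{thm:expfam_convex} to $n_k=0$ and then use injectivity of the mean map. First, $n_k=\sum_i \mathbf{z}_{i,k}=0$ with $\mathbf{z}_{i,k}\in[0,1]$ forces every $\mathbf{z}_{i,k}=0$. Hence
\[
S_k=\sum_i \mathbf{z}_{i,k}T(x_i)=0 .
\]
Substituting $n_k=0$ and $S_k=0$ into the general closed form~\eqref{eq:mean_update_closed_form} gives
\[
\nabla A(\eta_k^\star)=\frac{0+\alpha\mu_k'}{0+\alpha}=\mu_k'.
\]
This step is legitimate because $\alpha>0$, so the denominator $n_k+\alpha$ is nonzero. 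We use~\eqref{eq:mean_update_closed_form} rather than the convex-combination form~\eqref{eq:mean_update_convex}, since the latter requires $n_k>0$ to define $\hat\mu_k$.

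It is worth checking that the theorem still applies when $n_k=0$, because this is the only delicate point. With $n_k=0$, the class objective~\eqref{eq:class_obj_simplified} reduces to
\[
\mathcal{L}_k(\eta_k)=\alpha\bigl(A(\eta_k)-\eta_k^\top\mu_k'\bigr)+\text{const},
\]
which is $\alpha$ times $\mathrm{KL}(q_k\Vert p(\cdot\mid\eta_k))$ up to a constant. This function is strictly convex because $A$ is strictly convex and $\alpha>0$. Its minimum exists and is attained at $\eta_k=\eta_k'$, since the KL divergence is nonnegative and vanishes there. So existence of the minimizer does not depend on the data, and the first-order condition $\nabla A(\eta_k^\star)=\mu_k'$ is valid.

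For the second claim, we have $\nabla A(\eta_k^\star)=\mu_k'=\nabla A(\eta_k')$ by definition of $\mu_k'$. For a regular minimal exponential family, $\nabla A$ is one-to-one on the natural parameter space, as stated in App.~\ref{app:expfam_setup}; this follows from strict convexity of $A$. Therefore $\eta_k^\star=\eta_k'$. Alternatively, this is immediate from the observation above: $\eta_k'$ minimizes the KL divergence, and strict convexity makes the minimizer unique.

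I expect no real obstacle. The only point needing care is that the degenerate case $n_k=0$ is covered by~\eqref{eq:mean_update_closed_form} but not by the $\beta_k$-form, and the positivity of $\alpha$ is exactly what keeps the problem strictly convex and well-posed there.
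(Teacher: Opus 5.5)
Your proposal is correct and matches the paper's proof: substitute $n_k=0$ and $S_k=\mathbf{0}$ into the closed form \eqref{eq:mean_update_closed_form} to get $\nabla A(\eta_k^\star)=\mu_k'$, then use injectivity of $\nabla A$ to conclude $\eta_k^\star=\eta_k'$. Your extra check that a minimizer exists when $n_k=0$ is a worthwhile addition. Strict convexity alone does not guarantee that a minimum is attained, and the paper's theorem passes over this; you settle it because the KL term is nonnegative and vanishes at $\eta_k'$.
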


\begin{proof}
Setting $n_k=0$ and $S_k=\mathbf{0}$ in~\eqref{eq:mean_update_closed_form} gives
$\nabla A(\eta_k^\star)=\mu^\prime_k$.
Injectivity of $\nabla A$ in a regular minimal exponential family implies $\eta_k^\star=\eta^\prime_k$.
\end{proof}

\begin{corollary}[Bounded deviation for rare classes]
\label{cor:rare_class_bound}
For any class with $n_k>0$, the deviation from the anchor in mean-parameter space is bounded by
\begin{equation}
\big\|\nabla A(\eta_k^\star)-\mu^\prime_k\big\|
=
\beta_k \big\|\hat{\mu}_k-\mu^\prime_k\big\|
\le
\frac{n_k}{n_k+\alpha}\,\big\|\hat{\mu}_k-\mu^\prime_k\big\|.
\label{eq:rare_bound}
\end{equation}
Thus, when a class appears rarely (small $n_k$), its update magnitude away from the anchor is linearly shrunk by $\beta_k$.
\end{corollary}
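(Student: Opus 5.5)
This is an immediate consequence of Theorem~\ref{thm:expfam_convex}, so the plan is to read the bound directly off the convex-combination form \eqref{eq:mean_update_convex}. Fix a class $k$ with $n_k>0$. Then the empirical mean parameter $\hat{\mu}_k=S_k/n_k$ is well defined, and the theorem gives
\[
\nabla A(\eta_k^\star)=\beta_k\hat{\mu}_k+(1-\beta_k)\mu^\prime_k,
\qquad
\beta_k=\frac{n_k}{n_k+\alpha}.
\]

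The first step is to subtract $\mu^\prime_k$ from both sides and regroup. The right-hand side becomes
\[
\beta_k\hat{\mu}_k+(1-\beta_k)\mu^\prime_k-\mu^\prime_k=\beta_k(\hat{\mu}_k-\mu^\prime_k),
\]
which is exactly the anchor-offset form we want.

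The second step is to take the norm. By absolute homogeneity, and since $\beta_k\ge 0$ (because $n_k>0$ and $\alpha>0$), we have $\|\beta_k(\hat{\mu}_k-\mu^\prime_k)\|=\beta_k\|\hat{\mu}_k-\mu^\prime_k\|$. This yields the equality in \eqref{eq:rare_bound}. The inequality there is then trivial, holding with equality after substituting the definition of $\beta_k$.

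The final step is the interpretive remark. Since $\beta_k=n_k/(n_k+\alpha)<1$ and $\beta_k$ is increasing in $n_k$, with $\beta_k\to 0$ as $n_k\to 0^+$, a rare class's deviation from the anchor is a shrunk copy of its empirical deviation. In particular, the bound $\le\|\hat{\mu}_k-\mu^\prime_k\|$ always holds.

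There is essentially no obstacle. The only care points are to restrict to $n_k>0$ so that $\hat{\mu}_k$ is defined (the case $n_k=0$ is Corollary~\ref{cor:missing_class}), and to note that the conclusion holds for any norm on the mean-parameter space, since only homogeneity is used.
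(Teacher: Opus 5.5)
Your proposal is correct and is essentially the paper's proof: subtract $\mu^\prime_k$ from the convex-combination form \eqref{eq:mean_update_convex} of Theorem~\ref{thm:expfam_convex} to get $\nabla A(\eta_k^\star)-\mu^\prime_k=\beta_k(\hat{\mu}_k-\mu^\prime_k)$, then take norms using $\beta_k\ge 0$. Your side remarks are also accurate: the stated inequality is actually an equality, and the argument uses only absolute homogeneity, so it holds for any norm.
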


\begin{proof}
Equation~\eqref{eq:mean_update_convex} implies
$\nabla A(\eta_k^\star)-\mu^\prime_k=\beta_k(\hat{\mu}_k-\mu^\prime_k)$.
Taking norms on both sides yields~\eqref{eq:rare_bound}.
\end{proof}

\paragraph{Remarks.}
Theorem~\ref{thm:expfam_convex} formalizes a key mechanism behind KL-anchored PLE: regardless of the specific exponential-family choice, the anchor regularization transforms the M-step into a strictly convex problem whose first-order condition yields an adaptive shrinkage that enables convex combination in mean-parameter space.
This property is particularly beneficial in realistic test-time adaptation, where many classes may be missing or under-represented within a batch (e.g., $K_{\mathrm{eff}}\ll K$ and $\xi \rightarrow 0$), and naive maximum-likelihood estimation tends to overfit to locally biased statistics.

\section{Details of von Mises-Fisher Distributions}
\subsection{Introduction to vMF distributions}
Let $x\in\mathbb{R}^d$ be a random vector on the unit hypersphere $\mathbb{S}^{d-1}$, i.e.,
$\|x\|_2=1$. 
The probability density function of von Mises-Fisher (vMF) distribution on $\mathbb{S}^{d-1}$ is defined by
\begin{equation}
p(x;\boldsymbol{\mu},\kappa)
=
\mathcal{C}_d(\kappa)\exp\!\big(\kappa\,\boldsymbol{\mu}^\top x\big),
\qquad
\boldsymbol{\mu}\in\mathbb{S}^{d-1},\ \kappa\ge 0,
\label{eq:app_vmf_pdf}
\end{equation}
where $\boldsymbol{\mu}$ is the mean direction vector and $\kappa$ is the concentration scalar.
The normalization constant is given by
\begin{equation}
\label{eq:app-cd}
\mathcal{C}_d(\kappa)=\frac{\kappa^{\nu}}{(2\pi)^{d/2}I_{\nu}(\kappa)},\qquad \nu=\frac{d}{2}-1,
\end{equation}
where $I_\nu(\cdot)$ denotes the modified Bessel function of the first kind: $I_{\nu}\left(\kappa\right) =\left( \frac{1}{2} \kappa \right)^{\nu} \sum_{k=0}^{\infty} \frac{ \left( \frac{1}{4} \kappa^2 \right)^k }{k! \,\Gamma \left( \nu + k + 1 \right)}$, $\Gamma(\cdot)$ denotes Gamma function.

In our model, given a normalized visual feature embedding $\mathbf{f}_i\in\mathbb{S}^{d-1}$, we parameterize each class $k$
by vMF parameters $\mathcal{V}_k=(\boldsymbol{\mu}_k,\kappa_k)$. The log-likelihood is
\begin{equation}
\log p^{\mathrm{vMF}}_{i,k}
=
\log \mathcal{C}_d(\kappa_k) + \kappa_k \boldsymbol{\mu}_k^\top \mathbf{f}_i.
\label{eq:app_vmf_loglik}
\end{equation}

\paragraph{Numerical approximation.}
Taking logarithm of Eq. \eqref{eq:app-cd} yields
\begin{equation}
\log \mathcal{C}_d(\kappa)=\nu\log\kappa-\frac{d}{2}\log(2\pi)-\log I_\nu(\kappa).
\end{equation}
Since $I_\nu(\kappa)$ is transcendental, we adopt the large-$\kappa$ asymptotic\footnote{Since there are also other tighter approximations of $I_\nu(\cdot)$, such as the uniform expansion in \cite{govindarajan2024dino} and DLMF \cite{dlmf} Eq. 10.41.3: $I_\nu(\nu r)\sim\frac{e^{\nu\eta}}{\sqrt{2\pi\nu}\left(1+r^2\right)^{1/4}}\sum_{s=0}^\infty\frac{U_s(t)}{\nu^s},\quad\eta=\sqrt{1+r^2}+\log\frac{r}{1+\sqrt{1+r^2}}$, using our form is usually sufficient.}
\begin{equation}
\log I_\nu(\kappa)\approx \kappa-\frac{1}{2}\log(2\pi\kappa),
\end{equation}
which leads to
\begin{equation}
\log \mathcal{C}_d(\kappa)\approx \frac{d-1}{2}\log\kappa-\kappa-\frac{d-1}{2}\log(2\pi).
\end{equation}
In implementation, since our assignment update Eq. \eqref{eq:update-z} involves a softmax over classes; terms
independent of $\kappa$ (e.g., $-\frac{d-1}{2}\log(2\pi)$ for fixed $d$) cancel out\footnote{This is because softmax function is shift invariant.}. 
Therefore, we use the simplified approximation form
\begin{equation}
\ \log \mathcal{C}_d(\kappa)=\frac{d-1}{2}\log\kappa-\kappa. 
\end{equation}

\subsection{Derivation of KL divergence between two multivariate vMF distributions}
\label{app:proof-kl}
Consider two vMF multivariate distributions
$p(x)=\mathcal{V}_p(x;\boldsymbol{\mu}_p,\kappa_p)$ and $q(x)=\mathcal{V}_q(x;\boldsymbol{\mu}_q,\kappa_q)$ on $\mathbb{S}^{d-1}$.
The Kullback-Leibler (KL) divergence is
\begin{equation}
\mathrm{KL}(p\|q)
=
\mathbb{E}_{x\sim p}\!\left[\log p(x)-\log q(x)\right].
\label{eq:app_kl_def}
\end{equation}
Using the vMF log-density from~\eqref{eq:app_vmf_pdf}, we have
\begin{equation}
\log p(x)-\log q(x)
=
\log\frac{\mathcal{C}_d(\kappa_p)}{\mathcal{C}_d(\kappa_q)}
+
\kappa_p \boldsymbol{\mu}_p^\top x
-
\kappa_q \boldsymbol{\mu}_q^\top x.
\label{eq:app_kl_expand}
\end{equation}
Taking expectation w.r.t.\ $x\sim p$ yields
\begin{align}
\mathrm{KL}(p\|q)
&= E_{x \sim p} \left[ \log \frac{\mathcal{C}_d(\kappa_p)}{\mathcal{C}_d(\kappa_q)} + \kappa_p \boldsymbol{\mu}_p^\top x - \kappa_q \boldsymbol{\mu}_q^\top x \right] \\
&= \log \frac{\mathcal{C}_d(\kappa_p)}{\mathcal{C}_d(\kappa_q)} + E_{x \sim p} [ \kappa_p \boldsymbol{\mu}_p^\top x ] - E_{x \sim p} [ \kappa_q \boldsymbol{\mu}_q^\top x ] 
\\
&=
\log\frac{\mathcal{C}_d(\kappa_p)}{\mathcal{C}_d(\kappa_q)}
+
\kappa_p \boldsymbol{\mu}_p^\top \mathbb{E}_{p}[x]
-
\kappa_q \boldsymbol{\mu}_q^\top \mathbb{E}_{p}[x].
\label{eq:app_kl_expect}
\end{align}

A standard vMF identity states that the mean of $x\sim \mathrm{vMF}(\boldsymbol{\mu}_p,\kappa_p)$ is aligned with $\boldsymbol{\mu}_p$:
\begin{equation}
\mathbb{E}_{p}[x]
=
\mathcal{A}_d(\kappa_p)\,\boldsymbol{\mu}_p,
\qquad
\mathcal{A}_d(\kappa)
\triangleq
\frac{I_{\frac{d}{2}}(\kappa)}{I_{\frac{d}{2}-1}(\kappa)},
\label{eq:app_vmf_mean}
\end{equation}
Substituting~\eqref{eq:app_vmf_mean} into~\eqref{eq:app_kl_expect}, and using $\|\boldsymbol{\mu}_p\|_2=1$, we obtain
\begin{align}
\mathrm{KL}(p\|q)
&= 
\log \frac{\mathcal{C}_d(\kappa_p)}{\mathcal{C}_d(\kappa_q)} + \kappa_p \boldsymbol{\mu}_p^\top (\mathcal{A}_d(\kappa_p) \boldsymbol{\mu}_p) - \kappa_q \boldsymbol{\mu}_q^\top (\mathcal{A}_d(\kappa_p) \boldsymbol{\mu}_p) \\
&=
\log\frac{\mathcal{C}_d(\kappa_p)}{\mathcal{C}_d(\kappa_q)}
+
\kappa_p \mathcal{A}_d(\kappa_p)\, (\boldsymbol{\mu}_p^\top \boldsymbol{\mu}_p)
-
\kappa_q \mathcal{A}_d(\kappa_p)\, (\boldsymbol{\mu}_q^\top \boldsymbol{\mu}_p)
\\
&=
\log\frac{\mathcal{C}_d(\kappa_p)}{\mathcal{C}_d(\kappa_q)}
+
\kappa_p \mathcal{A}_d(\kappa_p)
-
\kappa_q \mathcal{A}_d(\kappa_p)\, \boldsymbol{\mu}_q^\top \boldsymbol{\mu}_p.
\label{eq:app_kl_vmf_closed}
\end{align}

In our objective in Eq. \eqref{eq:ple-ours}, the anchor term uses $\mathrm{KL}(\mathcal{V}_k^\prime\|\mathcal{V}_k)$ with
$\mathcal{V}_k^\prime=(\boldsymbol{\mu}_k^\prime,\kappa_k^\prime)$ (anchor) and $\mathcal{V}_k=(\boldsymbol{\mu}_k,\kappa_k)$ (empirical estimate). 
Applying~\eqref{eq:app_kl_vmf_closed} to $p=\mathcal{V}_k^\prime(\boldsymbol{\mu}_k^\prime,\kappa_k^\prime)$ and $q=\mathcal{V}_k(\boldsymbol{\mu}_k,\kappa_k)$ gives
\begin{equation}
\mathrm{KL}(\mathcal{V}_k^\prime\|\mathcal{V}_k)
=
\log\frac{\mathcal{C}_d(\kappa_k^\prime)}{\mathcal{C}_d(\kappa_k)}
+
\kappa_k^\prime \mathcal{A}_d(\kappa_k^\prime)
-
\kappa_k \mathcal{A}_d(\kappa_k^\prime)\, \boldsymbol{\mu}_k^\top \boldsymbol{\mu}_k^\prime.
\label{eq:app_kl_anchor}
\end{equation}
Therefore, the anchor term $\mathcal{R}(\mathbf{M})$ is given by
\begin{equation}
\mathcal{R}(\mathbf{M})
=
\sum_{k=1}^K
\Big(
\log \mathcal{C}_d(\kappa_k^\prime)-\log \mathcal{C}_d(\kappa_k)
+
\kappa_k^\prime \mathcal{A}_d(\kappa_k^\prime)
-
\kappa_k \mathcal{A}_d(\kappa_k^\prime)\, \boldsymbol{\mu}_k^\top \boldsymbol{\mu}_k^\prime
\Big).
\label{eq:app_kl_anchor_sum}
\end{equation}
When optimizing w.r.t.\ $(\boldsymbol{\mu}_k,\kappa_k)$, the terms depending on $(\boldsymbol{\mu}_k,\kappa_k)$ reduce to
\begin{equation}
-\log \mathcal{C}_d(\kappa_k)\;-\;\kappa_k \mathcal{A}_d(\kappa_k^\prime)\,\boldsymbol{\mu}_k^\top \boldsymbol{\mu}_k^\prime,
\label{eq:app_kl_terms_relevant}
\end{equation}
up to constants independent of $(\boldsymbol{\mu}_k,\kappa_k)$.

\section{Derivation of the variable initialization of $\boldsymbol{\kappa}$}
\label{app:proof-init}
In Eq. \eqref{eq:init}, we initialize $\boldsymbol{\mu}_k^\prime$ from the zero-shot text prototype $\mathbf{t}_k$, and estimate $\mathcal{A}_d(\kappa_k^\prime)$ via a mean-squared-distance approximation on the unit hypersphere. 
We provide the detailed derivation of the latter in this section. 

Given $\boldsymbol{\mu}_k^\prime$ fixed, we compute the soft, class-weighted mean squared Euclidean distance $\mathrm{MSE}_k$ as
\begin{equation}
\mathrm{MSE}_k
\triangleq
\frac{\sum_{i=1}^N z_{i,k}\,\|\mathbf{f}_i-\boldsymbol{\mu}_k^\prime\|_2^2}{\sum_{i=1}^N z_{i,k}}
=
\frac{\sum_{i=1}^N z_{i,k}\,\|\mathbf{f}_i-\boldsymbol{\mu}_k^\prime\|_2^2}{N_k},
\qquad
N_k \triangleq \sum_{i=1}^N z_{i,k}.
\label{eq:app_mse_k}
\end{equation}

On the unit hypersphere, both $\mathbf{f}_i$ and $\boldsymbol{\mu}_k^\prime$ are $\ell_2$-normalized, i.e., $\|\mathbf{f}_i\|_2=\|\boldsymbol{\mu}_k^\prime\|_2=1$.
Hence,
\begin{equation}
\|\mathbf{f}_i-\boldsymbol{\mu}_k^\prime\|_2^2
=
\|\mathbf{f}_i\|_2^2+\|\boldsymbol{\mu}_k^\prime\|_2^2-2(\mathbf{f}_i^\top \boldsymbol{\mu}_k^\prime)
=
2\big(1-\cos\theta_i\big),
\label{eq:app_dist_cos}
\end{equation}
where $\cos\theta_i \triangleq \mathbf{f}_i^\top \boldsymbol{\mu}_k^\prime$.
Taking the weighted average over $i$ for class $k$ gives
\begin{equation}
\mathrm{MSE}_k
=
2\Big(1-\mathbb{E}_k[f^\top \boldsymbol{\mu}_k^\prime]\Big),
\qquad
\mathbb{E}_k[f^\top \boldsymbol{\mu}_k^\prime]
\triangleq
\frac{\sum_i z_{i,k}\, \mathbf{f}_i^\top \boldsymbol{\mu}_k^\prime}{\sum_i z_{i,k}}.
\label{eq:app_mse_expect}
\end{equation}
Therefore,
\begin{equation}
\mathbb{E}_k[f^\top \boldsymbol{\mu}_k^\prime] = 1-\frac{\mathrm{MSE}_k}{2}.
\label{eq:app_expect_from_mse}
\end{equation}

For a vMF distribution on $\mathbb{S}^{d-1}$ with density
$p(f)=\mathcal{C}_d(\kappa)\exp(\kappa \mu^\top f)$, it is well-known that\footnote{For completeness, this follows from the partition function
$Z(\kappa)=\int_{\mathbb{S}^{d-1}}\exp(\kappa\mu^\top f)\,df=1/\mathcal{C}_d(\kappa)$ and
the identity
$\frac{\partial}{\partial \kappa}\log Z(\kappa)=\mathbb{E}[\mu^\top f]$,
together with the standard Bessel recursion
$I_\nu^\prime(\kappa)=I_{\nu+1}(\kappa)+\frac{\nu}{\kappa}I_\nu(\kappa)$.}
\begin{equation}
\mathbb{E}_{f\sim \mathrm{vMF}(\mu,\kappa)}[\mu^\top f]
=
\mathcal{A}_d(\kappa)
\triangleq
\frac{I_{d/2}(\kappa)}{I_{d/2-1}(\kappa)},
\label{eq:app_vmf_Ad_def}
\end{equation}
where $I_\nu(\cdot)$ is the modified Bessel function of the first kind.

Combining~\eqref{eq:app_expect_from_mse} and~\eqref{eq:app_vmf_Ad_def}, we obtain the approximation
\begin{equation}
\mathcal{A}_d(\kappa_k^\prime)
\approx
\mathbb{E}_k[f^\top \boldsymbol{\mu}_k^\prime]
=
1-\frac{\mathrm{MSE}_k}{2}
=
1-\frac{\sum_i z_{i,k}\,\|\mathbf{f}_i-\boldsymbol{\mu}_k^\prime\|_2^2}{2\sum_i z_{i,k}}.
\label{eq:app_Ad_init}
\end{equation}
Finally, $\kappa_k^\prime$ is approximated as the inversion of $\mathcal{A}(\kappa_k)$ using Eq. \eqref{eq:banerjee}.

\section{Derivations of the variable update}

\subsection{With respect to assignments $\mathbf{z}$}
\label{app:proof-e}
Following the derivations from TransCLIP \cite{transclip}, we derive the update for assignments 
$\mathbf z = \{\mathbf{z}_i\}_{i=1}^N$ under the simplex constraint $\mathbf{z}_i \in \Delta^K$ with our instance-level, entropy-based weight $\gamma_i$. 
Note that this derivation is based on the setting in Eq. \eqref{eq:pl} that $\gamma_i$ is computed solely from the fixed pseudo label $\hat{\mathbf y}_i$. 
Although in actual implementation, we dynamically update $\gamma_i$ with current assignment $\mathbf{z}_i$, we treat it more as an engineering trick. 

Fixing the distribution parameters $(\boldsymbol{\mu}_k,\kappa_k)$, the $\mathbf z$-dependent part from the objective in Eq. \eqref{eq:ple-ours} can be written as
\begin{equation}
\min_{\mathbf z\in(\Delta^K)^N}\;
\sum_{i=1}^N \gamma_i\Big(
- \mathbf{z}_i^\top \log \mathbf{p}_{i}^{\text{vMF}}
+\mathrm{KL}(\mathbf{z}_i\Vert \hat{\mathbf y}_i)
\Big)
\;-\;
\sum_{i,j} \gamma_i \omega_{ij}\, \mathbf{z}_i^\top \mathbf{z}_j.
\label{eq:app_z_obj_gamma}
\end{equation}

Since $\omega_{ij} = \mathbf{f}_i^\top \mathbf{f}_j \ge 0$, the affinity matrix $\mathbf{W} = [\omega_{ij}] \in \mathbb{R}^{N \times N}$ is positive semi-definite (PSD). 
This makes the Laplacian term concave with respect to $\mathbf{z}$. 
Therefore, we adopt a BSUM-style approximation by taking the tight linear upper bound of $\mathbf{z}_i$ at iteration $t$.

\paragraph{Constructing linear upper bound.}
To construct such a linear upper bound, we first rewrite the Laplacian term in a matrix form.
Let $\mathbf z\in\mathbb{R}^{NK}$ denote the concatenation of $\{\mathbf{z}_i\}_{i=1}^N$ (stacked by samples), and $\mathbf{G}=\mathrm{diag}(\boldsymbol{\gamma})\in\mathbb{R}^{N\times N}$ be the diagonal matrix of instance-level weights $\boldsymbol{\gamma}$.
Then, the weighted Laplacian term can be written as
\begin{equation}
-\sum_{i,j}\gamma_i \omega_{ij}\, \mathbf{z}_i^\top \mathbf{z}_j
\;=\;
-\sum_{i,j} (\mathbf{G}\mathbf{W})_{ij}\, \mathbf{z}_i^\top \mathbf{z}_j
\;=\;
\mathbf z^\top \boldsymbol{\Psi}\,\mathbf z,
\label{eq:app_lap_matrix_form}
\end{equation}
where
\begin{equation}
\boldsymbol{\Psi} \triangleq -(\mathbf{G}\mathbf{W})\otimes \mathbf I_K,
\label{eq:app_def_Psi}
\end{equation}
$\otimes$ denotes the Kronecker product, and $\mathbf I_K$ is the $K\times K$ identity matrix.
For notational simplicity, we use $\mathbf I_K$ since each $\mathbf{z}_i\in\mathbb{R}^K$. This is also the standard lifting used in TransCLIP and StatA.

When $\mathbf{W}$ is PSD and $\boldsymbol{\gamma}\succeq \mathbf 0$, the symmetrized weight matrix
$\frac{1}{2}\left(\mathbf{G}\mathbf{W}+(\mathbf{G}\mathbf{W})^\top \right)$ is also PSD\footnote{Equivalently,
one may replace $\mathbf{G}\mathbf{W}$ by its symmetric part without changing the scalar form of the quadratic term,
since $\mathbf z^\top \mathbf A \mathbf z = \mathbf z^\top \frac{\mathbf A+\mathbf A^\top}{2}\mathbf z$ for any square matrix $\mathbf A$.},
which implies that $\boldsymbol{\Psi}$ is negative semi-definite (NSD).
As a results, $\mathbf z^\top \boldsymbol{\Psi}\mathbf z$ is concave with respect to $\mathbf z$.

For a concave quadratic function $q(\mathbf z)=\mathbf z^\top \boldsymbol{\Psi}\mathbf z$ with $\boldsymbol{\Psi}\preceq \mathbf 0$,
its first-order Taylor expansion at the current iterate $\mathbf z^{(t)}$ provides a \emph{tight} global upper bound:
\begin{equation}
\mathbf z^\top \boldsymbol{\Psi}\mathbf z
\;\le\;
(\mathbf z^{(t)})^\top \boldsymbol{\Psi}\mathbf z^{(t)}
+
\left(\nabla q(\mathbf z^{(t)})\right)^\top(\mathbf z-\mathbf z^{(t)}),
\qquad
\nabla q(\mathbf z)= (\boldsymbol{\Psi}+\boldsymbol{\Psi}^\top)\mathbf z.
\label{eq:app_lap_majorizer_general}
\end{equation}
In particular, if $\boldsymbol{\Psi}$ is symmetric (or using its symmetric part), the gradient simplifies to
$\nabla q(\mathbf z)=2\boldsymbol{\Psi}\mathbf z$, and the bound becomes
\begin{equation}
\mathbf z^\top \boldsymbol{\Psi}\mathbf z
\;\le\;
(\mathbf z^{(t)})^\top \boldsymbol{\Psi}\mathbf z^{(t)}
+
2(\boldsymbol{\Psi}\mathbf z^{(t)})^\top(\mathbf z-\mathbf z^{(t)}).
\label{eq:app_lap_majorizer_sym}
\end{equation}
This upper bound is \textit{tight} in the BSUM sense, i.e., it equals the original quadratic term at $\mathbf z=\mathbf z^{(t)}$.
Moreover, replacing the quadratic coupling by~\eqref{eq:app_lap_majorizer_general} or~\eqref{eq:app_lap_majorizer_sym}
yields a linear surrogate that decouples across $\{\mathbf{z}_i\}$ under simplex constraints, enabling an efficient BSUM update.

Therefore, by fixing the neighbors $\{\mathbf{z}_j^{(t)}\}_j$ and upper-bounding the bilinear term, we rewrite the Laplacian term as
\begin{equation}
-\sum_{i,j}\gamma_i \omega_{ij}\, \mathbf{z}_i^\top \mathbf{z}_j
\;\approx\;
-\sum_{i,j}\gamma_i \omega_{ij}\, \mathbf{z}_i^\top \mathbf{z}_j^{(t)}
\;+\;\text{const} 
=
-\sum_{i}\gamma_i \mathbf{z}_i^\top \sum_{j}\omega_{ij}\mathbf{z}_j^{(t)} +\;\text{const}
\label{eq:app_lap_linear}
\end{equation}
Substituting~\eqref{eq:app_lap_linear} into~\eqref{eq:app_z_obj_gamma}, the problem becomes separable over $i$.

\paragraph{Per-sample subproblem and cancellation of $\gamma_i$.}
For each sample $i$, we obtain the subproblem
\begin{equation}
\mathbf{z}_i^{(t+1)}
\in
\arg\min_{\mathbf{z}_i\in\Delta^K}\;
\gamma_i\Big(
- \mathbf{z}_i^\top \log \mathbf{p}_{i}^{\text{vMF}}
+\mathrm{KL}(\mathbf{z}_i\Vert \hat{\mathbf y}_i)
-\mathbf{z}_i^\top \sum_j \omega_{ij}\, \mathbf{z}_j^{(t)}
\Big).
\label{eq:app_z_subproblem}
\end{equation}
Since $\gamma_i>0$ is a constant multiplier in~\eqref{eq:app_z_subproblem}, it does not affect the minimizer:
\begin{equation}
\arg\min_{\mathbf{z}_i\in\Delta^K} \gamma_i\, g_i(\mathbf{z}_i)
\;=\;
\arg\min_{\mathbf{z}_i\in\Delta^K} g_i(\mathbf{z}_i),
\qquad (\gamma_i>0),
\label{eq:app_gamma_cancel}
\end{equation}
which explains why the assignment update in Eq. \eqref{eq:update-z} does not explicitly depend on $\gamma_i$.

\paragraph{Closed-form update.}
Expanding $\mathrm{KL}(\mathbf{z}_i\Vert \hat{\mathbf y}_i)=\sum_k z_{i,k}\log z_{i,k} - z_{i,k}\log \hat y_{i,k}$
and omitting $\mathbf{z}_i$-independent constants, the effective subproblem is
\begin{equation}
\min_{\mathbf{z}_i\in\Delta^K}\;
\sum_{k=1}^K z_{i,k}\log z_{i,k}
-
\sum_{k=1}^K z_{i,k}\, s_{i,k}^{(t)},
\quad
s_{i,k}^{(t)} \triangleq \log \hat y_{i,k} + \log p_{i,k}^{\text{vMF}} + \sum_j \omega_{ij}\, z_{j,k}^{(t)}.
\label{eq:app_z_entropy_form}
\end{equation}
Introducing a Lagrange multiplier $\lambda_i$ (different from the class confidence in Eq. \eqref{eq:as-lambda}) for constraint $\sum_k z_{i,k}=1$,
solving the Karush-Kuhn-Tucker (KKT) conditions yields
\begin{equation}
\log z_{i,k} + 1 - s_{i,k}^{(t)} + \lambda_i = 0
\quad\Longrightarrow\quad
z_{i,k}\propto \exp\!\big(s_{i,k}^{(t)}\big).
\end{equation}
Since $\mathbf{z}_i \in \Delta_k$, we obtain the final form after normalization
\begin{equation}
z_{i,k}^{(t+1)}
=
\frac{\exp\!\big(s_{i,k}^{(t)}\big)}
{\sum_{r=1}^K \exp\!\big(s_{i,r}^{(t)}\big)}
=
\frac{\hat y_{i,k} \exp\!\Big(\log p_{i,k}^{\text{vMF}}+\sum_j \omega_{ij} z_{j,k}^{(t)}\Big)}
{\sum_{r=1}^K \hat y_{i,r}\exp\!\Big(\log p_{i,r}+\sum_j \omega_{ij} z_{j,r}^{(t)}\Big)}.
\label{eq:app_z_update_softmax}
\end{equation}

Expressed in vector form, Eq. \eqref{eq:app_z_update_softmax} is given by
\begin{equation}
    \mathbf{z}_{i}^{(t+1)} = \frac{\hat{\mathbf{y}}_{i} \odot \exp(\log \mathbf{p}_{i}^{\text{vMF}} + \sum_{j} \omega_{ij} \mathbf{z}_j^{(t)})}{(\hat{\mathbf{y}}_{i} \odot \exp(\log \mathbf{p}_{i}^{\text{vMF}} + \sum_{j} \omega_{ij} \mathbf{z}_j^{(t)}))^\top \mathbbm{1}_K}.
\end{equation}

This has the same form as StatA, with $\log p_{i,k}^{\mathrm{vMF}}=\log \mathcal{C}_d(\kappa_k) + \kappa_k \boldsymbol{\mu}_k^\top \mathbf{f}_i$ instantiated by our vMF likelihood in Eq. \eqref{eq:likelihood}.

\subsection{With respect to parameters $\boldsymbol{\mu}$ and $\boldsymbol{\kappa}$}
\label{app:proof-m}

In this subsection, we derive the closed-form updates for the vMF parameters 
$\{\boldsymbol{\mu}_k, \kappa_k\}_{k=1}^K$ by fixing the soft assignments $\mathbf{z}$.
Recall that all feature vectors are $\ell_2$-normalized, i.e., $\|\mathbf{f}_i\|_2 = 1$.

\paragraph{Update of $\boldsymbol{\mu}_k$.}
We first consider the mean direction vector $\boldsymbol{\mu}_k$.
Collecting all terms in the PLE objective Eq. \eqref{eq:ple-ours} that depend on $\boldsymbol{\mu}_k$, we obtain
\begin{equation}
J(\boldsymbol{\mu}_k) = -\sum_{i} \gamma_i \mathbf{z}_{i,k} (\kappa_k \boldsymbol{\mu}_k^\top \mathbf{f}_i) + \alpha (\kappa_k \mathcal{A}_d(\kappa^\prime_k) \boldsymbol{\mu}_k^\top \boldsymbol{\mu}^\prime_k),
\quad
\text{s.t. } \|\boldsymbol{\mu}_k\|_2 = 1 .
\end{equation}
Minimizing this objective is equivalent to maximizing
\begin{equation}
\max_{\boldsymbol{\mu}_k} \;
\kappa_k \boldsymbol{\mu}_k^\top
\left(
\sum_{i} \gamma_i \mathbf{z}_{i,k} \mathbf{f}_i
+
\alpha \mathcal{A}_d(\kappa_k^\prime) \boldsymbol{\mu}_k^\prime
\right),
\quad
\text{s.t. } \|\boldsymbol{\mu}_k\|_2 = 1 ,
\label{eq:mu_objective}
\end{equation}

Let
\begin{equation}
R_k \triangleq \sum_{i} \gamma_i \mathbf{z}_{i,k} \mathbf{f}_i,
\qquad
T_k \triangleq \alpha \mathcal{A}_d(\kappa_k^\prime) \boldsymbol{\mu}_k^\prime,
\end{equation}
and define the combined resultant vector
\begin{equation}
R_k^{\text{tot}} \triangleq R_k + T_k .
\end{equation}
Then, the objective in~\eqref{eq:mu_objective} reduces to maximizing
$\boldsymbol{\mu}_k^\top R_k^{\text{tot}}$ under a unit-norm constraint.
The optimum is achieved when $\boldsymbol{\mu}_k$ aligns with $R_k^{\text{tot}}$, yielding
\begin{equation}
\boldsymbol{\mu}_k
=
\frac{R_k^{\text{tot}}}{\|R_k^{\text{tot}}\|_2}
=
\frac{
\sum_{i} \gamma_i \mathbf{z}_{i,k} \mathbf{f}_i + \alpha \mathcal{A}_d(\kappa_k^\prime) \boldsymbol{\mu}_k^\prime
}{
\left\|
\sum_{i} \gamma_i \mathbf{z}_{i,k} \mathbf{f}_i + \alpha \mathcal{A}_d(\kappa_k^\prime) \boldsymbol{\mu}_k^\prime
\right\|_2
}.
\label{eq:mu_update}
\end{equation}

\paragraph{Update of $\kappa_k$.}
Considering concentration parameter $\kappa_k$.
Similarly, when fixing $\boldsymbol{\mu}_k$, the terms involving $\kappa_k$ in Eq. \eqref{eq:ple-ours} can be written as
\begin{equation}
\mathcal{L}(\kappa_k)
=
-(N_k + \alpha)\log \mathcal{C}_d(\kappa_k)
-
\kappa_k \boldsymbol{\mu}_k^\top
\left(
\sum_{i} \gamma_i \mathbf{z}_{i,k} \mathbf{f}_i
+
\alpha \mathcal{A}_d(\kappa_k^\prime) \boldsymbol{\mu}_k^\prime
\right),
\label{eq:kappa_objective}
\end{equation}
where $N_k = \sum_i \gamma_i \mathbf{z}_{i,k}$ denotes the soft assignment count (with weights $\gamma_i$).

From the update of $\boldsymbol{\mu}_k$ in~\eqref{eq:mu_update}, the direction of $\boldsymbol{\mu}_k$
coincides with that of $R_k^{\text{tot}}$, and thus
\begin{equation}
\boldsymbol{\mu}_k^\top R_k^{\text{tot}} = \|R_k^{\text{tot}}\|_2 .
\end{equation}
Denoting $\bar{R}_k^{\text{tot}} \triangleq \|R_k^{\text{tot}}\|_2$,
the objective~\eqref{eq:kappa_objective} becomes
\begin{equation}
\mathcal{L}(\kappa_k)
=
-(N_k + \alpha)\log \mathcal{C}_d(\kappa_k)
-
\kappa_k \bar{R}_k^{\text{tot}} .
\end{equation}

Taking the derivative with respect to $\kappa_k$ and setting it to zero yields
\begin{align}
\frac{\partial L(\kappa_k)}{\partial \kappa} 
&= -(N_k + \alpha) \frac{\partial}{\partial \kappa}\log \mathcal{C}_d(\kappa_k) - \frac{\partial}{\partial \kappa}\kappa_k \bar{R}_k^{\text{tot}} \\
&= -(N_k + \alpha) \frac{\partial}{\partial \kappa}\log \mathcal{C}_d(\kappa_k) -  \bar{R}_k^{\text{tot}}
\\
&= (N_k + \alpha) \mathcal{A}_d(\kappa) - \bar{R}_k^{\text{tot}}\\
&= 0,
\end{align}

where we used the standard vMF identity
$\frac{\partial}{\partial \kappa} \log \mathcal{C}_d(\kappa) = -\mathcal{A}_d(\kappa)$.
Therefore, $\kappa_k$ satisfies
\begin{equation}
\mathcal{A}_d(\kappa_k)
=
\frac{\bar{R}_k^{\text{tot}}}{N_k + \alpha}
=
\frac{
\left\|
\sum_{i} \gamma_i \mathbf{z}_{i,k} \mathbf{f}_i + \alpha \mathcal{A}_d(\kappa_k^\prime) \boldsymbol{\mu}_k^\prime
\right\|_2
}{
\sum_{i} \gamma_i \mathbf{z}_{i,k} + \alpha
}.
\label{eq:kappa_root}
\end{equation}

Then, $\kappa_k$ is approximated as the inverse of $\mathcal{A}_d(\kappa_k)$
using Eq. \eqref{eq:banerjee}.

\section{Equivalence to the Adaptive Shrinkage Form for Parameter Updates}
\label{app:m-equal}
In this section, we show that the parameter updates in Eq.~\eqref{eq:update-m}
can be written in an equivalent \emph{adaptive shrinkage form}, i.e., Eq. \eqref{eq:update-m-real}.
For class $k$, define
\begin{equation}
N_k \triangleq \sum_i \gamma_i \mathbf{z}_{i,k}, 
\qquad
R_k \triangleq \sum_i \gamma_i \mathbf{z}_{i,k} \mathbf{f}_i,
\qquad
\boldsymbol{v}_k \triangleq \frac{R_k}{N_k},
\qquad
\beta_k \triangleq \frac{N_k}{N_k+\alpha}.
\label{eq:app_def_NRvbeta}
\end{equation}
Here, we set $\alpha$ as a fixed scalar for simplicity.

\paragraph{Remark.}
To make the equivalence more transparent, we first present the proof under the mild simplification
$\mathcal{A}_d(\kappa_k^\prime) = 1$, i.e., distribution anchor provides a deterministic direction (which is intuitive and natural).
The same derivation holds for general $\mathcal{A}_d(\kappa_k^\prime)\neq 1$ by replacing $\alpha$ with $\alpha \mathcal{A}_d(\kappa_k^\prime)$.

\paragraph{With respect to $\boldsymbol{\mu}_k$.}
Let
\begin{equation}
V_k^{\text{ours}} \triangleq R_k + \alpha \boldsymbol{\mu}_k^\prime 
\;=\;
\sum_i \gamma_i \mathbf{z}_{i,k} \mathbf{f}_i + \alpha \boldsymbol{\mu}_k^\prime,
\qquad
V_k^{\text{StatA}} \triangleq \beta_k \boldsymbol{v}_k + (1-\beta_k)\boldsymbol{\mu}_k^\prime.
\label{eq:app_def_Vours_Vstata}
\end{equation}
Eq.~\eqref{eq:update-m} updates $\boldsymbol{\mu}_k$ by normalizing $V_k^{\text{ours}}$, i.e.,
\begin{equation}
\boldsymbol{\mu}_k = \frac{V_k^{\text{ours}}}{\|V_k^{\text{ours}}\|_2}.
\label{eq:app_mu_ours_norm}
\end{equation}
We now show that this is equivalent to normalizing $V_k^{\text{StatA}}$.
Indeed, substituting the definitions in~\eqref{eq:app_def_NRvbeta} yields
\begin{align}
V_k^{\text{StatA}}
&=
\left(\frac{N_k}{N_k+\alpha}\right)\frac{R_k}{N_k}
+
\left(\frac{\alpha}{N_k+\alpha}\right)\boldsymbol{\mu}_k^\prime
\\
&=
\frac{R_k}{N_k+\alpha} + \frac{\alpha \boldsymbol{\mu}_k^\prime}{N_k+\alpha}
\\
&=
\frac{1}{N_k+\alpha}\left(R_k+\alpha \boldsymbol{\mu}_k^\prime\right)
\\
&=
\frac{1}{N_k+\alpha} V_k^{\text{ours}}.
\label{eq:app_Vstata_parallel}
\end{align}
Therefore, $V_k^{\text{StatA}}$ and $V_k^{\text{ours}}$ are colinear and share the same direction.
After normalization, we obtain the exact equivalence
\begin{equation}
\frac{V_k^{\text{ours}}}{\|V_k^{\text{ours}}\|_2}
\;\equiv\;
\frac{V_k^{\text{StatA}}}{\|V_k^{\text{StatA}}\|_2}
=
\frac{\beta_k \boldsymbol{v}_k + (1-\beta_k)\boldsymbol{\mu}_k^\prime}{\|\beta_k \boldsymbol{v}_k + (1-\beta_k)\boldsymbol{\mu}_k^\prime\|_2},
\label{eq:app_mu_equiv}
\end{equation}
which proves the equivalence between Eq.~\eqref{eq:update-m} and \eqref{eq:update-m-real}.

\paragraph{With respect to $\kappa_k$.}
Eq.~\eqref{eq:update-m} computes
\begin{equation}
\mathcal{A}_d(\kappa_k)
=
\frac{\left\|R_k+\alpha \boldsymbol{\mu}_k^\prime\right\|_2}{N_k+\alpha}
=
\frac{\left\|V_k^{\text{ours}}\right\|_2}{N_k+\alpha}.
\label{eq:app_kappa_ours}
\end{equation}
According to Eq.~\eqref{eq:app_Vstata_parallel}, we have
\begin{align}
\left\| \beta_k \boldsymbol{v}_k + (1-\beta_k)\boldsymbol{\mu}_k^\prime \right\|_2
&=
\left\| V_k^{\text{StatA}} \right\|_2
=
\left\| \frac{1}{N_k+\alpha} V_k^{\text{ours}} \right\|_2
\\
&=
\frac{1}{N_k+\alpha}\left\|V_k^{\text{ours}}\right\|_2 \\
&=
\frac{\left\|R_k+\alpha \boldsymbol{\mu}_k^\prime\right\|_2}{N_k+\alpha} \\
&=
\mathcal{A}_d(\kappa_k).
\label{eq:app_kappa_equiv}
\end{align}
Thus, the concentration update in Eq.~\eqref{eq:update-m} is also exactly equivalent to the form in Eq. \eqref{eq:update-m-real}.

\section{Additional Experimental Results}
\label{app:exp-more}

\subsection{Results with different batch sizes}
\label{sec:app-more-bs}
To verify the robustness of \method{} concerning data availability, we conduct experiments with varying batch sizes as reported in Tab. \ref{tab:app-more-bs}. 
Across three different batch sizes and two realistic scenarios, our method consistently outperforms all baselines and significantly improves upon the zero-shot CLIP initialization. 
This indicates the effectiveness of \method{}, regardless of whether the incoming data batch is sparse or abundant, ensuring reliable adaptation performance across different data scales. 
\begin{table*}[htbp]
\centering
\caption{\textbf{Results with different batch sizes.} The best and second-best results are marked in \textbf{bold} and \underline{underlined}, respectively. Subscript \poscolor{green} indicates improvement, \negcolor{red} indicates decline, and \zerocolor{gray} indicates no change compared with zero-shot performance.}
\label{tab:app-more-bs}

(a) Batch Size: 128.\par
\resizebox{\textwidth}{!}{
\begin{tabular}{ll|ccccccccccc|c}
\toprule
$K_{\text{eff}}$ & Method
& \rotatebox{45}{ImageNet} & \rotatebox{45}{SUN397} & \rotatebox{45}{Aircraft} & \rotatebox{45}{EuroSAT}
& \rotatebox{45}{StanfordCars} & \rotatebox{45}{Food101} & \rotatebox{45}{Pets} & \rotatebox{45}{Flowers102}
& \rotatebox{45}{Caltech101} & \rotatebox{45}{DTD} & \rotatebox{45}{UCF101}
& \ca \textbf{Avg.} \\
\midrule
 & CLIP
& 66.6 & 62.5 & 24.7 & 48.3 & 65.6 & 85.9 & 89.1 & 70.7 & 93.2 & 43.5 & 67.5
& \ca 65.2 \\

\midrule
Medium & StatA
& 72.0\pos{5.4} & 66.5\pos{4.0} & 26.6\pos{1.9} & 48.2\negval{0.1} & 72.7\pos{7.1}
& 88.7\pos{2.8} & 92.1\pos{3.0} & 75.4\pos{4.7} & 93.7\pos{0.5} & 47.1\pos{4.2} & 70.0\pos{2.5}
& \ca 68.5\pos{3.3} \\

\rowcolor{bggray}
& \method{}
& 82.3\pos{15.7} & 76.2\pos{13.7} & 27.1\pos{2.4} & 46.2\negval{2.1} & 75.7\pos{10.1} & 93.8\pos{7.9} & 92.1\pos{3.0} & 75.9\pos{5.2} & 94.8\pos{1.6} & 47.1\pos{3.6} & 72.9\pos{5.4} & \ca \textbf{71.3}\pos{6.1} \\

\midrule
High & StatA
& 69.4\pos{2.8} & 64.9\pos{2.4} & 23.6\negval{1.1} & 47.2\negval{1.1} & 68.0\pos{2.4}
& 87.0\pos{1.1} & 88.2\negval{0.9} & 72.0\pos{1.3} & 94.0\pos{0.8} & 46.9\pos{3.4} & 68.2\pos{0.7}
& \ca 66.3\pos{1.1} \\

\rowcolor{bggray}
& \method{}
& 76.4\pos{9.8} & 69.6\pos{7.1} & 21.7\negval{3.0} & 46.2\negval{2.1} & 68.6\pos{3.0} & 88.9\pos{3.0} & 86.4\negval{2.7} & 71.3\pos{0.6} & 93.7\pos{0.5} & 40.2\negval{3.3} & 67.1\negval{0.4} & \ca \textbf{66.4}\pos{1.1} \\

\bottomrule
\end{tabular}
}
\par\medskip

(b) Batch Size: 256.\par
\resizebox{\textwidth}{!}{
\begin{tabular}{ll|ccccccccccc|c}
\toprule
$K_{\text{eff}}$ & Method
& \rotatebox{45}{ImageNet} & \rotatebox{45}{SUN397} & \rotatebox{45}{Aircraft} & \rotatebox{45}{EuroSAT}
& \rotatebox{45}{StanfordCars} & \rotatebox{45}{Food101} & \rotatebox{45}{Pets} & \rotatebox{45}{Flowers102}
& \rotatebox{45}{Caltech101} & \rotatebox{45}{DTD} & \rotatebox{45}{UCF101}
& \ca \textbf{Avg.} \\
\midrule
 & CLIP
& 66.6 & 62.5 & 24.7 & 48.3 & 65.6 & 85.9 & 89.1 & 70.7 & 93.2 & 43.5 & 67.5
& \ca 65.2 \\

\midrule
Medium & StatA
& 72.0\pos{5.4} & 66.7\pos{4.2} & 27.1\pos{2.4} & 56.0\pos{7.7} & 74.1\pos{8.5}
& 88.9\pos{3.0} & 92.9\pos{3.8} & 76.0\pos{5.3} & 93.6\pos{0.4} & 47.0\pos{3.5} & 70.5\pos{3.0}
& \ca 69.5\pos{4.3} \\

\rowcolor{bggray}
& \method{}
& 82.9\pos{16.3} & 77.8\pos{15.3} & 27.9\pos{3.2} & 49.4\pos{1.1} & 76.6\pos{11.0} & 94.1\pos{8.2} & 92.9\pos{3.8} & 76.1\pos{5.4} & 95.2\pos{2.0} & 48.4\pos{4.9} & 73.7\pos{6.2} & \ca \textbf{72.3}\pos{7.0} \\

\midrule
High & StatA
& 71.1\pos{4.5} & 66.3\pos{3.8} & 24.2\negval{0.5} & 55.5\pos{7.2} & 70.6\pos{5.0}
& 87.6\pos{1.7} & 88.9\negval{0.2} & 73.7\pos{3.0} & 94.1\pos{0.9} & 47.0\pos{3.5} & 69.9\pos{2.4}
& \ca 68.1\pos{2.9} \\

\rowcolor{bggray}
& \method{}
& 80.5\pos{13.9} & 73.2\pos{10.7} & 23.2\negval{1.5} & 49.1\pos{0.8} & 71.8\pos{6.2} & 90.4\pos{4.5} & 88.2\negval{0.9} & 73.1\pos{2.4} & 94.1\pos{0.9} & 42.7\negval{0.8} & 69.6\pos{2.1} & \ca \textbf{68.7}\pos{3.5} \\

\bottomrule
\end{tabular}
}
\par\medskip

(c) Batch Size: 500.\par
\resizebox{\textwidth}{!}{
\begin{tabular}{ll|ccccccccccc|c}
\toprule
$K_{\text{eff}}$ & Method
& \rotatebox{45}{ImageNet} & \rotatebox{45}{SUN397} & \rotatebox{45}{Aircraft} & \rotatebox{45}{EuroSAT}
& \rotatebox{45}{StanfordCars} & \rotatebox{45}{Food101} & \rotatebox{45}{Pets} & \rotatebox{45}{Flowers102}
& \rotatebox{45}{Caltech101} & \rotatebox{45}{DTD} & \rotatebox{45}{UCF101}
& \ca \textbf{Avg.} \\
\midrule
 & CLIP
& 66.6 & 62.5 & 24.7 & 48.3 & 65.6 & 85.9 & 89.1 & 70.7 & 93.2 & 43.5 & 67.5
& \ca 65.2 \\

\midrule
Medium & StatA
& 71.5\pos{4.9} & 65.5\pos{3.0} & 27.8\pos{3.1} & 59.3\pos{11.0} & 74.9\pos{9.3}
& 88.3\pos{2.4} & 93.1\pos{4.0} & 76.8\pos{6.1} & 93.1\negval{0.1} & 47.1\pos{3.6} & 69.9\pos{2.4}
& \ca 69.8\pos{4.5} \\

\rowcolor{bggray}
& \method{}
& 82.4\pos{15.8} & 76.7\pos{14.2} & 28.6\pos{3.9} & 55.5\pos{7.2} & 77.0\pos{11.4} & 93.7\pos{7.8} & 92.7\pos{3.6} & 76.5\pos{5.8} & 95.0\pos{1.8} & 49.0\pos{5.5} & 73.4\pos{5.9} & \ca \textbf{72.8}\pos{7.5} \\

\midrule
High & StatA
& 72.1\pos{5.5} & 67.3\pos{4.8} & 25.1\pos{0.4} & 60.0\pos{11.7} & 72.3\pos{6.7}
& 88.2\pos{2.3} & 90.3\pos{1.2} & 75.5\pos{4.8} & 93.8\pos{0.6} & 47.2\pos{3.7} & 70.7\pos{3.2}
& \ca 69.3\pos{4.1} \\

\rowcolor{bggray}
& \method{}
& 82.3\pos{15.7} & 75.5\pos{13.0} & 24.3\negval{0.4} & 55.8\pos{7.5} & 73.8\pos{8.2} & 91.2\pos{5.3} & 89.3\pos{0.2} & 74.5\pos{3.8} & 94.7\pos{1.5} & 44.1\pos{0.6} & 70.8\pos{3.3} & \ca \textbf{70.6}\pos{5.3} \\

\bottomrule
\end{tabular}
}
\end{table*}

\subsection{Results with other backbones}
\label{sec:app-more-backbone}
Following the discussion in Sec. \ref{sec:ablation}, we extend our evaluation to four additional CLIP visual backbones, including ResNet \cite{resnet} architectures (RN50, RN101) and Vision Transformers (ViT) \cite{vit} of varying scales (ViT-B/32, ViT-L/14), as detailed in Tab. \ref{tab:app-more-clip-batch-64}, \ref{tab:app-more-clip-batch-1000} and \ref{tab:app-more-clip-online}. 
These experiments cover the full range of realistic batch and online adaptation settings. 
\method{} demonstrates remarkable universality, achieving the highest accuracy in nearly all evaluated scenarios (winning in 33 out of 36 cases) with only negligible margins in the few exceptions. 
This consistent superiority across diverse architectures and model capacities confirms that our vMF-based dynamic shrinkage mechanism is a generalized solution, capable of enhancing VLM performance regardless of the specific underlying visual encoder.

\subsection{Results with other VLM architectures}
\label{sec:app-more-vlm}
To validate the generalizability of our \method{} beyond standard CLIP models, we evaluate three additional VLMs with diverse architectures, parameter scales, and training procedures: OpenCLIP (151M)~\cite{openclip}, SigLIP (878M)~\cite{siglip}, and EVA-CLIP (1.1B)~\cite{evalip}, all implemented using the OpenCLIP codebase~\cite{openclip-github}. 
Details regarding the VLMs in our experiment is presented in Tab. \ref{tab:app-vlm-arch}. 
Experiments conducted with a batch size of 1,000 across three realistic scenarios (Tables~\ref{tab:app-more-vlm-openclip}, \ref{tab:app-more-vlm-siglip}, and \ref{tab:app-more-vlm-evalip}) reveal that \method{} consistently enhances zero-shot performance and achieves state-of-the-art results in nearly all nine settings. 
Notably, the relative improvements are particularly distinct for smaller models (e.g., OpenCLIP), suggesting our method effectively compensates for weaker initial representations, while maintaining substantial gains on challenging large-scale datasets like ImageNet and SUN397. 
Collectively, these findings demonstrate the universal effectiveness of \method{} across diverse model architectures, scales, and pre-training paradigms. 

\begin{table}[t]
\centering
\caption{\textbf{Overview of the VLMs employed in our experiments.}}
\label{tab:app-vlm-arch}
\resizebox{0.9\linewidth}{!}{
\begin{tabular}{llccccc}
\toprule
\textbf{Model name} & \textbf{Full name} & \textbf{\#Param} & \textbf{Detailed \#Param} & \textbf{FLOPS (B)} & \textbf{Resolution} & \textbf{Pretrained}\\
\midrule
CLIP         & clip-resnet50 / RN50 & 102M & 102,007,137 & 18.18 & 224x224 & WIT \\
CLIP        & clip-resnet101 / RN101 & 120M & 119,688,033 & 25.50 & 224x224 & WIT \\
CLIP     & clip-vit-base-patch16 / ViT-B/16 & 150M & 149,620,737 & 41.09 & 224x224 & WIT\\
CLIP     & clip-vit-base-patch32 / ViT-B/32 & 151M & 151,277,313 & 14.78 & 224x224 & WIT\\
CLIP     & clip-vit-large-patch14 / ViT-L/14 & 428M & 427,616,513 & 175.33 & 224x224 & WIT\\
OpenCLIP & ViT-B-16 & 150M & 149,620,737 & 41.09 & 224x224 & DataComp  \\
SigLIP            & ViT-SO400M-14-SigLIP-384 & 878M & 877,960,498 & 723.48 & 384x384 & WebLI\\
EVA-CLIP          & EVA01-g-14 & 1.14B & 1,136,435,841 & 547.36 & 224x224 & LAION \\
\bottomrule
\end{tabular}
}
\end{table}

\begin{table*}[htbp]
\centering
\caption{\textbf{Results on four additional CLIP backbones, batch adaptation with batch size of 64.} Subscript \poscolor{green} indicates improvement, \negcolor{red} indicates decline, and \zerocolor{gray} indicates no change compared with zero-shot performance.}
\label{tab:app-more-clip-batch-64}

(a) ResNet-50.\par
\resizebox{\textwidth}{!}{
\begin{tabular}{ll|ccccccccccc|c}
\toprule
$K_{\text{eff}}$ & Method
& \rotatebox{45}{ImageNet} & \rotatebox{45}{SUN397} & \rotatebox{45}{Aircraft} & \rotatebox{45}{EuroSAT}
& \rotatebox{45}{StanfordCars} & \rotatebox{45}{Food101} & \rotatebox{45}{Pets} & \rotatebox{45}{Flowers102}
& \rotatebox{45}{Caltech101} & \rotatebox{45}{DTD} & \rotatebox{45}{UCF101}
& \ca \textbf{Avg.} \\
\midrule
 & CLIP
& 58.2 & 58.9 & 17.0 & 36.2 & 55.8 & 77.4 & 85.7 & 66.1 & 85.7 & 42.8 & 61.8
& \ca 58.7 \\

\midrule
Very Low
& StatA
& 68.2\pos{10.0} & 63.7\pos{4.8} & 21.1\pos{4.1} & 43.3\pos{7.1} & 71.3\pos{15.5}
& 87.1\pos{9.7} & 93.1\pos{7.4} & 74.1\pos{8.0} & 90.1\pos{4.4} & 45.3\pos{2.5} & 66.2\pos{4.4}
& \ca 65.8\pos{7.1} \\

\rowcolor{bggray}
& \method{}
& 79.7\pos{21.5} & 77.5\pos{18.6} & 25.4\pos{8.4} & 41.8\pos{5.6} & 74.8\pos{19.0} & 94.9\pos{17.5} & 95.5\pos{9.8} & 75.2\pos{9.1} & 92.2\pos{6.5} & 54.4\pos{11.6} & 74.1\pos{12.3} & \ca \textbf{71.4}\pos{12.7} \\

\midrule

Low
& StatA
& 65.0\pos{6.8} & 62.8\pos{3.9} & 17.8\pos{0.8} & 31.7\negval{4.5} & 67.1\pos{11.3}
& 83.6\pos{6.2} & 88.2\pos{2.5} & 71.7\pos{5.6} & 89.0\pos{3.3} & 44.9\pos{2.1} & 64.0\pos{2.2}
& \ca 62.3\pos{3.7} \\

\rowcolor{bggray}
& \method{}
& 79.5\pos{21.3} & 77.9\pos{19.0} & 21.9\pos{4.9} & 35.2\negval{1.0} & 72.8\pos{17.0} & 92.9\pos{15.5} & 94.0\pos{8.3} & 75.6\pos{9.5} & 91.7\pos{6.0} & 49.2\pos{6.4} & 71.6\pos{9.8} & \ca \textbf{69.3}\pos{10.6} \\
\midrule

Medium & StatA
& 61.1\pos{2.9} & 59.5\pos{0.6} & 16.4\negval{0.6} & 27.3\negval{8.9} & 62.1\pos{6.3}
& 78.7\pos{1.3} & 81.4\negval{4.3} & 64.1\negval{2.0} & 87.9\pos{2.2} & 43.8\pos{1.0} & 62.0\pos{0.2}
& \ca 58.6\negval{0.1} \\

\rowcolor{bggray}
& \method{}
& 73.6\pos{15.4} & 71.3\pos{12.4} & 18.6\pos{1.6} & 31.9\negval{4.3} & 67.2\pos{11.4} & 87.6\pos{10.2} & 88.5\pos{2.8} & 70.5\pos{4.4} & 89.8\pos{4.1} & 42.2\negval{0.6} & 67.5\pos{5.7} & \ca \textbf{64.4}\pos{5.7} \\

\bottomrule
\end{tabular}
}
\par\medskip

(b) ResNet-101.\par
\resizebox{\textwidth}{!}{
\begin{tabular}{ll|ccccccccccc|c}
\toprule
$K_{\text{eff}}$ & Method
& \rotatebox{45}{ImageNet} & \rotatebox{45}{SUN397} & \rotatebox{45}{Aircraft} & \rotatebox{45}{EuroSAT}
& \rotatebox{45}{StanfordCars} & \rotatebox{45}{Food101} & \rotatebox{45}{Pets} & \rotatebox{45}{Flowers102}
& \rotatebox{45}{Caltech101} & \rotatebox{45}{DTD} & \rotatebox{45}{UCF101}
& \ca \textbf{Avg.} \\

\midrule

 & CLIP
& 61.3 & 59.0 & 17.9 & 32.9 & 63.2 & 80.7 & 86.9 & 64.3 & 89.9 & 37.3 & 61.1
& \ca 59.5 \\

\midrule

Very Low & StatA
& 73.0\pos{11.7} & 66.5\pos{7.5} & 22.5\pos{4.6} & 30.4\negval{2.5} & 76.2\pos{13.0}
& 89.5\pos{8.8} & 95.2\pos{8.3} & 74.6\pos{10.3} & 91.9\pos{2.0} & 42.9\pos{5.6} & 65.1\pos{4.0}
& \ca 66.2\pos{6.7} \\

\rowcolor{bggray}
& \method{}
& 80.0\pos{18.7} & 74.5\pos{15.5} & 24.8\pos{6.9} & 30.8\negval{2.1} & 77.9\pos{14.7} & 94.1\pos{13.4} & 93.9\pos{7.0} & 72.4\pos{8.1} & 93.2\pos{3.3} & 46.1\pos{8.8} & 70.3\pos{9.2} & \ca \textbf{68.9}\pos{9.4} \\
\midrule

Low & StatA
& 71.2\pos{9.9} & 65.9\pos{6.9} & 20.0\pos{2.1} & 29.6\negval{3.3} & 73.1\pos{9.9}
& 88.1\pos{7.4} & 92.9\pos{6.0} & 74.9\pos{10.6} & 92.8\pos{2.9} & 42.9\pos{5.6} & 64.4\pos{3.3}
& \ca 65.1\pos{5.6} \\

\rowcolor{bggray}
& \method{}
& 79.8\pos{18.5} & 74.5\pos{15.5} & 21.8\pos{3.9} & 30.0\negval{2.9} & 76.4\pos{13.2} & 93.1\pos{12.4} & 92.5\pos{5.6} & 72.6\pos{8.3} & 94.0\pos{4.1} & 42.6\pos{5.3} & 68.4\pos{7.3} & \ca \textbf{67.8}\pos{8.3} \\

\midrule

Medium & StatA
& 67.0\pos{5.7} & 62.7\pos{3.7} & 18.6\pos{0.7} & 28.7\negval{4.2} & 69.6\pos{6.4}
& 84.9\pos{4.2} & 88.8\pos{1.9} & 70.1\pos{5.8} & 92.1\pos{2.2} & 40.9\pos{3.6} & 63.6\pos{2.5}
& \ca 62.5\pos{3.0} \\

\rowcolor{bggray}
& \method{}
& 74.5\pos{13.2} & 69.1\pos{10.1} & 19.7\pos{1.8} & 29.9\negval{3.0} & 72.7\pos{9.5} & 89.0\pos{8.3} & 89.0\pos{2.1} & 68.2\pos{3.9} & 92.6\pos{2.7} & 37.5\pos{0.2} & 65.3\pos{4.2} & \ca \textbf{64.3}\pos{4.8} \\

\bottomrule
\end{tabular}
}
\par\medskip

(c) ViT-B/32.\par
\resizebox{\textwidth}{!}{
\begin{tabular}{ll|ccccccccccc|c}
\toprule
$K_{\text{eff}}$ & Method
& \rotatebox{45}{ImageNet} & \rotatebox{45}{SUN397} & \rotatebox{45}{Aircraft} & \rotatebox{45}{EuroSAT}
& \rotatebox{45}{StanfordCars} & \rotatebox{45}{Food101} & \rotatebox{45}{Pets} & \rotatebox{45}{Flowers102}
& \rotatebox{45}{Caltech101} & \rotatebox{45}{DTD} & \rotatebox{45}{UCF101}
& \ca \textbf{Avg.} \\
\midrule
 & CLIP
& 62.0 & 62.1 & 19.1 & 45.4 & 60.2 & 80.4 & 87.3 & 66.6 & 91.4 & 42.7 & 63.5
& \ca 61.9 \\

\midrule

Very Low & StatA
& 68.1\pos{6.1} & 65.6\pos{3.5} & 23.0\pos{3.9} & 53.2\pos{7.8} & 71.9\pos{11.7}
& 85.8\pos{5.4} & 94.3\pos{7.0} & 74.9\pos{8.3} & 93.4\pos{2.0} & 45.2\pos{2.5} & 64.5\pos{1.0}
& \ca 67.3\pos{5.4} \\

\rowcolor{bggray}
& \method{}
& 80.9\pos{18.9} & 78.4\pos{16.3} & 25.7\pos{6.6} & 52.2\pos{6.8} & 77.8\pos{17.6} & 94.8\pos{14.4} & 94.7\pos{7.4} & 75.4\pos{8.8} & 95.1\pos{3.7} & 53.6\pos{10.9} & 70.4\pos{6.9} & \ca \textbf{72.6}\pos{10.7} \\

\midrule

Low & StatA
& 67.2\pos{5.2} & 65.7\pos{3.6} & 21.9\pos{2.8} & 50.1\pos{4.7} & 69.3\pos{9.1}
& 84.5\pos{4.1} & 92.5\pos{5.2} & 75.3\pos{8.7} & 93.2\pos{1.8} & 46.1\pos{3.4} & 64.6\pos{1.1}
& \ca 66.4\pos{4.5} \\

\rowcolor{bggray}
& \method{}
& 80.3\pos{18.3} & 78.8\pos{16.7} & 23.8\pos{4.7} & 45.6\pos{0.2} & 77.0\pos{16.8} & 93.7\pos{13.3} & 93.1\pos{5.8} & 76.4\pos{9.8} & 94.8\pos{3.4} & 47.9\pos{5.2} & 69.6\pos{6.1} & \ca \textbf{71.0}\pos{9.1} \\

\midrule

Medium & StatA
& 65.5\pos{3.5} & 64.8\pos{2.7} & 20.0\pos{0.9} & 45.4\zeroval{0.0} & 65.0\pos{4.8}
& 82.6\pos{2.2} & 89.4\pos{2.1} & 70.6\pos{4.0} & 92.9\pos{1.5} & 46.7\pos{4.0} & 64.4\pos{0.9}
& \ca 64.3\pos{2.4} \\

\rowcolor{bggray}
& \method{}
& 75.6\pos{13.6} & 73.6\pos{11.5} & 20.3\pos{1.2} & 40.8\negval{4.6} & 70.8\pos{10.6} & 89.1\pos{8.7} & 89.6\pos{2.3} & 71.1\pos{4.5} & 93.6\pos{2.2} & 42.0\negval{0.7} & 67.1\pos{3.6} & \ca \textbf{66.7}\pos{4.8} \\

\bottomrule
\end{tabular}
}
\par\medskip

(d) ViT-L/14.\par
\resizebox{\textwidth}{!}{
\begin{tabular}{ll|ccccccccccc|c}
\toprule
$K_{\text{eff}}$ & Method
& \rotatebox{45}{ImageNet} & \rotatebox{45}{SUN397} & \rotatebox{45}{Aircraft} & \rotatebox{45}{EuroSAT}
& \rotatebox{45}{StanfordCars} & \rotatebox{45}{Food101} & \rotatebox{45}{Pets} & \rotatebox{45}{Flowers102}
& \rotatebox{45}{Caltech101} & \rotatebox{45}{DTD} & \rotatebox{45}{UCF101}
& \ca \textbf{Avg.} \\
\midrule
 & CLIP
& 73.5 & 67.7 & 32.5 & 60.3 & 76.9 & 90.9 & 93.5 & 79.5 & 95.2 & 53.5 & 74.9
& \ca 72.6 \\

\midrule

Very Low & StatA
& 78.9\pos{5.4} & 71.3\pos{3.6} & 40.4\pos{7.9} & 71.4\pos{11.1} & 84.4\pos{7.5}
& 94.2\pos{3.3} & 97.1\pos{3.6} & 82.9\pos{3.4} & 97.0\pos{1.8} & 55.3\pos{1.8} & 77.1\pos{2.2}
& \ca 77.3\pos{4.7} \\

\rowcolor{bggray}
& \method{}
& 85.8\pos{12.3} & 80.1\pos{12.4} & 41.3\pos{8.8} & 72.4\pos{12.1} & 86.2\pos{9.3} & 97.9\pos{7.0} & 97.5\pos{4.0} & 83.5\pos{4.0} & 98.1\pos{2.9} & 64.7\pos{11.2} & 80.8\pos{5.9} & \ca \textbf{80.8}\pos{8.2} \\

\midrule

Low & StatA
& 78.2\pos{4.7} & 71.6\pos{3.9} & 38.4\pos{5.9} & 65.6\pos{5.3} & 82.4\pos{5.5}
& 93.1\pos{2.2} & 96.3\pos{2.8} & 82.8\pos{3.3} & 96.1\pos{0.9} & 55.4\pos{1.9} & 76.8\pos{1.9}
& \ca 76.1\pos{3.5} \\

\rowcolor{bggray}
& \method{}
& 86.4\pos{12.9} & 80.8\pos{13.1} & 39.7\pos{7.2} & 65.2\pos{4.9} & 85.2\pos{8.3} & 97.2\pos{6.3} & 97.5\pos{4.0} & 83.7\pos{4.2} & 97.7\pos{2.5} & 60.7\pos{7.2} & 79.7\pos{4.8} & \ca \textbf{79.4}\pos{6.8} \\

\midrule

Medium & StatA
& 76.6\pos{3.1} & 70.0\pos{2.3} & 36.4\pos{3.9} & 62.6\pos{2.3} & 80.6\pos{3.7}
& 92.1\pos{1.2} & 93.9\pos{0.4} & 80.8\pos{1.3} & 95.6\pos{0.4} & 54.6\pos{1.1} & 77.1\pos{2.2}
& \ca 74.5\pos{2.0} \\

\rowcolor{bggray}
& \method{}
& 83.3\pos{9.8} & 76.3\pos{8.6} & 36.3\pos{3.8} & 61.4\pos{1.1} & 83.2\pos{6.3} & 95.4\pos{4.5} & 94.9\pos{1.4} & 81.5\pos{2.0} & 96.7\pos{1.5} & 54.5\pos{1.0} & 78.2\pos{3.3} & \ca \textbf{76.5}\pos{3.9} \\

\bottomrule
\end{tabular}
}
\end{table*}

\begin{table*}[htbp]
\centering
\caption{\textbf{Results on four additional CLIP backbones, batch adaptation with batch size of 1,000.} Subscript \poscolor{green} indicates improvement, \negcolor{red} indicates decline, and \zerocolor{gray} indicates no change compared with zero-shot performance.}
\label{tab:app-more-clip-batch-1000}

(a) ResNet-50.\par
\resizebox{\textwidth}{!}{
\begin{tabular}{ll|ccccccccccc|c}
\toprule
$K_{\text{eff}}$ & Method
& \rotatebox{45}{ImageNet} & \rotatebox{45}{SUN397} & \rotatebox{45}{Aircraft} & \rotatebox{45}{EuroSAT}
& \rotatebox{45}{StanfordCars} & \rotatebox{45}{Food101} & \rotatebox{45}{Pets} & \rotatebox{45}{Flowers102}
& \rotatebox{45}{Caltech101} & \rotatebox{45}{DTD} & \rotatebox{45}{UCF101}
& \ca \textbf{Avg.} \\
\midrule
 & CLIP
& 58.2 & 58.9 & 17.0 & 36.2 & 55.8 & 77.4 & 85.7 & 66.1 & 85.7 & 42.8 & 61.8
& \ca 58.7 \\

\midrule
Medium
& StatA
& 65.2\pos{7.0} & 61.5\pos{2.6} & 18.6\pos{1.6} & 51.2\pos{15.0} & 67.2\pos{11.4}
& 80.9\pos{3.5} & 89.1\pos{3.4} & 70.7\pos{4.6} & 88.5\pos{2.8} & 46.8\pos{4.0} & 65.3\pos{3.5}
& \ca 64.1\pos{5.4} \\

\rowcolor{bggray}
& \method{}
& 78.2\pos{20.0} & 75.1\pos{16.2} & 21.3\pos{4.3} & 43.3\pos{7.1} & 70.6\pos{14.8} & 87.5\pos{10.1} & 89.7\pos{4.0} & 73.4\pos{7.3} & 91.1\pos{5.4} & 47.8\pos{5.0} & 72.2\pos{10.4} & \ca \textbf{68.2}\pos{9.5} \\

\midrule
High
& StatA
& 65.4\pos{7.2} & 63.1\pos{4.2} & 16.5\negval{0.5} & 51.7\pos{15.5} & 65.4\pos{9.6}
& 81.0\pos{3.6} & 84.4\negval{1.3} & 70.0\pos{3.9} & 88.3\pos{2.6} & 47.2\pos{4.4} & 66.0\pos{4.2}
& \ca 63.5\pos{4.8} \\

\rowcolor{bggray}
& \method{}
& 77.5\pos{19.3} & 74.1\pos{15.2} & 18.1\pos{1.1} & 43.2\pos{7.0} & 67.5\pos{11.7} & 84.3\pos{6.9} & 86.5\pos{0.8} & 71.3\pos{5.2} & 89.3\pos{3.6} & 43.6\pos{0.8} & 69.2\pos{7.4} & \ca \textbf{65.9}\pos{7.2} \\

\midrule
Very High
& StatA
& 63.5\pos{5.3} & 62.4\pos{3.5} & 14.8\negval{2.2} & 51.7\pos{15.5} & 60.8\pos{5.0}
& 77.8\pos{0.4} & 83.5\negval{2.2} & 66.2\pos{0.1} & 87.9\pos{2.2} & 46.6\pos{3.8} & 64.5\pos{2.7}
& \ca 61.8\pos{3.1} \\

\rowcolor{bggray}
& \method{}
& 74.7\pos{16.5} & 70.5\pos{11.6} & 16.7\negval{0.3} & 43.2\pos{7.0} & 62.3\pos{6.5} & 79.6\pos{2.2} & 85.9\pos{0.2} & 68.3\pos{2.2} & 86.1\pos{0.4} & 42.5\negval{0.3} & 64.8\pos{3.0} & \ca \textbf{63.1}\pos{4.4} \\

\bottomrule
\end{tabular}
}
\par\medskip

(b) ResNet-101.\par
\resizebox{\textwidth}{!}{
\begin{tabular}{ll|ccccccccccc|c}
\toprule
$K_{\text{eff}}$ & Method
& \rotatebox{45}{ImageNet} & \rotatebox{45}{SUN397} & \rotatebox{45}{Aircraft} & \rotatebox{45}{EuroSAT}
& \rotatebox{45}{StanfordCars} & \rotatebox{45}{Food101} & \rotatebox{45}{Pets} & \rotatebox{45}{Flowers102}
& \rotatebox{45}{Caltech101} & \rotatebox{45}{DTD} & \rotatebox{45}{UCF101}
& \ca \textbf{Avg.} \\
\midrule
 & CLIP
& 61.3 & 59.0 & 17.9 & 32.9 & 63.2 & 80.7 & 86.9 & 64.3 & 89.9 & 37.3 & 61.1
& \ca 59.5 \\

\midrule
Medium
& StatA
& 70.5\pos{9.2} & 65.3\pos{6.3} & 20.5\pos{2.6} & 33.6\pos{0.7} & 73.9\pos{10.7}
& 85.4\pos{4.7} & 91.1\pos{4.2} & 73.1\pos{8.8} & 92.2\pos{2.3} & 43.2\pos{5.9} & 66.5\pos{5.4}
& \ca 65.0\pos{5.5} \\

\rowcolor{bggray}
& \method{}
& 77.6\pos{16.3} & 72.6\pos{13.6} & 21.5\pos{3.6} & 33.3\pos{0.4} & 75.3\pos{12.1} & 88.9\pos{8.2} & 89.6\pos{2.7} & 71.2\pos{6.9} & 94.2\pos{4.3} & 41.6\pos{4.3} & 68.7\pos{7.6} & \ca \textbf{66.8}\pos{7.3} \\

\midrule
High
& StatA
& 71.4\pos{10.1} & 66.2\pos{7.2} & 18.6\pos{0.7} & 32.8\negval{0.1} & 72.2\pos{9.0}
& 85.1\pos{4.4} & 87.9\pos{1.0} & 71.9\pos{7.6} & 92.2\pos{2.3} & 42.5\pos{5.2} & 66.5\pos{5.4}
& \ca 64.3\pos{4.8} \\

\rowcolor{bggray}
& \method{}
& 77.7\pos{16.4} & 71.2\pos{12.2} & 19.2\pos{1.3} & 33.1\pos{0.2} & 72.6\pos{9.4} & 86.3\pos{5.6} & 87.6\pos{0.7} & 69.1\pos{4.8} & 93.1\pos{3.2} & 38.7\pos{1.4} & 65.9\pos{4.8} & \ca \textbf{65.0}\pos{5.5} \\

\midrule
Very High
& StatA
& 70.1\pos{8.8} & 65.4\pos{6.4} & 16.9\negval{1.0} & 32.9\zeroval{0.0} & 68.2\pos{5.0}
& 82.4\pos{1.7} & 87.2\pos{0.3} & 68.7\pos{4.4} & 91.3\pos{1.4} & 41.9\pos{4.6} & 63.8\pos{2.7}
& \ca 62.6\pos{3.1} \\

\rowcolor{bggray}
& \method{}
& 75.9\pos{14.6} & 67.9\pos{8.9} & 17.9\zeroval{0.0} & 33.1\pos{0.2} & 68.7\pos{5.5} & 82.7\pos{2.0} & 87.1\pos{0.2} & 65.8\pos{1.5} & 90.2\pos{0.3} & 37.7\pos{0.4} & 62.2\pos{1.1} & \ca \textbf{62.7}\pos{3.2} \\

\bottomrule
\end{tabular}
}
\par\medskip

(c) ViT-B/32.\par
\resizebox{\textwidth}{!}{
\begin{tabular}{ll|ccccccccccc|c}
\toprule
$K_{\text{eff}}$ & Method
& \rotatebox{45}{ImageNet} & \rotatebox{45}{SUN397} & \rotatebox{45}{Aircraft} & \rotatebox{45}{EuroSAT}
& \rotatebox{45}{StanfordCars} & \rotatebox{45}{Food101} & \rotatebox{45}{Pets} & \rotatebox{45}{Flowers102}
& \rotatebox{45}{Caltech101} & \rotatebox{45}{DTD} & \rotatebox{45}{UCF101}
& \ca \textbf{Avg.} \\
\midrule
 & CLIP
& 62.0 & 62.1 & 19.1 & 45.4 & 60.2 & 80.4 & 87.3 & 66.6 & 91.4 & 42.7 & 63.5
& \ca 61.9 \\

\midrule
Medium
& StatA
& 65.9\pos{3.9} & 63.3\pos{1.2} & 21.9\pos{2.8} & 51.3\pos{5.9} & 69.3\pos{9.1}
& 82.2\pos{1.8} & 90.3\pos{3.0} & 74.1\pos{7.5} & 92.6\pos{1.2} & 47.4\pos{4.7} & 66.1\pos{2.6}
& \ca 65.9\pos{4.0} \\

\rowcolor{bggray}
& \method{}
& 79.1\pos{17.1} & 76.0\pos{13.9} & 22.7\pos{3.6} & 52.1\pos{6.7} & 74.4\pos{14.2} & 89.1\pos{8.7} & 90.3\pos{3.0} & 74.0\pos{7.4} & 94.2\pos{2.8} & 47.0\pos{4.3} & 69.2\pos{5.7} & \ca \textbf{69.8}\pos{7.9} \\

\midrule
High
& StatA
& 67.0\pos{5.0} & 65.0\pos{2.9} & 20.2\pos{1.1} & 51.1\pos{5.7} & 68.5\pos{8.3}
& 82.7\pos{2.3} & 88.5\pos{1.2} & 73.7\pos{7.1} & 92.5\pos{1.1} & 49.5\pos{6.8} & 66.9\pos{3.4}
& \ca 66.0\pos{4.1} \\

\rowcolor{bggray}
& \method{}
& 79.3\pos{17.3} & 75.1\pos{13.0} & 20.2\pos{1.1} & 52.1\pos{6.7} & 71.6\pos{11.4} & 86.5\pos{6.1} & 87.9\pos{0.6} & 71.8\pos{5.2} & 93.7\pos{2.3} & 43.8\pos{1.1} & 67.9\pos{4.4} & \ca \textbf{68.2}\pos{6.3} \\

\midrule
Very High
& StatA
& 66.6\pos{4.6} & 66.0\pos{3.9} & 18.8\negval{0.3} & 51.0\pos{5.6} & 65.1\pos{4.9}
& 81.5\pos{1.1} & 88.0\pos{0.7} & 70.6\pos{4.0} & 91.9\pos{0.5} & 49.5\pos{6.8} & 66.5\pos{3.0}
& \ca 65.1\pos{3.2} \\

\rowcolor{bggray}
& \method{}
& 76.9\pos{14.9} & 72.3\pos{10.2} & 18.4\negval{0.7} & 52.1\pos{6.7} & 66.9\pos{6.7} & 82.6\pos{2.2} & 87.5\pos{0.2} & 68.6\pos{2.0} & 91.6\pos{0.2} & 43.0\pos{0.3} & 65.3\pos{1.8} & \ca \textbf{65.9}\pos{4.0} \\

\bottomrule
\end{tabular}
}
\par\medskip

(d) ViT-L/14.\par
\resizebox{\textwidth}{!}{
\begin{tabular}{ll|ccccccccccc|c}
\toprule
$K_{\text{eff}}$ & Method
& \rotatebox{45}{ImageNet} & \rotatebox{45}{SUN397} & \rotatebox{45}{Aircraft} & \rotatebox{45}{EuroSAT}
& \rotatebox{45}{StanfordCars} & \rotatebox{45}{Food101} & \rotatebox{45}{Pets} & \rotatebox{45}{Flowers102}
& \rotatebox{45}{Caltech101} & \rotatebox{45}{DTD} & \rotatebox{45}{UCF101}
& \ca \textbf{Avg.} \\
\midrule
 & CLIP
& 73.5 & 67.7 & 32.5 & 60.3 & 76.9 & 90.9 & 93.5 & 79.5 & 95.2 & 53.5 & 74.9
& \ca 72.6 \\

\midrule
Medium
& StatA
& 76.2\pos{2.7} & 69.4\pos{1.7} & 39.1\pos{6.6} & 71.0\pos{10.7} & 81.9\pos{5.0}
& 91.7\pos{0.8} & 94.8\pos{1.3} & 81.9\pos{2.4} & 95.6\pos{0.4} & 56.9\pos{3.4} & 77.6\pos{2.7}
& \ca 76.0\pos{3.4} \\

\rowcolor{bggray}
& \method{}
& 85.0\pos{11.5} & 79.2\pos{11.5} & 38.9\pos{6.4} & 67.4\pos{7.1} & 84.0\pos{7.1} & 95.9\pos{5.0} & 95.4\pos{1.9} & 83.3\pos{3.8} & 97.7\pos{2.5} & 59.4\pos{5.9} & 80.2\pos{5.3} & \ca \textbf{78.8}\pos{6.2} \\

\midrule
High
& StatA
& 77.2\pos{3.7} & 70.9\pos{3.2} & 36.8\pos{4.3} & 71.2\pos{10.9} & 82.0\pos{5.1}
& 92.3\pos{1.4} & 94.3\pos{0.8} & 81.9\pos{2.4} & 95.3\pos{0.1} & 58.7\pos{5.2} & 78.8\pos{3.9}
& \ca 76.3\pos{3.7} \\

\rowcolor{bggray}
& \method{}
& 85.3\pos{11.8} & 78.3\pos{10.6} & 35.3\pos{2.8} & 67.2\pos{6.9} & 83.6\pos{6.7} & 94.6\pos{3.7} & 94.2\pos{0.7} & 82.5\pos{3.0} & 96.9\pos{1.7} & 55.9\pos{2.4} & 79.0\pos{4.1} & \ca \textbf{77.5}\pos{5.0} \\

\midrule
Very High
& StatA
& 77.3\pos{3.8} & 71.6\pos{3.9} & 33.7\pos{1.2} & 71.2\pos{10.9} & 79.5\pos{2.6}
& 91.7\pos{0.8} & 94.1\pos{0.6} & 80.7\pos{1.2} & 94.9\negval{0.3} & 59.0\pos{5.5} & 78.7\pos{3.8}
& \ca 75.7\pos{3.1} \\

\rowcolor{bggray}
& \method{}
& 84.5\pos{11.0} & 76.0\pos{8.3} & 32.6\pos{0.1} & 67.2\pos{6.9} & 80.9\pos{4.0} & 92.4\pos{1.5} & 94.0\pos{0.5} & 80.3\pos{0.8} & 95.1\negval{0.1} & 54.8\pos{1.3} & 76.6\pos{1.7} & \ca \textbf{75.9}\pos{3.3} \\

\bottomrule
\end{tabular}
}
\end{table*}

\begin{table*}[htbp]
\centering
\caption{\textbf{Results on four additional CLIP backbones, online adaptation with batch size of 128.} Subscript \poscolor{green} indicates improvement, \negcolor{red} indicates decline, and \zerocolor{gray} indicates no change compared with zero-shot performance.}
\label{tab:app-more-clip-online}

(a) ResNet-50.\par
\resizebox{\textwidth}{!}{
\begin{tabular}{ll|ccccccccccc|c}
\toprule
Scenario & Method
& \rotatebox{45}{ImageNet} & \rotatebox{45}{SUN397} & \rotatebox{45}{Aircraft} & \rotatebox{45}{EuroSAT}
& \rotatebox{45}{StanfordCars} & \rotatebox{45}{Food101} & \rotatebox{45}{Pets} & \rotatebox{45}{Flowers102}
& \rotatebox{45}{Caltech101} & \rotatebox{45}{DTD} & \rotatebox{45}{UCF101}
& \ca \textbf{Avg.} \\
\midrule
 & CLIP
& 58.2 & 58.9 & 17.0 & 36.2 & 55.8 & 77.4 & 85.7 & 66.1 & 85.7 & 42.8 & 61.8
& \ca 58.7 \\
\midrule

Low & StatA
& 54.6\negval{3.6} & 56.6\negval{2.3} & 15.1\negval{1.9} & 39.7\pos{3.5} & 57.6\pos{1.8}
& 79.4\pos{2.0} & 85.1\negval{0.6} & 60.7\negval{5.4} & 87.8\pos{2.1} & 44.4\pos{1.6} & 61.7\negval{0.1}
& \ca 58.4\negval{0.3} \\
\rowcolor{bggray}
 & \method{}
& 58.2\zeroval{0.0} & 60.4\pos{1.5} & 16.2\negval{0.8} & 39.5\pos{3.3} & 57.7\pos{1.9} & 85.2\pos{7.8} & 89.5\pos{3.8} & 66.7\pos{0.6} & 87.8\pos{2.1} & 43.0\pos{0.2} & 62.9\pos{1.1} & \ca \textbf{60.6}\pos{1.9} \\
\midrule

Medium & StatA
& 59.6\pos{1.4} & 60.8\pos{1.9} & 17.7\pos{0.7} & 43.5\pos{7.3} & 65.9\pos{10.1}
& 84.5\pos{7.1} & 90.6\pos{4.9} & 68.1\pos{2.0} & 89.3\pos{3.6} & 45.5\pos{2.7} & 64.5\pos{2.7}
& \ca 62.8\pos{4.1} \\
\rowcolor{bggray}
 & \method{}
& 70.3\pos{12.1} & 72.9\pos{14.0} & 20.9\pos{3.9} & 41.0\pos{4.8} & 69.7\pos{13.9} & 92.5\pos{15.1} & 93.7\pos{8.0} & 72.0\pos{5.9} & 90.5\pos{4.8} & 48.3\pos{5.5} & 70.6\pos{8.8} & \ca \textbf{67.5}\pos{8.8} \\
\midrule

High & StatA
& 64.7\pos{6.5} & 62.6\pos{3.7} & 18.5\pos{1.5} & 43.6\pos{7.4} & 68.5\pos{12.7}
& 85.8\pos{8.4} & 92.2\pos{6.5} & 70.1\pos{4.0} & 89.8\pos{4.1} & 45.9\pos{3.1} & 65.2\pos{3.4}
& \ca 64.3\pos{5.6} \\
\rowcolor{bggray}
 & \method{}
& 78.3\pos{20.1} & 77.1\pos{18.2} & 22.2\pos{5.2} & 41.3\pos{5.1} & 72.9\pos{17.1} & 93.4\pos{16.0} & 94.2\pos{8.5} & 73.6\pos{7.5} & 91.4\pos{5.7} & 49.7\pos{6.9} & 72.8\pos{11.0} & \ca \textbf{69.7}\pos{11.0} \\
\midrule

Separate & StatA
& 66.6\pos{8.4} & 62.6\pos{3.7} & 19.8\pos{2.8} & 44.3\pos{8.1} & 69.5\pos{13.7}
& 85.6\pos{8.2} & 93.8\pos{8.1} & 71.9\pos{5.8} & 90.2\pos{4.5} & 46.0\pos{3.2} & 65.3\pos{3.5}
& \ca 65.1\pos{6.4} \\
\rowcolor{bggray}
 & \method{}
& 79.4\pos{21.2} & 76.7\pos{17.8} & 24.7\pos{7.7} & 40.3\pos{4.1} & 73.3\pos{17.5} & 92.8\pos{15.4} & 94.1\pos{8.4} & 74.9\pos{8.8} & 92.3\pos{6.6} & 53.0\pos{10.2} & 73.8\pos{12.0} & \ca \textbf{70.5}\pos{11.8} \\
\bottomrule
\end{tabular}
}
\par\medskip

(b) ResNet-101.\par
\resizebox{\textwidth}{!}{
\begin{tabular}{ll|ccccccccccc|c}
\toprule
Scenario & Method
& \rotatebox{45}{ImageNet} & \rotatebox{45}{SUN397} & \rotatebox{45}{Aircraft} & \rotatebox{45}{EuroSAT}
& \rotatebox{45}{StanfordCars} & \rotatebox{45}{Food101} & \rotatebox{45}{Pets} & \rotatebox{45}{Flowers102}
& \rotatebox{45}{Caltech101} & \rotatebox{45}{DTD} & \rotatebox{45}{UCF101}
& \ca \textbf{Avg.} \\
\midrule
 & CLIP
& 61.3 & 59.0 & 17.9 & 32.9 & 63.2 & 80.7 & 86.9 & 64.3 & 89.9 & 37.3 & 61.1
& \ca 59.5 \\
\midrule

Low & StatA
& 60.5\negval{0.8} & 59.3\pos{0.3} & 16.9\negval{1.0} & 32.7\negval{0.2} & 65.5\pos{2.3}
& 84.9\pos{4.2} & 91.0\pos{4.1} & 67.8\pos{3.5} & 92.2\pos{2.3} & 41.1\pos{3.8} & 62.8\pos{1.7}
& \ca \textbf{61.3}\pos{1.8} \\
\rowcolor{bggray}
 & \method{}
& 61.8\pos{0.5} & 60.3\pos{1.3} & 17.6\negval{0.3} & 32.4\negval{0.5} & 65.9\pos{2.7} & 87.5\pos{6.8} & 89.9\pos{3.0} & 65.7\pos{1.4} & 91.8\pos{1.9} & 37.7\pos{0.4} & 61.9\pos{0.8} & \ca 61.1\pos{1.6} \\
\midrule

Medium & StatA
& 66.1\pos{4.8} & 64.2\pos{5.2} & 19.7\pos{1.8} & 33.3\pos{0.4} & 72.2\pos{9.0}
& 88.1\pos{7.4} & 94.1\pos{7.2} & 72.1\pos{7.8} & 93.2\pos{3.3} & 42.9\pos{5.6} & 65.2\pos{4.1}
& \ca 64.6\pos{5.1} \\
\rowcolor{bggray}
 & \method{}
& 71.9\pos{10.6} & 70.5\pos{11.5} & 21.0\pos{3.1} & 33.0\pos{0.1} & 74.1\pos{10.9} & 92.6\pos{11.9} & 92.4\pos{5.5} & 69.6\pos{5.3} & 93.9\pos{4.0} & 41.4\pos{4.1} & 67.8\pos{6.7} & \ca \textbf{66.2}\pos{6.7} \\
\midrule

High & StatA
& 70.5\pos{9.2} & 65.9\pos{6.9} & 20.6\pos{2.7} & 33.5\pos{0.6} & 74.1\pos{10.9}
& 88.7\pos{8.0} & 94.4\pos{7.5} & 73.1\pos{8.8} & 93.4\pos{3.5} & 43.0\pos{5.7} & 65.7\pos{4.6}
& \ca 65.7\pos{6.2} \\
\rowcolor{bggray}
 & \method{}
& 78.2\pos{16.9} & 74.1\pos{15.1} & 21.8\pos{3.9} & 33.4\pos{0.5} & 76.3\pos{13.1} & 93.2\pos{12.5} & 92.8\pos{5.9} & 70.8\pos{6.5} & 94.6\pos{4.7} & 42.5\pos{5.2} & 69.5\pos{8.4} & \ca \textbf{67.9}\pos{8.4} \\
\midrule

Separate & StatA
& 71.4\pos{10.1} & 65.7\pos{6.7} & 22.1\pos{4.2} & 32.2\negval{0.7} & 74.9\pos{11.7}
& 88.5\pos{7.8} & 94.2\pos{7.3} & 73.9\pos{9.6} & 93.4\pos{3.5} & 41.9\pos{4.6} & 65.7\pos{4.6}
& \ca 65.8\pos{6.3} \\
\rowcolor{bggray}
 & \method{}
& 78.8\pos{17.5} & 74.0\pos{15.0} & 23.8\pos{5.9} & 33.8\pos{0.9} & 76.8\pos{13.6} & 92.7\pos{12.0} & 92.7\pos{5.8} & 72.1\pos{7.8} & 95.2\pos{5.3} & 44.1\pos{6.8} & 70.3\pos{9.2} & \ca \textbf{68.6}\pos{9.1} \\
\bottomrule
\end{tabular}
}
\par\medskip

(c) ViT-B/32.\par
\resizebox{\textwidth}{!}{
\begin{tabular}{ll|ccccccccccc|c}
\toprule
Scenario & Method
& \rotatebox{45}{ImageNet} & \rotatebox{45}{SUN397} & \rotatebox{45}{Aircraft} & \rotatebox{45}{EuroSAT}
& \rotatebox{45}{StanfordCars} & \rotatebox{45}{Food101} & \rotatebox{45}{Pets} & \rotatebox{45}{Flowers102}
& \rotatebox{45}{Caltech101} & \rotatebox{45}{DTD} & \rotatebox{45}{UCF101}
& \ca \textbf{Avg.} \\
\midrule
 & CLIP
& 62.0 & 62.1 & 19.1 & 45.4 & 60.2 & 80.4 & 87.3 & 66.6 & 91.4 & 42.7 & 63.5
& \ca 61.9 \\
\midrule

Low & StatA
& 61.4\negval{0.6} & 62.7\pos{0.6} & 19.2\pos{0.1} & 51.0\pos{5.6} & 61.8\pos{1.6}
& 82.6\pos{2.2} & 91.0\pos{3.7} & 69.0\pos{2.4} & 92.9\pos{1.5} & 46.4\pos{3.7} & 64.4\pos{0.9}
& \ca \textbf{63.9}\pos{2.0} \\
\rowcolor{bggray}
 & \method{}
& 62.4\pos{0.4} & 63.6\pos{1.5} & 18.2\negval{0.9} & 49.9\pos{4.5} & 62.2\pos{2.0} & 87.4\pos{7.0} & 90.3\pos{3.0} & 67.8\pos{1.2} & 92.6\pos{1.2} & 42.3\negval{0.4} & 64.1\pos{0.6} & \ca 63.7\pos{1.8} \\
\midrule

Medium & StatA
& 64.6\pos{2.6} & 64.8\pos{2.7} & 21.4\pos{2.3} & 49.9\pos{4.5} & 68.1\pos{7.9}
& 84.4\pos{4.0} & 92.8\pos{5.5} & 72.5\pos{5.9} & 93.5\pos{2.1} & 46.4\pos{3.7} & 65.5\pos{2.0}
& \ca 65.8\pos{3.9} \\
\rowcolor{bggray}
 & \method{}
& 72.9\pos{10.9} & 74.3\pos{12.2} & 22.2\pos{3.1} & 50.7\pos{5.3} & 73.7\pos{13.5} & 93.1\pos{12.7} & 93.1\pos{5.8} & 72.2\pos{5.6} & 94.2\pos{2.8} & 47.4\pos{4.7} & 69.2\pos{5.7} & \ca \textbf{69.4}\pos{7.5} \\
\midrule

High & StatA
& 66.9\pos{4.9} & 64.9\pos{2.8} & 22.0\pos{2.9} & 50.1\pos{4.7} & 69.9\pos{9.7}
& 84.6\pos{4.2} & 93.2\pos{5.9} & 73.5\pos{6.9} & 93.7\pos{2.3} & 46.3\pos{3.6} & 65.6\pos{2.1}
& \ca 66.4\pos{4.6} \\
\rowcolor{bggray}
 & \method{}
& 79.7\pos{17.7} & 77.7\pos{15.6} & 23.4\pos{4.3} & 51.1\pos{5.7} & 76.6\pos{16.4} & 93.8\pos{13.4} & 93.6\pos{6.3} & 73.6\pos{7.0} & 94.7\pos{3.3} & 48.4\pos{5.7} & 70.4\pos{6.9} & \ca \textbf{71.2}\pos{9.3} \\
\midrule

Separate & StatA
& 67.0\pos{5.0} & 63.8\pos{1.7} & 22.9\pos{3.8} & 44.9\negval{0.5} & 70.4\pos{10.2}
& 84.1\pos{3.7} & 92.8\pos{5.5} & 74.6\pos{8.0} & 94.0\pos{2.6} & 45.1\pos{2.4} & 65.0\pos{1.5}
& \ca 65.9\pos{4.0} \\
\rowcolor{bggray}
 & \method{}
& 80.3\pos{18.3} & 77.3\pos{15.2} & 25.4\pos{6.3} & 49.1\pos{3.7} & 77.0\pos{16.8} & 93.1\pos{12.7} & 93.4\pos{6.1} & 75.0\pos{8.4} & 95.1\pos{3.7} & 50.6\pos{7.9} & 70.4\pos{6.9} & \ca \textbf{71.5}\pos{9.6} \\
\bottomrule
\end{tabular}
}
\par\medskip

(d) ViT-L/14.\par
\resizebox{\textwidth}{!}{
\begin{tabular}{ll|ccccccccccc|c}
\toprule
Scenario & Method
& \rotatebox{45}{ImageNet} & \rotatebox{45}{SUN397} & \rotatebox{45}{Aircraft} & \rotatebox{45}{EuroSAT}
& \rotatebox{45}{StanfordCars} & \rotatebox{45}{Food101} & \rotatebox{45}{Pets} & \rotatebox{45}{Flowers102}
& \rotatebox{45}{Caltech101} & \rotatebox{45}{DTD} & \rotatebox{45}{UCF101}
& \ca \textbf{Avg.} \\
\midrule
 & CLIP
& 73.5 & 67.7 & 32.5 & 60.3 & 76.9 & 90.9 & 93.5 & 79.5 & 95.2 & 53.5 & 74.9
& \ca 72.6 \\
\midrule

Low & StatA
& 73.3\negval{0.2} & 68.2\pos{0.5} & 34.1\pos{1.6} & 68.8\pos{8.5} & 77.7\pos{0.8}
& 92.0\pos{1.1} & 95.0\pos{1.5} & 80.2\pos{0.7} & 95.6\pos{0.4} & 55.4\pos{1.9} & 76.9\pos{2.0}
& \ca \textbf{74.3}\pos{1.7} \\
\rowcolor{bggray}
 & \method{}
& 73.8\pos{0.3} & 68.3\pos{0.6} & 32.2\negval{0.3} & 67.8\pos{7.5} & 77.7\pos{0.8} & 94.5\pos{3.6} & 95.2\pos{1.7} & 79.6\pos{0.1} & 96.0\pos{0.8} & 54.0\pos{0.5} & 75.9\pos{1.0} & \ca 74.1\pos{1.5} \\
\midrule

Medium & StatA
& 75.8\pos{2.3} & 70.6\pos{2.9} & 38.3\pos{5.8} & 69.8\pos{9.6} & 81.9\pos{5.0}
& 93.2\pos{2.3} & 96.3\pos{2.8} & 81.5\pos{2.0} & 95.8\pos{0.6} & 55.6\pos{2.1} & 77.6\pos{2.7}
& \ca 76.0\pos{3.4} \\
\rowcolor{bggray}
 & \method{}
& 81.3\pos{7.8} & 77.3\pos{9.6} & 37.9\pos{5.4} & 69.0\pos{8.7} & 83.9\pos{7.0} & 97.2\pos{6.3} & 96.8\pos{3.3} & 82.0\pos{2.5} & 97.1\pos{1.9} & 58.8\pos{5.3} & 79.6\pos{4.7} & \ca \textbf{78.3}\pos{5.7} \\
\midrule

High & StatA
& 77.6\pos{4.1} & 71.1\pos{3.4} & 39.6\pos{7.1} & 68.9\pos{8.6} & 82.9\pos{6.0}
& 93.5\pos{2.6} & 96.5\pos{3.0} & 81.9\pos{2.4} & 95.7\pos{0.5} & 55.5\pos{2.0} & 77.5\pos{2.6}
& \ca 76.4\pos{3.8} \\
\rowcolor{bggray}
 & \method{}
& 85.6\pos{12.1} & 80.3\pos{12.6} & 39.7\pos{7.2} & 69.4\pos{9.1} & 85.2\pos{8.3} & 97.5\pos{6.6} & 97.0\pos{3.5} & 82.6\pos{3.1} & 97.4\pos{2.2} & 60.3\pos{6.8} & 80.4\pos{5.5} & \ca \textbf{79.6}\pos{7.0} \\
\midrule

Separate & StatA
& 77.6\pos{4.1} & 70.5\pos{2.8} & 41.3\pos{8.8} & 66.3\pos{6.0} & 83.2\pos{6.3}
& 93.5\pos{2.6} & 96.3\pos{2.8} & 82.0\pos{2.5} & 95.8\pos{0.6} & 54.5\pos{1.0} & 76.8\pos{1.9}
& \ca 76.1\pos{3.6} \\
\rowcolor{bggray}
 & \method{}
& 85.7\pos{12.2} & 80.4\pos{12.7} & 41.4\pos{8.9} & 68.1\pos{7.8} & 85.2\pos{8.3} & 97.4\pos{6.5} & 96.8\pos{3.3} & 83.4\pos{3.9} & 97.7\pos{2.5} & 62.6\pos{9.1} & 80.5\pos{5.6} & \ca \textbf{79.9}\pos{7.3} \\
\bottomrule
\end{tabular}
}
\end{table*}

\begin{table*}[htbp]
\centering
\caption{\textbf{Results on OpenCLIP (151M), batch adaptation with batch size of 1,000.} Subscript \poscolor{green} indicates improvement, \negcolor{red} indicates decline, and \zerocolor{gray} indicates no change compared with zero-shot performance.}
\label{tab:app-more-vlm-openclip}

\resizebox{\textwidth}{!}{
\begin{tabular}{ll|ccccccccccc|c}
\toprule
$K_{\text{eff}}$ & Method
& \rotatebox{45}{ImageNet} & \rotatebox{45}{SUN397} & \rotatebox{45}{Aircraft} & \rotatebox{45}{EuroSAT}
& \rotatebox{45}{StanfordCars} & \rotatebox{45}{Food101} & \rotatebox{45}{Pets} & \rotatebox{45}{Flowers102}
& \rotatebox{45}{Caltech101} & \rotatebox{45}{DTD} & \rotatebox{45}{UCF101}
& \ca \textbf{Avg.} \\
\midrule
 & OpenCLIP
& 73.0 & 69.9 & 29.7 & 56.4 & 89.9 & 87.5 & 92.8 & 75.4 & 96.7 & 58.3 & 67.5 & \ca 72.5 \\

\midrule
Medium
& StatA
& 73.4\pos{0.4} & 68.8\negval{1.1} & 32.6\pos{2.8} & 62.3\pos{5.9} & 91.2\pos{1.3} & 87.4\negval{0.2} & 93.9\pos{1.0} & 79.7\pos{4.2} & 96.4\negval{0.3} & 59.7\pos{1.4} & 70.2\pos{2.7} & \ca 74.1\pos{1.7} \\

\rowcolor{bggray}
& \method{}
& 83.5\pos{10.5} & 80.0\pos{10.1} & 32.6\pos{2.9} & 61.5\pos{5.1} & 93.9\pos{4.0} & 93.3\pos{5.8} & 95.3\pos{2.5} & 78.4\pos{3.0} & 98.1\pos{1.4} & 65.1\pos{6.8} & 74.0\pos{6.6} & \ca \textbf{77.8}\pos{5.3} \\

\midrule
High
& StatA
& 74.1\pos{1.1} & 70.1\pos{0.3} & 32.1\pos{2.4} & 62.4\pos{6.0} & 91.4\pos{1.5} & 88.2\pos{0.7} & 93.7\pos{0.9} & 79.7\pos{4.3} & 96.7\pos{0.1} & 62.1\pos{3.7} & 71.1\pos{3.6} & \ca 74.7\pos{2.2} \\

\rowcolor{bggray}
& \method{}
& 83.7\pos{10.7} & 80.2\pos{10.4} & 31.0\pos{1.2} & 61.5\pos{5.1} & 93.9\pos{4.0} & 91.7\pos{4.2} & 93.7\pos{0.8} & 77.5\pos{2.0} & 97.9\pos{1.2} & 61.3\pos{2.9} & 72.3\pos{4.8} & \ca \textbf{76.8}\pos{4.3} \\

\midrule
Very High
& StatA
& 74.6\pos{1.6} & 71.3\pos{1.4} & 30.9\pos{1.1} & 62.4\pos{6.0} & 91.2\pos{1.3} & 88.2\pos{0.7} & 93.6\pos{0.8} & 77.8\pos{2.3} & 96.7\zeroval{0.0} & 62.6\pos{4.3} & 71.4\pos{3.9} & \ca 74.6\pos{2.1} \\

\rowcolor{bggray}
& \method{}
& 83.1\pos{10.1} & 79.1\pos{9.2} & 28.8\negval{0.9} & 61.4\pos{5.0} & 93.1\pos{3.2} & 89.3\pos{1.7} & 93.4\pos{0.6} & 75.6\pos{0.1} & 96.6\negval{0.1} & 60.5\pos{2.1} & 70.4\pos{2.9} & \ca \textbf{75.6}\pos{3.1} \\

\bottomrule
\end{tabular}
}
\end{table*}

\begin{table*}[htbp]
\centering
\caption{\textbf{Results on SigLIP (878M), batch adaptation with batch size of 1,000.} Subscript \poscolor{green} indicates improvement, \negcolor{red} indicates decline, and \zerocolor{gray} indicates no change compared with zero-shot performance.}
\label{tab:app-more-vlm-siglip}

\resizebox{\textwidth}{!}{
\begin{tabular}{ll|ccccccccccc|c}
\toprule
$K_{\text{eff}}$ & Method
& \rotatebox{45}{ImageNet} & \rotatebox{45}{SUN397} & \rotatebox{45}{Aircraft} & \rotatebox{45}{EuroSAT}
& \rotatebox{45}{StanfordCars} & \rotatebox{45}{Food101} & \rotatebox{45}{Pets} & \rotatebox{45}{Flowers102}
& \rotatebox{45}{Caltech101} & \rotatebox{45}{DTD} & \rotatebox{45}{UCF101}
& \ca \textbf{Avg.} \\
\midrule
& SigLIP
& 82.3 & 75.4 & 60.2 & 57.1 & 94.7 & 94.7 & 96.5 & 92.7 & 98.2 & 64.8 & 83.7 & \ca 81.8 \\

\midrule
Medium
& StatA
& 82.7\pos{0.5} & 76.1\pos{0.7} & 64.7\pos{4.5} & 60.4\pos{3.3} & 95.4\pos{0.7} & 94.0\negval{0.7} & 95.4\negval{1.1} & 90.4\negval{2.3} & 97.6\negval{0.6} & 68.0\pos{3.2} & 86.0\pos{2.2} & \ca 82.8\pos{1.0} \\

\rowcolor{bggray}
& \method{}
& 91.4\pos{9.1} & 86.0\pos{10.6} & 65.8\pos{5.6} & 59.4\pos{2.3} & 96.8\pos{2.1} & 98.1\pos{3.4} & 98.3\pos{1.8} & 93.5\pos{0.7} & 98.8\pos{0.6} & 71.9\pos{7.1} & 87.0\pos{3.3} & \ca \textbf{86.1}\pos{4.2} \\

\midrule
High
& StatA
& 82.8\pos{0.6} & 77.3\pos{1.9} & 64.1\pos{3.9} & 60.8\pos{3.7} & 95.5\pos{0.8} & 94.8\pos{0.1} & 95.0\negval{1.5} & 90.5\negval{2.2} & 97.8\negval{0.4} & 68.6\pos{3.8} & 85.6\pos{1.9} & \ca 83.0\pos{1.1} \\

\rowcolor{bggray}
& \method{}
& 91.5\pos{9.2} & 86.6\pos{11.2} & 64.0\pos{3.8} & 60.1\pos{3.0} & 96.6\pos{2.0} & 97.1\pos{2.4} & 97.2\pos{0.7} & 93.5\pos{0.7} & 98.7\pos{0.5} & 66.6\pos{1.7} & 85.3\pos{1.6} & \ca \textbf{85.2}\pos{3.3} \\

\midrule
Very High
& StatA
& 82.8\pos{0.5} & 77.6\pos{2.2} & 60.4\pos{0.2} & 60.8\pos{3.7} & 95.4\pos{0.7} & 94.8\pos{0.1} & 94.9\negval{1.6} & 90.4\negval{2.3} & 97.7\negval{0.5} & 69.0\pos{4.1} & 84.5\pos{0.7} & \ca 82.6\pos{0.7} \\

\rowcolor{bggray}
& \method{}
& 91.0\pos{8.8} & 85.2\pos{9.8} & 58.8\negval{1.4} & 60.1\pos{3.0} & 96.2\pos{1.5} & 95.6\pos{0.9} & 97.0\pos{0.5} & 93.4\pos{0.6} & 97.6\negval{0.6} & 65.1\pos{0.3} & 82.7\negval{1.1} & \ca \textbf{83.9}\pos{2.0} \\

\bottomrule
\end{tabular}
}
\end{table*}

\begin{table*}[htbp]
\centering
\caption{\textbf{Results on EVA-CLIP (1.1B), batch adaptation with batch size of 1,000.} Subscript \poscolor{green} indicates improvement, \negcolor{red} indicates decline, and \zerocolor{gray} indicates no change compared with zero-shot performance.}
\label{tab:app-more-vlm-evalip}

\resizebox{\textwidth}{!}{
\begin{tabular}{ll|ccccccccccc|c}
\toprule
$K_{\text{eff}}$ & Method
& \rotatebox{45}{ImageNet} & \rotatebox{45}{SUN397} & \rotatebox{45}{Aircraft} & \rotatebox{45}{EuroSAT}
& \rotatebox{45}{StanfordCars} & \rotatebox{45}{Food101} & \rotatebox{45}{Pets} & \rotatebox{45}{Flowers102}
& \rotatebox{45}{Caltech101} & \rotatebox{45}{DTD} & \rotatebox{45}{UCF101}
& \ca \textbf{Avg.} \\
\midrule
& EVA-CLIP
& 78.0 & 72.9 & 33.6 & 70.1 & 91.2 & 91.0 & 94.2 & 75.0 & 97.3 & 60.6 & 78.0 & \ca 76.5 \\

\midrule
Medium
& StatA
& 77.8\negval{0.2} & 71.5\negval{1.5} & 36.9\pos{3.3} & 72.7\pos{2.6} & 91.8\pos{0.6} & 90.8\negval{0.2} & 93.9\negval{0.3} & 77.8\pos{2.7} & 97.1\negval{0.2} & 62.4\pos{1.8} & 79.7\pos{1.7} & \ca 77.5\pos{1.0} \\

\rowcolor{bggray}
& \method{}
& 81.1\pos{3.1} & 75.6\pos{2.7} & 36.5\pos{2.8} & 72.7\pos{2.5} & 93.1\pos{1.8} & 93.2\pos{2.2} & 94.8\pos{0.6} & 76.4\pos{1.4} & 97.7\pos{0.4} & 64.8\pos{4.2} & 80.4\pos{2.5} & \ca \textbf{78.7}\pos{2.2} \\

\midrule
High
& StatA
& 78.3\pos{0.4} & 73.0\pos{0.1} & 36.2\pos{2.6} & 73.2\pos{3.1} & 92.2\pos{1.0} & 91.6\pos{0.6} & 94.2\zeroval{0.0} & 78.4\pos{3.4} & 97.4\pos{0.1} & 64.8\pos{4.2} & 80.3\pos{2.4} & \ca 78.2\pos{1.6} \\

\rowcolor{bggray}
& \method{}
& 81.3\pos{3.3} & 76.2\pos{3.3} & 35.0\pos{1.4} & 72.7\pos{2.6} & 93.1\pos{1.9} & 92.7\pos{1.7} & 94.2\zeroval{0.0} & 76.7\pos{1.7} & 97.8\pos{0.5} & 63.5\pos{2.9} & 80.1\pos{2.1} & \ca \textbf{78.5}\pos{2.0} \\

\midrule
Very High
& StatA
& 78.7\pos{0.8} & 73.8\pos{0.8} & 35.4\pos{1.8} & 73.2\pos{3.1} & 92.3\pos{1.0} & 91.6\pos{0.6} & 94.2\zeroval{0.0} & 77.7\pos{2.7} & 97.4\pos{0.1} & 65.5\pos{5.0} & 80.7\pos{2.7} & \ca \textbf{78.2}\pos{1.7} \\

\rowcolor{bggray}
& \method{}
& 81.2\pos{3.2} & 76.1\pos{3.2} & 33.6\negval{0.1} & 72.7\pos{2.6} & 92.9\pos{1.6} & 91.8\pos{0.8} & 94.1\negval{0.1} & 76.0\pos{0.9} & 97.5\pos{0.2} & 63.1\pos{2.5} & 79.5\pos{1.5} & \ca 78.0\pos{1.5} \\

\bottomrule
\end{tabular}
}
\end{table*}

\subsection{Results on full dataset with all classes}
\begin{table*}[htbp]
\centering
\caption{\textbf{Results on full dataset with all classes.} Subscript \poscolor{green} indicates improvement, \negcolor{red} indicates decline, and \zerocolor{gray} indicates no change compared with zero-shot performance.}
\label{tab:app-all-dataset}
\resizebox{\textwidth}{!}
{
\begin{tabular}{cl|ccccccccccc|c}
\toprule
$K_{\text{eff}}$ & Method
& \rotatebox{45}{ImageNet} & \rotatebox{45}{SUN397} & \rotatebox{45}{Aircraft} & \rotatebox{45}{EuroSAT}
& \rotatebox{45}{StanfordCars} & \rotatebox{45}{Food101} & \rotatebox{45}{Pets} & \rotatebox{45}{Flowers102}
& \rotatebox{45}{Caltech101} & \rotatebox{45}{DTD} & \rotatebox{45}{UCF101}
& \ca \textbf{Avg.} \\
\midrule

 & CLIP
 & 66.6 & 62.5 & 24.7 & 48.3 & 65.6 & 85.9 & 89.1 & 70.7 & 93.2 & 43.5 & 67.5
 & \ca 65.2 \\


\midrule

All






 & StatA
 & 69.9\pos{3.3} & 68.7\pos{6.2} & 24.7\zeroval{0.0} & 67.3\pos{19.0} & 68.0\pos{2.4}
 & 87.1\pos{1.2} & 92.4\pos{3.3} & 75.2\pos{4.5} & 94.2\pos{1.0} & 48.4\pos{4.9} & 73.5\pos{6.0}
 & \ca \textbf{69.9}\pos{4.7} \\

\rowcolor{bggray}
& \method{}
 & 68.7\pos{2.1} & 65.4\pos{2.9} & 24.4\negval{0.3} & 59.1\pos{10.8} & 67.2\pos{1.6} & 86.7\pos{0.8} & 90.2\pos{1.1} & 72.8\pos{2.1} & 93.4\pos{0.2} & 45.0\pos{1.5} & 70.9\pos{3.4}
 & \ca 67.6\pos{2.4} \\

\bottomrule
\end{tabular}
}
\end{table*}

We further evaluate the extreme scenario where the model adapts to the full dataset containing all classes simultaneously, representing a dense label distribution that deviates from our sparsity assumption. 
As shown in Tab. \ref{tab:app-all-dataset}, \method{} trails the strongest baseline StatA by a marginal gap ($\sim$2\%). 
This behavior is consistent with our analysis in the main text: our dynamic shrinkage mechanism is inherently designed with an inductive bias to favor sparse effective class sets, which is less optimal when the ground-truth distribution is uniform. 
However, unlike many specialized methods that collapse when assumptions are violated, \method{} maintains competitive high-accuracy performance without severe degradation. 
Considering its significant computational efficiency, our method offers a robust trade-off, serving as a reliable solution even in scenarios with maximal class presence. 

\subsection{Results on random class scenarios}
\begin{table*}[htbp]
\centering
\caption{\textbf{Results on random class scenarios, where $K_\mathrm{eff}$ is randomly sampled within [1, min\{$N$, $K$\}].} Subscript \poscolor{green} indicates improvement, \negcolor{red} indicates decline, and \zerocolor{gray} indicates no change compared with zero-shot performance.}
\label{tab:app-random-classes}

(a) Batch Size: 64.\par
\resizebox{\textwidth}{!}{
\begin{tabular}{l|ccccccccccc|c}
\toprule
Method
& \rotatebox{45}{ImageNet} & \rotatebox{45}{SUN397} & \rotatebox{45}{Aircraft} & \rotatebox{45}{EuroSAT}
& \rotatebox{45}{StanfordCars} & \rotatebox{45}{Food101} & \rotatebox{45}{Pets} & \rotatebox{45}{Flowers102}
& \rotatebox{45}{Caltech101} & \rotatebox{45}{DTD} & \rotatebox{45}{UCF101}
& \ca \textbf{Avg.} \\
\midrule
CLIP
& 66.6 & 62.5 & 24.7 & 48.3 & 65.6 & 85.9 & 89.1 & 70.7 & 93.2 & 43.5 & 67.5
& \ca 65.2 \\
\midrule
StatA
& 68.7\pos{2.1} & 64.8\pos{2.3} & 24.1\negval{0.6} & 50.6\pos{2.3} & 68.9\pos{3.3}
& 87.1\pos{1.2} & 90.8\pos{1.7} & 72.1\pos{1.4} & 93.8\pos{0.6} & 46.6\pos{3.1} & 68.7\pos{1.2}
& \ca 66.9\pos{1.7} \\
\rowcolor{bggray}
\method{}
& 75.0\pos{8.4} & 69.0\pos{6.5} & 22.5\negval{2.2} & 47.8\negval{0.5} & 69.6\pos{4.0} & 89.2\pos{3.3} & 90.0\pos{0.9} & 71.7\pos{1.0} & 93.5\pos{0.3} & 42.5\negval{1.0} & 68.8\pos{1.3} & \ca \textbf{67.2}\pos{2.0} \\
\bottomrule
\end{tabular}
}
\par\medskip

(b) Batch Size: 128.\par
\resizebox{\textwidth}{!}{
\begin{tabular}{l|ccccccccccc|c}
\toprule
Method
& \rotatebox{45}{ImageNet} & \rotatebox{45}{SUN397} & \rotatebox{45}{Aircraft} & \rotatebox{45}{EuroSAT}
& \rotatebox{45}{StanfordCars} & \rotatebox{45}{Food101} & \rotatebox{45}{Pets} & \rotatebox{45}{Flowers102}
& \rotatebox{45}{Caltech101} & \rotatebox{45}{DTD} & \rotatebox{45}{UCF101}
& \ca \textbf{Avg.} \\
\midrule
CLIP
& 66.6 & 62.5 & 24.7 & 48.3 & 65.6 & 85.9 & 89.1 & 70.7 & 93.2 & 43.5 & 67.5
& \ca 65.2 \\
\midrule
StatA
& 68.6\pos{2.0} & 64.3\pos{1.8} & 23.6\negval{1.1} & 53.3\pos{5.0} & 67.5\pos{1.9}
& 86.9\pos{1.0} & 91.1\pos{2.0} & 71.5\pos{0.8} & 93.8\pos{0.6} & 46.9\pos{3.4} & 67.9\pos{0.4}
& \ca \textbf{66.8}\pos{1.6} \\
\rowcolor{bggray}
\method{}
& 74.1\pos{7.5} & 67.3\pos{4.8} & 22.0\negval{2.7} & 50.6\pos{2.3} & 66.4\pos{0.8} & 87.9\pos{2.0} & 90.7\pos{1.6} & 70.2\negval{0.5} & 93.2\zeroval{0.0} & 44.1\pos{0.6} & 66.4\negval{1.1} & \ca 66.6\pos{1.4} \\
\bottomrule
\end{tabular}
}
\par\medskip

(c) Batch Size: 256.\par
\resizebox{\textwidth}{!}{
\begin{tabular}{l|ccccccccccc|c}
\toprule
Method
& \rotatebox{45}{ImageNet} & \rotatebox{45}{SUN397} & \rotatebox{45}{Aircraft} & \rotatebox{45}{EuroSAT}
& \rotatebox{45}{StanfordCars} & \rotatebox{45}{Food101} & \rotatebox{45}{Pets} & \rotatebox{45}{Flowers102}
& \rotatebox{45}{Caltech101} & \rotatebox{45}{DTD} & \rotatebox{45}{UCF101}
& \ca \textbf{Avg.} \\
\midrule
CLIP
& 66.6 & 62.5 & 24.7 & 48.3 & 65.6 & 85.9 & 89.1 & 70.7 & 93.2 & 43.5 & 67.5
& \ca 65.2 \\
\midrule
StatA
& 68.2\pos{1.6} & 64.1\pos{1.6} & 24.1\negval{0.6} & 55.3\pos{7.0} & 66.3\pos{0.7}
& 87.4\pos{1.5} & 91.9\pos{2.8} & 73.2\pos{2.5} & 93.8\pos{0.6} & 47.4\pos{3.9} & 68.7\pos{1.2}
& \ca \textbf{67.3}\pos{2.1} \\
\rowcolor{bggray}
\method{}
& 72.9\pos{6.3} & 64.7\pos{2.2} & 23.3\negval{1.4} & 51.9\pos{3.6} & 64.8\negval{0.8} & 89.0\pos{3.1} & 91.7\pos{2.6} & 72.2\pos{1.5} & 93.5\pos{0.3} & 46.5\pos{3.0} & 67.9\pos{0.4} & \ca 67.1\pos{1.9} \\
\bottomrule
\end{tabular}
}
\par\medskip

(d) Batch Size: 500.\par
\resizebox{\textwidth}{!}{
\begin{tabular}{l|ccccccccccc|c}
\toprule
Method
& \rotatebox{45}{ImageNet} & \rotatebox{45}{SUN397} & \rotatebox{45}{Aircraft} & \rotatebox{45}{EuroSAT}
& \rotatebox{45}{StanfordCars} & \rotatebox{45}{Food101} & \rotatebox{45}{Pets} & \rotatebox{45}{Flowers102}
& \rotatebox{45}{Caltech101} & \rotatebox{45}{DTD} & \rotatebox{45}{UCF101}
& \ca \textbf{Avg.} \\
\midrule
CLIP
& 66.6 & 62.5 & 24.7 & 48.3 & 65.6 & 85.9 & 89.1 & 70.7 & 93.2 & 43.5 & 67.5
& \ca 65.2 \\
\midrule
StatA
& 68.1\pos{1.5} & 64.2\pos{1.7} & 24.9\pos{0.2} & 54.5\pos{6.2} & 67.2\pos{1.6}
& 87.5\pos{1.6} & 92.5\pos{3.4} & 74.2\pos{3.5} & 93.5\pos{0.3} & 47.1\pos{3.6} & 69.4\pos{1.9}
& \ca \textbf{67.6}\pos{2.3} \\
\rowcolor{bggray}
\method{}
& 70.9\pos{4.3} & 62.4\negval{0.1} & 24.7\zeroval{0.0} & 52.7\pos{4.4} & 66.5\pos{0.9} & 89.8\pos{3.9} & 91.8\pos{2.7} & 73.1\pos{2.4} & 93.7\pos{0.5} & 47.6\pos{4.1} & 69.1\pos{1.6} & \ca 67.5\pos{2.3} \\
\bottomrule
\end{tabular}
}
\par\medskip

(e) Batch Size: 1,000.\par
\resizebox{\textwidth}{!}{
\begin{tabular}{l|ccccccccccc|c}
\toprule
Method
& \rotatebox{45}{ImageNet} & \rotatebox{45}{SUN397} & \rotatebox{45}{Aircraft} & \rotatebox{45}{EuroSAT}
& \rotatebox{45}{StanfordCars} & \rotatebox{45}{Food101} & \rotatebox{45}{Pets} & \rotatebox{45}{Flowers102}
& \rotatebox{45}{Caltech101} & \rotatebox{45}{DTD} & \rotatebox{45}{UCF101}
& \ca \textbf{Avg.} \\
\midrule
CLIP
& 66.6 & 62.5 & 24.7 & 48.3 & 65.6 & 85.9 & 89.1 & 70.7 & 93.2 & 43.5 & 67.5
& \ca 65.2 \\
\midrule
StatA
& 67.5\pos{0.9} & 65.2\pos{2.7} & 25.4\pos{0.7} & 55.0\pos{6.7} & 68.0\pos{2.4}
& 87.2\pos{1.3} & 93.0\pos{3.9} & 75.3\pos{4.6} & 93.3\pos{0.1} & 47.8\pos{4.3} & 70.7\pos{3.2}
& \ca 68.0\pos{2.8} \\
\rowcolor{bggray}
\method{}
& 67.9\pos{1.3} & 64.0\pos{1.5} & 25.4\pos{0.7} & 54.7\pos{6.4} & 68.1\pos{2.5} & 89.7\pos{3.8} & 92.0\pos{2.9} & 74.0\pos{3.3} & 94.1\pos{0.9} & 48.1\pos{4.6} & 70.7\pos{3.2} & \ca \textbf{68.1}\pos{2.8} \\
\bottomrule
\end{tabular}
}
\par\medskip

(f) Batch Size: 2000.\par
\resizebox{\textwidth}{!}{
\begin{tabular}{l|ccccccccccc|c}
\toprule
Method
& \rotatebox{45}{ImageNet} & \rotatebox{45}{SUN397} & \rotatebox{45}{Aircraft} & \rotatebox{45}{EuroSAT}
& \rotatebox{45}{StanfordCars} & \rotatebox{45}{Food101} & \rotatebox{45}{Pets} & \rotatebox{45}{Flowers102}
& \rotatebox{45}{Caltech101} & \rotatebox{45}{DTD} & \rotatebox{45}{UCF101}
& \ca \textbf{Avg.} \\
\midrule
CLIP
& 66.6 & 62.5 & 24.7 & 48.3 & 65.6 & 85.9 & 89.1 & 70.7 & 93.2 & 43.5 & 67.5
& \ca 65.2 \\
\midrule
StatA
& 68.1\pos{1.5} & 66.1\pos{3.6} & 26.3\pos{1.6} & 56.7\pos{8.4} & 69.6\pos{4.0}
& 86.7\pos{0.8} & 93.0\pos{3.9} & 77.0\pos{6.3} & 93.3\pos{0.1} & 47.4\pos{3.9} & 71.5\pos{4.0}
& \ca \textbf{68.7}\pos{3.5} \\
\rowcolor{bggray}
\method{}
& 68.8\pos{2.2} & 66.2\pos{3.7} & 26.3\pos{1.6} & 55.0\pos{6.7} & 69.9\pos{4.3} & 89.4\pos{3.5} & 92.0\pos{2.9} & 75.3\pos{4.6} & 94.2\pos{1.0} & 47.7\pos{4.2} & 71.4\pos{3.9} & \ca \textbf{68.7}\pos{3.5} \\
\bottomrule
\end{tabular}
}
\end{table*}

To simulate a highly unpredictable deployment environment, we introduce a challenging "Random" scenario where the number of effective classes $K_{eff}$ varies stochastically between 1 and $\text{min}\{N,K\}$ for each adaptation step. 
This setting effectively aggregates the characteristics of varying sparsity levels into a single dynamic evaluation, with results demonstrated in Tab. \ref{tab:app-random-classes}. 
Under these volatile conditions, \method{} exhibits remarkable stability, achieving performance comparable to the state-of-the-art StatA across varying batch sizes. 
The fact that our method matches the strongest baseline in accuracy while operating with significantly lower latency and a simpler optimization procedure (as detailed in Appendix \ref{app:procedure}) further underscores its practicality for handling real-world data streams with unknown and fluctuating statistics. 

\section{Additional Analyses}
\label{app:ana-more}

\paragraph{Hyperparameter sensitivity.}
\begin{figure}
    \centering
    \begin{subfigure}[b]{0.49\linewidth}
        \centering
        \includegraphics[width=\textwidth]{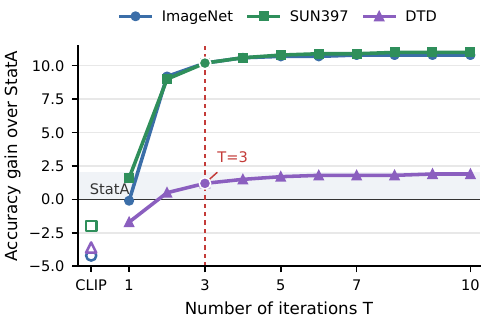}
    \end{subfigure}
    \begin{subfigure}[b]{0.49\linewidth}
        \centering
        \includegraphics[width=\textwidth]{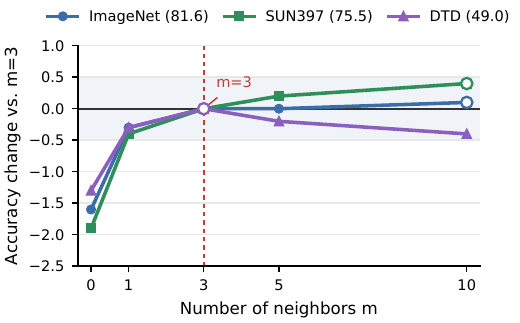}
    \end{subfigure}
    \caption{\textbf{Hyperparameter sensitivity analysis.} Results are reported on batch adaptation, \textit{Medium} scenario, batch size of 1,000.}
    \label{fig:hyper}
    \vspace{-0.6cm}
\end{figure}
We analyze the sensitivity of \method{} to the existing hyperparameters, including iteration number $T$ and the number of neighbors in Laplacian term $m$ in Fig.~\ref{fig:hyper}. 
The results show that \method{} is robust to both hyperparameters. 
Moreover, the performance improves rapidly within the first few iterations and outperforms StatA nearly saturates after $T=3$, suggesting fast practical convergence. 
Together with the dynamic shrinkage mechanism, \method{} thus requires no task-specific hyperparameter tuning. 
We set $T=10$ and $m=3$ by default for all experiments.

\paragraph{Fine-grained ablation analysis.}

\begin{figure}
    \centering
    \includegraphics[width=0.7\linewidth]{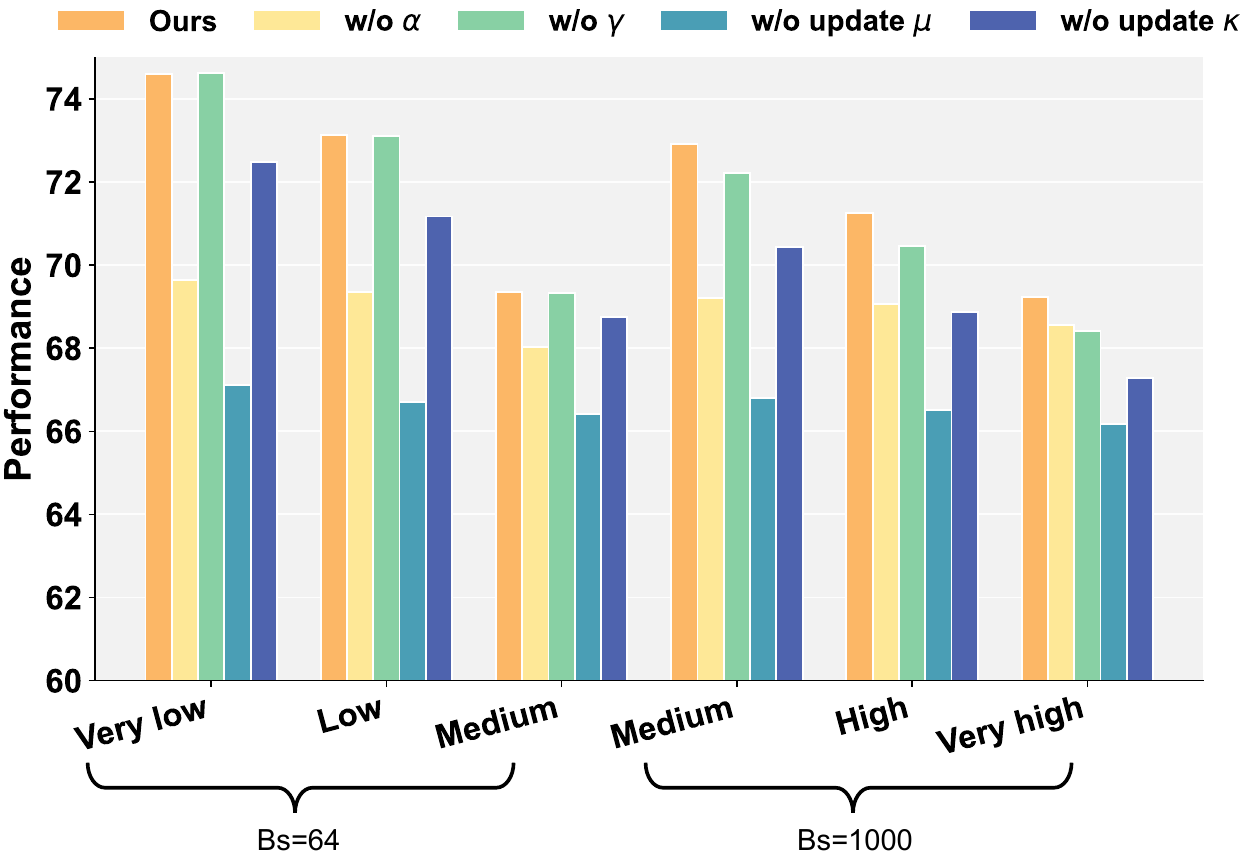}
    \caption{\textbf{Detailed ablation study on components, over various batch sizes and scenarios.} Each reported performance is averaged over all datasets and runs.}
    \label{fig:ablation}
\end{figure}
We present a fine-grained ablation study across batch sizes and sparsity levels in Fig.~\ref{fig:ablation}. 
Consistent with our design motivation, the impact of the class-level weight $\boldsymbol{\alpha}$ diminishes as the effective class set becomes denser (e.g., in the \emph{Very High} scenario). 
This confirms that $\boldsymbol{\alpha}$ functions precisely as intended: effectively suppressing outlier classes in sparse settings while relaxing constraints when the distribution approaches uniformity. 
Conversely, the instance-level weight $\boldsymbol{\gamma}$ shows marginal influence at small batch sizes due to high statistical variance but becomes increasingly significant at larger batch sizes, where it can leverage stable batch statistics to filter unreliable samples effectively. 
Furthermore, the performance gain from iterative parameter updates tends to saturate in scenarios with large batches and dense classes, suggesting that abundant data naturally provides sufficient empirical evidence for reliable estimation. 
Overall, the full \method{} framework consistently yields optimal performance.

\paragraph{Implementation of $\beta_k$.}

\begin{table}[htbp]
\centering
\caption{\textbf{Implementation of shrinkage strength $\beta_k$.}}
\label{tab:beta_k}

\small
\setlength{\tabcolsep}{10pt}
\renewcommand{\arraystretch}{1.10}

\begin{subtable}[t]{0.92\linewidth}
\centering
\caption{Batch adaptation, with batch size of 64.}
\begin{tabular}{lcccc}
\toprule
Scenario & Very Low & Low & Medium & Avg. \\
\midrule
\method{} w/ soft $\beta_k$ & \textbf{75.3} & \textbf{73.3} & 68.6 & \textbf{72.4} \\
\rowcolor{bggray}
\method{} w/ hard $\beta_k$ & 74.6 & 73.1 & \textbf{69.4} & \textbf{72.4} \\
\bottomrule
\end{tabular}
\end{subtable}

\vspace{4pt}

\begin{subtable}[t]{0.92\linewidth}
\centering
\caption{Batch adaptation, with batch size of 1,000.}
\begin{tabular}{lcccc}
\toprule
Scenario & Medium & High & Very High & Avg. \\
\midrule
\method{} w/ soft $\beta_k$ & 72.1 & 70.3 & 68.1 & 70.2 \\
\rowcolor{bggray}
\method{} w/ hard $\beta_k$ & \textbf{72.9} & \textbf{71.3} & \textbf{69.2} & \textbf{71.1} \\
\bottomrule
\end{tabular}
\end{subtable}

\vspace{4pt}

\begin{subtable}[t]{0.92\linewidth}
\centering
\caption{Online adaptation, with batch size of 128.}
\begin{tabular}{lccccc}
\toprule
Scenario & Low & Medium & High & Separate & Avg. \\
\midrule
\method{} w/ soft $\beta_k$ & 64.9 & \textbf{71.8} & \textbf{73.6} & \textbf{74.2} & 71.1 \\
\rowcolor{bggray}
\method{} w/ hard $\beta_k$ & \textbf{66.5} & \textbf{71.8} & 73.4 & 73.9 & \textbf{71.4} \\
\bottomrule
\end{tabular}
\end{subtable}

\end{table}

We investigate the implementation strategy for the shrinkage strength $\beta_k$ by comparing the standard soft assignment against the hard assignment (i.e., discretization via argmax) on the probability simplex. 
As shown in Tab. \ref{tab:beta_k}, employing hard assignments consistently yields superior robustness and stability. 
This advantage stems from the inherent property of the softmax operation, which produces non-zero residual probabilities for all classes. 
In a soft assignment regime, these residuals can accumulate to form misleading counts for absent classes, thereby weakening the necessary shrinkage. 
By adopting hard assignments, we effectively eliminate this background noise, ensuring that $\beta_k$ accurately reflects the true class presence and enforces strict anchoring for outlier categories.

\section{Limitations and Future Work}
\label{app:limitations}
While our \method{} demonstrates robust performance and efficiency, there remain promising avenues for future exploration. 
First, vMF distributions inherently assume isotropy on the hypersphere. 
Explicitly modeling the anisotropy of VLM representations, for instance, by exploring Fisher-Bingham distributions or other non-isotropic spherical models, could potentially capture more complex feature geometries.
Second, \method{} can be more deeply integrated with memory banks or caches, enabling more efficient and effective adaptation in sample-wise, online-TTA mode.  
Finally, the construction of the affinity graph still entails a quadratic complexity with respect to the batch size. Incorporating approximate nearest neighbor search strategies could be beneficial, especially for scaling to large-scale offline adaptation tasks.                                                                                                                                

\end{document}